\newcommandx{\is}[2][1=]{\todo[linecolor=cyan,backgroundcolor=cyan!25,bordercolor=cyan,size=small,author=Iman,#1]{#2}}
\newcommandx{\ff}[2][1=]{\todo[linecolor=magenta,backgroundcolor=magenta!25,bordercolor=magenta,size=small,author=Farhad,#1]{#2}}
\newcommandx{\al}[2][1=]{\todo[linecolor=blue,backgroundcolor=blue!25,bordercolor=blue,size=small,author=Alex,#1]{#2}}
\newtheorem{theorem}{Theorem}[section]
\newtheorem{lemma}[theorem]{Lemma}
\newtheorem{assumption}{Assumption}[section]
\newtheorem{remark}{Remark}[section]
\newtheorem{proposition}[theorem]{Proposition}
\newtheorem{corollary}[theorem]{Corollary}
\DeclareMathOperator{\trace}{trace}
\DeclareMathOperator*{\argmin}{arg\,min}
\begin{document}
\title{Safe Learning of Uncertain Environments}% for Nonlinear Control-Affine Systems} 

\author{Farhad Farokhi$^*$, Alex S. Leong$^*$, Iman Shames, and Mohammad Zamani\thanks{$^*$F. Farokhi and A. S. Leong have made equal contributions to this paper.}
  \thanks{F. Farokhi is with the University of Melbourne, e-mail: \texttt{ffarokhi@unimelb.edu.au}. A.~S.~Leong and M.~Zamani are with the Defence Science and Technology Group, Australia, e-mail: \texttt{\{alex.leong,  mohammad.zamani\}@dst.defence.gov.au}. I. Shames is with the Australian National University, e-mail: \texttt{iman.shames@anu.edu.au}.}
  \thanks{The work of F. Farokhi is, in part, supported by a research contract (ID10298 `Safe learning in distributed multi-agent control') from the Defence Science and Technology (DST) Group, Australia. The work of I.~Shames is partially supported by the Australian Government, via grant AUSMURIB000001 associated with ONR MURI grant N00014-19-1-2571. }
}

\maketitle

\begin{abstract}
In many learning based control methodologies, learning the unknown dynamic model precedes the control phase, while the aim is to control the system such that it remains in some safe region of the state space. In this work, our aim is to guarantee safety while learning and control proceed simultaneously. Specifically, we consider the problem of safe learning in nonlinear control-affine systems subject to unknown additive uncertainty. We first model the uncertainty as a Gaussian noise and use state measurements to learn its mean and covariance. We provide rigorous time-varying bounds on the mean and covariance of the uncertainty and employ them to modify the control input via an optimization program with potentially time-varying safety constraints. We show that with an arbitrarily large probability we can guarantee that the state will remain in the safe set, while learning and control are carried out simultaneously, provided that a feasible solution exists for the optimization problem. We provide a secondary formulation of this optimization that is computationally more efficient. This is based on tightening the safety constraints to counter the uncertainty about the learned mean and covariance. The magnitude of the tightening can be decreased as our confidence in the learned mean and covariance increases (i.e., as we gather more measurements about the environment). Extensions of the method are provided for non-Gaussian process noise with unknown mean and covariance as well as Gaussian uncertainties with state-dependent mean and covariance to accommodate more general environments. 
\end{abstract}

\section{Introduction}

%\subsection{Motivation}
Safety of a dynamically controlled system can be defined as guaranteeing that the closed loop system trajectory remains inside a subset of its state space, denoted by the safety set. The Control Barrier Function (CBF) approach and many control strategies achieves this by requiring a reliable dynamic model of the system. However, dynamical models are typically subject to uncertainties. There are many approaches to deal with these uncertainties. Namely, one can model uncertainty as deterministic or stochastic signals with known (upper bound) size or statistics. In these cases, the control design aims to guarantee safety for the worst-case realisation of the uncertainty. In more recent literature, the attempt is to learn the uncertain parts of the dynamics from historic data of the system in order to obtain more accurate and less conservative models of the uncertain parts of the system model. In many such approaches a commonality is that learning the uncertain parameters is carried out separately (prior to control), to ensure an accurate model of the uncertainties is available before attempting to control the system safely. In this paper, we weaken this assumption by learning the uncertainties while maintaining safety through appropriate manipulations of the control signal.

\paragraph*{Related Work}
%Early works on using control barrier functions for safety of control systems include \cite{AmesXuGrizzleTabuada}. A notion of input-to-state safe control barrier functions was introduced in \cite{KolathayaAmes}. These works however require accurate models of the dynamical system.
%For nonlinear control-affine systems, 
Safe control using control barrier functions, for systems where model uncertainties are unknown and learnt include \cite{TaylorSingletaryYueAmes,ChengOroszMurrayBurdick,ChengKhojastehAmesBurdick,ChoiCastanedaTomlinSreenath,JagtapPappasZamani}. The works \cite{TaylorSingletaryYueAmes,ChoiCastanedaTomlinSreenath} attempt to learn the form of the uncertainty using data-driven approaches, but no rigorous guarantees on safety are provided. The model uncertainties are modelled using Gaussian processes in \cite{ChengOroszMurrayBurdick,ChengKhojastehAmesBurdick,JagtapPappasZamani}, with the aim of learning the parameters of these Gaussian processes to then derive safety bounds. 
All of these works assume either that the model uncertainty \cite{ChoiCastanedaTomlinSreenath,JagtapPappasZamani} or various model parameters of the Gaussian process \cite{ChengKhojastehAmesBurdick} are learnt before control, or learnt in batch mode \cite{ChengOroszMurrayBurdick,TaylorSingletaryYueAmes} while alternating with controller synthesis. The current paper differs from these works in that both learning and control are done simultaneously after every new state measurement, while providing rigorous theoretical bounds that the system will be safe with high probability at all times.

Other approaches have also been proposed in the area of safe reinforcement learning \cite{GarciaFernandez}, where control signals/actions are learnt using reinforcement learning techniques, but modified to ensure that the system is safe \cite{Junges,YuYangKolarWang,ZhangBastaniKumar}. These modifications include constraining exploration to strategies which are safe \cite{Junges}, solving a constrained Markov decision process where the constraints provide safety \cite{YuYangKolarWang}, and the use of shielding \cite{ZhangBastaniKumar} with a backup controller  which is guaranteed to be safe.  The algorithms employed usually learn the actions in a ``model-free'' manner, even if some knowledge of the system may be available (as is often the case in control), and as such the performance may be conservative. Furthermore, without a model, safety often can not be guaranteed during learning but only after a sufficiently long learning period  \cite{GarciaFernandez,ChengOroszMurrayBurdick,YuYangKolarWang}. The approach of \cite{Junges} also requires a finite state space.

The closest study to this paper is the work of~\cite{DevonportYinArcak}, where the authors use the results of~\cite{srinivas2009gaussian} to evaluate their confidence in learnt Gaussian processes modelling the system and the environment. However, \cite{DevonportYinArcak} do not provide computationally friendly methods for ensuring safety as their framework relies on Lyapunov functions that can be computationally difficult to find~\cite{parrilo2000structured}.

\paragraph*{Contributions}
In this paper, we propose a learning based control methodology that guarantees safety while both learning and control are carried out simultaneously. In our approach, we consider a nonlinear control-affine model subject to a process noise with unknown parameters. Safety of the agent is characterised by time-varying state constraints which at each time step bars the agent from parts of its state space. 

We start by considering a zero-mean Gaussian noise uncertainty with an unknown covariance matrix, which is learnt online from state measurements. 
In this case, we use the empirical covariance to construct a robust optimization problem for minimally modifying control actions generated by controllers (e.g., PID, model predictive, or reinforcement learning control) to ensure safety. Theorem~\ref{thm:safety_prob} shows that with  probability greater than $1-\delta$, the modified control action results in a safe state in the next time step if a certain optimization problem with an infinite number of constraints is feasible. Using robust optimisation techniques as outlined in Lemma~\ref{lemma:robust}, we reformulate the desired optimization as a problem subject to finitely many constraints that are ``tightened'' versions of the original safety constraints. The tightening is bigger for more stringent safety guarantee requirements (smaller $\delta$), but shrinks with time as more data becomes available for covariance estimation.
% Although Theorem~\ref{thm:safety_prob} allows us to balance the required degree of safety guarantee, the underlying robust optimization problem is potentially difficult to solve numerically as it has infinitely many constraints (for each likely noise realization and parameter characterised). We address this problem in {Lemma~\ref{lemma:robust}}, where it is shown that the infinite number of constraints is equivalent to a single safety constraint with tightening. The tightening term is bigger for more stringent safety guarantee requirements (smaller $\delta$), but shrinks with time as more data becomes available for covariance estimation. 
The proof of Theorem~\ref{thm:safety_prob} uses  Markov's concentration inequality, which can be conservative. Therefore, we consider an alternative safety bound using Chebyshev’s inequality in Theorem~\ref{thm:safety_prob_chebyshev}. However, the robust optimization problem stemming from the Chebyshev’s inequality is not convex. We show it can be cast as a convex problem in Theorem~\ref{thm:exact_relax}. This alternative formulation is proved to be less conservative for small $\delta$ and/or sufficiently large time.  The presented approaches in Theorem~\ref{thm:safety_prob} and Theorem~\ref{thm:safety_prob_chebyshev} rely on a minimum number of samples to ensure invertibility of the empirical covariance matrix, which is not possible at early time steps (before we gather enough measurements). We relax this assumption, at the expense of some conservatism and an extra assumption of the existence of an upper bound on the covariance of the process noise, in Theorem~\ref{thm:safety_prob_alternative}. %The main drawback of this result is that it relies on knowing an upper bound on the covariance of the process noise, which might not be always available. 
We end this section by investigating the effect of noisy state measurements on safe learning for linear time-invariant dynamical systems. The safety guarantee in this case is given in Theorem~\ref{thm:safety_prob_noisy_measurement}.

We proceed by extending the problem formulation to admit non-zero-mean Gaussian process noise. In this case, both the mean and covariance of the noise are learnt from the state measurements. %Safe control of a system subject to a non-zero-mean Gaussian uncertainty can then be formulated as a robust optimization problem. 
In this case, Theorem~\ref{thm:safety_prob_nonzero_mean} (counterpart to Theorem~\ref{thm:safety_prob}), Theorem~\ref{tho:optim_prob_chebyshev_nonzero_mean} (counterpart to Theorem~\ref{thm:safety_prob_chebyshev}), and Theorem~\ref{thm:safety_prob_nonzero_mean_alternative} (counterpart to Theorem~\ref{thm:safety_prob_alternative}) prove that the control signal obtained from the corresponding robust optimization problems (if feasible) can guarantee safety of the next visited state with probability greater than $1-\delta$. In all these cases, we can use {Lemma~\ref{lemma:robust}} to recast the robust optimization problem with infinitely-many constraints as one with a single tightened safety constraint.

Noting that most of the presented results are based on general concentration inequalities, we extend the results to non-Gaussian additive process noises in Section \ref{sec:nongaussian}. Theorem~\ref{thm:non_Gaussian} considers safety in the presence of additive zero-mean potentially non-Gaussian process noise, %This result is only slightly more conservative in comparison to Theorem~\ref{thm:safety_prob_alternative} if applied to the case of Gaussian process noise (investigated as a sanity check). 
while Theorem~\ref{thm:non_Gaussian:non-zero-mean} considers the case with possibly non-zero-mean general process noise. %This results can be considerably more conservative in comparison to Theorem~\ref{thm:safety_prob_nonzero_mean_alternative} when applied to the Gaussian case. This is because we cannot prove that the empirical variance concentrates around the true variance as more measurements are gathered. As a future work, higher order moments must be incorporated to reduce the conservatism.

Finally, we address the challenging, yet important, scenario where the process noise does not have fixed parameters but is state dependent. This scenario is relevant for robotic applications where the environment is not homogeneous, or control scenarios where the un-modelled dynamics of the agent is different based on the region of the state space being explored. %We model these scenarios by Gaussian process noises that have mean and covariance that continuously vary in the state space. 
In this case, we partition the state space and approximate the mean and covariance by piece-wise constant functions over the partitions. We then proceed to estimate these parameters for each partition based on state data obtained in that area. Assuming some regularity, i.e., the mean and covariance do not change abruptly across neighbouring partitions, we derive alternative bounds that can be potentially tighter than just using the data from each region. Finally, we present a recipe for merging the regions to reduce the complexity of the presented model. We provide guarantees on the quality of the estimated mean and covariance for the merged regions in Proposition~\ref{prop:merge}.         

\paragraph*{Notations}
For matrices $X$ and $Y$, we say that $X\succeq 0$ if $X$ is positive semi-definite, and that $X \succeq Y$ if $X-Y$ is positive semi-definite. Given a matrix $X$, a vector $x$, and a superscript $i$, we will denote $X^i$ as the $i$-th row of $X$ and $x^i$ as the $i$-th component of $x$. The Kronecker product is denoted by $\otimes$. The Euclidean and Frobenius norms of vectors and matrices are denoted by $\|\cdot\|_2$ and $\|\cdot\|_F$, respectively.

\section{Safe Learning in the Presence of  Uncertainties}
\label{sec:safe_learning_zero_mean}
We consider a discrete time nonlinear control-affine system
\begin{align}
\label{eqn:control_affine_model}
    x_{t+1} = f(x_t) + g(x_t) u_t + w_t,
\end{align}
where $x_t\in\mathbb{R}^n$ is the state, $u_t\in\mathbb{R}^m$ is the control input, and $w_t\in\mathbb{R}^n$ is a sequence of independent and identically distributed  random vectors with mean $\mu_w$ and covariance $\Sigma_w\succeq 0$. %See Sections \ref{sec:non_zero_mean} and \ref{sec:piecewise_constant} for extensions of our results to non-zero mean and continuously varying Gaussian uncertainties. 
Assume that the model dynamics $f(\cdot)$ and $g(\cdot)$ are known. %, while the covariance matrix $\Sigma_w$ is unknown and we are aiming to learn it. 
%This can motivated in two distinct ways, either (\textit{i}) we have a complete accurate model of our agent (i.e., internal dynamics and sensing capabilities) and are entering an unknown environment, or (\textit{ii}) we can lump all model uncertainties (e.g., un-modelled dynamics, non-linearity, or parameter errors) into the process noise and attempt to learn the characteristics of the process noise. 
This setup, i.e., knowing the system dynamics while not knowing the process noise, can be motivated by that we are using an accurately-modelled agent in an unknown environment (e.g., search and rescue mission by expensive robots). Alternatively, when dealing with a not-fully-known agent, we can lump all model uncertainties (e.g., un-modelled dynamics, model mismatch, and parameter errors) into the process noise and attempt to learn the characteristics of the process noise while maneuvering our agent.
% The first approach is in spirit close to robotics while the second is similar to robust control methodology. 

We have safety constraints of the form 
\begin{align} \label{eqn:state_constraint}
    H_tx_t\leq h_t,
\end{align}
where the inequality is element-wise. This means that, at each iteration, parts of the state space are off limits to us. This could be to avoid collision with other agents, or getting captured by the enemy. Not knowing $\mu_w$ and $\Sigma_w$ hinders our ability to satisfy this constraint, as we do not know how the process noise influences the state in the next time step after implementing the control action. Therefore, we use the state measurements of the system to learn these parameters and guarantee safety. In the next section, we consider the case where the uncertainty is Gaussian. Initially the case where $\mu_w=0$ and the covariance matrix $\Sigma_w$ is unknown is studied. Then, the analysis is extended to the case where the mean is unknown and potentially non-zero. In Section~\ref{sec:nongaussian}, the Gaussianity assumption is relaxed and the safe learning problem is tackled for zero mean and non-zero mean cases. In Section~\ref{sec:piecewise_constant} we address the problem of spatially varying Gaussian uncertainties.   

\section{Gaussian Uncertainties} 
In this section we study the problem where the uncertainties are Gaussian. First, we address the problem where the mean of the disturbances is zero. Later, we consider the case where the mean is non-zero and needs to be learned in addition to the covariance matrix.
\subsection{Zero Mean Gaussian Uncertainties}
\label{sec:zero_mean}
The empirical estimate of the covariance after $t\geq 1$ time steps is given by
\begin{align*}
    \widehat{\Sigma}_w:=\frac{1}{t} \sum_{k=0}^{t-1} w_kw_k^\top  &=\frac{1}{t} \sum_{k=0}^{t-1} (x_{k+1}- \widehat{x}_{k+1})(x_{k+1} - \widehat{x}_{k+1})^\top,
\end{align*}
where $\widehat{x}_{k+1} = f(x_k) + g(x_k)u_k$.
Note that $\widehat{\Sigma}_w$ has Wishart distribution $\mathcal{W}_n(t^{-1}\Sigma_w,t)$,
and that $\widehat{\Sigma}_w$ is an unbiased estimate of $\Sigma_w$, i.e., 
$
    \mathbb{E}\{\widehat{\Sigma}_w\}
    =({1}/{t}) \sum_{k=0}^{t-1} \mathbb{E}\{w_kw_k^\top\}=\Sigma_w.$

Our approach to controller design in this paper is to take an existing nominal\footnote{The nominal control can, e.g., be generated by standard control design techniques (such as PID or model predictive control) or by a reinforcement learning algorithm such as \cite{Lillicrap_DDPG} (neither of which takes into account safety).} control input  $\bar{u}_t$, and modify this control  slightly at each iteration to ensure safety. We do this by considering  the following optimization problem with safety constraints which is to be solved at each  $t > n+1 $:
\begin{subequations} \label{optim_prob_zero_mean}
\begin{align}
    u_t=&\argmin_{u}  \quad \mathbf{d}(u,\bar{u}_t),\\
    &\quad\, \mathrm{s.t.}  \quad H_{t+1}(f(x_t) + g(x_t)u +w)\leq h_{t+1}, \quad \\
    & \quad \quad \quad \quad \forall w: w^\top \widehat{\Sigma}_w^{-1} w \leq \frac{t n}{(t-n-1) \delta}, \label{eqn:w_condition_zero_mean}
\end{align}
\end{subequations}
where $\mathbf{d}(\cdot,\cdot)$ denotes a distance between its two arguments. For example, one can choose $\mathbf{d}(u,\bar{u})$ to be $\|u-\bar{u}\|_2$.

We have the following result on high probability safety guarantees of the learned-controlled system.
\begin{theorem}
\label{thm:safety_prob}
Assume that problem~\eqref{optim_prob_zero_mean} is feasible. Then, if we implement the control action $u_t$, $x_{t+1}$ is safe with probability of at least $1-\delta$, i.e., $\mathbb{P}\{H_{t+1} x_{t+1}\leq h_{t+1}\}\geq 1-\delta$.
\end{theorem}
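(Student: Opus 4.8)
The plan is to reduce the probabilistic safety claim to a single tail estimate and then bound that tail with Markov's inequality together with the known inverse moment of a Wishart matrix. Write $a:=tn/((t-n-1)\delta)$ and let $\mathcal{E}:=\{w\in\mathbb{R}^n:\ w^\top\widehat{\Sigma}_w^{-1}w\le a\}$ denote the uncertainty ellipsoid appearing in~\eqref{eqn:w_condition_zero_mean}. First I would observe that $x_{t+1}=f(x_t)+g(x_t)u_t+w_t$ and that, by the assumed feasibility, $u_t$ satisfies the constraint of~\eqref{optim_prob_zero_mean} for \emph{every} $w\in\mathcal{E}$; evaluating that constraint at the realised noise value $w=w_t$ then shows that $H_{t+1}x_{t+1}\le h_{t+1}$ holds on the event $\{w_t\in\mathcal{E}\}$. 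Hence $\mathbb{P}\{H_{t+1}x_{t+1}\le h_{t+1}\}\ge\mathbb{P}\{w_t\in\mathcal{E}\}$, and it remains to prove $\mathbb{P}\{w_t^\top\widehat{\Sigma}_w^{-1}w_t> a\}\le\delta$.

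The structural fact I would exploit is that $\widehat{\Sigma}_w$ (as well as $x_t$ and $u_t$) is a function of $w_0,\dots,w_{t-1}$ only, hence independent of $w_t$. Conditioning on $\widehat{\Sigma}_w$, applying Markov's inequality to the nonnegative random variable $w_t^\top\widehat{\Sigma}_w^{-1}w_t$, and then averaging over $\widehat{\Sigma}_w$ gives
\[
\mathbb{P}\{w_t^\top\widehat{\Sigma}_w^{-1}w_t> a\}\;\le\;\tfrac1a\,\mathbb{E}\!\left[w_t^\top\widehat{\Sigma}_w^{-1}w_t\right].
\]
Next I would rewrite $w_t^\top\widehat{\Sigma}_w^{-1}w_t=\trace(\widehat{\Sigma}_w^{-1}w_tw_t^\top)$ and use linearity of trace and expectation, independence of $w_t$ and $\widehat{\Sigma}_w$, and $\mathbb{E}[w_tw_t^\top]=\Sigma_w$ to obtain that the right-hand side equals $\tfrac1a\,\trace\!\big(\mathbb{E}[\widehat{\Sigma}_w^{-1}]\,\Sigma_w\big)$.

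The final ingredient is the mean of an inverse-Wishart matrix: since $\widehat{\Sigma}_w\sim\mathcal{W}_n(t^{-1}\Sigma_w,t)$ with $t>n+1$, this expectation is finite and equals $\mathbb{E}[\widehat{\Sigma}_w^{-1}]=\tfrac{1}{t-n-1}(t^{-1}\Sigma_w)^{-1}=\tfrac{t}{t-n-1}\Sigma_w^{-1}$. Substituting, $\trace(\mathbb{E}[\widehat{\Sigma}_w^{-1}]\Sigma_w)=\tfrac{t}{t-n-1}\trace(I_n)=\tfrac{tn}{t-n-1}$, so the Markov bound collapses to $\tfrac1a\cdot\tfrac{tn}{t-n-1}=\delta$, which together with the reduction in the first paragraph completes the proof.

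I expect the only genuinely nonroutine step to be the invocation of the inverse-Wishart expectation: it is exactly the hypothesis $t>n+1$ that makes $\mathbb{E}[\widehat{\Sigma}_w^{-1}]$ finite (for $t\le n+1$ the defining integral diverges), which is precisely why~\eqref{optim_prob_zero_mean} is posed only for $t>n+1$. I would also be careful to note that $\widehat{\Sigma}_w$ is almost surely invertible here---so that $\mathcal{E}$ is a genuine bounded ellipsoid and the constraint in~\eqref{optim_prob_zero_mean} is well posed---which uses that $\Sigma_w$ is positive definite and $t\ge n$; everything else (the trace identity, pushing the expectation inside by independence, and Markov's inequality) is routine bookkeeping.
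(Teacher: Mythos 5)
Your proposal is correct and follows essentially the same route as the paper's proof: reduce safety to the tail event $\{w_t^\top\widehat{\Sigma}_w^{-1}w_t\le tn/((t-n-1)\delta)\}$, apply Markov's inequality, use independence of $w_t$ and $\widehat{\Sigma}_w$ together with the trace identity, and invoke $\mathbb{E}\{\widehat{\Sigma}_w^{-1}\}=\tfrac{t}{t-n-1}\Sigma_w^{-1}$ for the inverse Wishart to get exactly $\delta$. Your added remarks on why $t>n+1$ is needed and on almost-sure invertibility of $\widehat{\Sigma}_w$ are accurate refinements of the same argument, not a different method.
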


\begin{proof} 
We have 
\begin{align}
     \mathbb{P}\{H_{t+1} x_{t+1}\leq h_{t+1}\} & \geq \mathbb{P}\left\{
     w^\top \widehat{\Sigma}_w^{-1} w \leq \frac{t n}{(t-n-1)\delta} 
     \right\}\notag \\
     & \geq 1-\frac{\mathbb{E}\{w^\top \widehat{\Sigma}_w^{-1} w\}}{tn/((t-n-1)\delta)}, \label{eqn:expectation1_0}
\end{align} 
where the last inequality follows from an application of Markov's inequality for scalar random variables~\cite[\S\,2.1]{boucheron2013concentration}. We can compute
\begin{subequations}
\begin{align}
        \mathbb{E}\{w^\top \widehat{\Sigma}_w^{-1} w\} & = \trace \mathbb{E}\{w w^\top \widehat{\Sigma}_w^{-1} \} \notag\\
& = \trace \left( \mathbb{E}\{w w^\top\}\label{eqn:expectation1} \mathbb{E}\{\widehat{\Sigma}_w^{-1} \} \right) \\
& = \trace \left(\Sigma_w \times \frac{t}{t-n-1} \Sigma_w^{-1} \right) \label{eqn:expectation1_2}\\
& =  \frac{t}{t-n-1} \trace(I)  = \frac{t n}{t-n-1}.\label{eqn:expectation1_3}
\end{align}
\end{subequations}
The equality in \eqref{eqn:expectation1} follows from independence of $w$ and $ \widehat{\Sigma}_w^{-1}$, since in problem \eqref{optim_prob_zero_mean} $w$ refers to $w_t$, while $ \widehat{\Sigma}_w^{-1}$ is a function of $(w_0,\dots,w_{t-1})$. The equality in \eqref{eqn:expectation1_2} is the consequence of results on the expectation of an inverted Wishart distribution, see e.g.  \cite[Lemma 7.7.2]{anderson2003introduction}.
The result $ \mathbb{P}\{H_{t+1} x_{t+1}\leq h_{t+1}\} \geq 1 - \delta$ follows from substituting~\eqref{eqn:expectation1_3} into~\eqref{eqn:expectation1_0}.
\end{proof}

Although in principle problem \eqref{optim_prob_zero_mean} can be solved for any $t$, the safety guarantees provided by Theorem \ref{thm:safety_prob} are valid only for $t > n+1$, as we need to wait until $\widehat{\Sigma}_w$ becomes invertible. 

Note  that problem~\eqref{optim_prob_zero_mean} as written involves infinitely many constraints and is thus not easy to solve numerically. We next provide a computationally-friendly reformulation of~\eqref{optim_prob_zero_mean}.  To do so, we first present the following lemma. 

\begin{lemma} \label{lemma:robust} 
For $W\succeq 0$ and $d\geq 0$, 
$
    \{u\,|\,a^\top u+b^\top w \leq c,\forall w: w^\top W w\leq d \} =\{u\,|\,a^\top u\leq c-\sqrt{d} \|W^{-1/2}b\|_2\}
$.
\end{lemma}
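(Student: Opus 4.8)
The plan is to prove the set equality in Lemma~\ref{lemma:robust} by showing the two inclusions, but the cleanest route is to observe that the condition ``$a^\top u + b^\top w \le c$ for all $w$ with $w^\top W w \le d$'' is equivalent to ``$a^\top u + \max\{ b^\top w : w^\top W w \le d\} \le c$,'' and then to evaluate that inner maximization in closed form. So the first step is to reduce the universally-quantified constraint to a single scalar constraint by taking the supremum over the admissible set $\mathcal{E} := \{w : w^\top W w \le d\}$ of the linear functional $w \mapsto b^\top w$.

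The second step is to compute $\sup_{w \in \mathcal{E}} b^\top w$. Assuming $W \succ 0$ so that $W^{-1/2}$ is well defined, I would substitute $v = W^{1/2} w$, so that $\mathcal{E}$ becomes the ball $\{v : \|v\|_2^2 \le d\}$ and the objective becomes $b^\top W^{-1/2} v$. By Cauchy--Schwarz, $\sup_{\|v\|_2 \le \sqrt{d}} (W^{-1/2}b)^\top v = \sqrt{d}\,\|W^{-1/2}b\|_2$, with equality attained at $v = \sqrt{d}\, W^{-1/2}b / \|W^{-1/2}b\|_2$ (and the supremum is trivially $0$ when $b$ lies in the kernel of $W^{-1/2}$, consistent with the formula). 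Substituting back, the constraint ``$a^\top u + b^\top w \le c$ for all $w \in \mathcal{E}$'' becomes exactly ``$a^\top u \le c - \sqrt{d}\,\|W^{-1/2}b\|_2$,'' which is the claimed right-hand side; chaining the equivalences gives the set equality.

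The main obstacle, and the only place requiring care, is the degenerate cases: when $d = 0$ the set $\mathcal{E}$ collapses to $\{0\}$ (if $W \succ 0$) or to $\ker W$ (if $W$ is merely positive semidefinite), and when $W$ is singular the quantity $W^{-1/2}$ needs interpretation via the pseudo-inverse, with $\mathcal{E}$ unbounded in directions lying in $\ker W$. If $b$ has a component in $\ker W$ then $\sup_{w \in \mathcal{E}} b^\top w = +\infty$ and the left-hand set is empty unless that component vanishes; the statement as written implicitly presumes $W \succ 0$ (or at least $b \in \mathrm{range}(W)$), so I would either add that standing assumption or handle the kernel direction explicitly. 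I expect the intended reading is $W \succ 0$, in which case $d = 0$ forces $\mathcal{E} = \{0\}$, both sides reduce to $\{u : a^\top u \le c\}$, and the general-$d$ argument above goes through verbatim. I would state this assumption up front and then present the Cauchy--Schwarz computation as the one-line core of the proof.
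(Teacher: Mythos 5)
Your proof is correct and follows essentially the same route as the paper: the paper also makes the change of variables $\bar{w}=W^{1/2}w$, $\bar{b}=W^{-1/2}b$ and then invokes the standard robust linear-constraint reformulation (citing \cite[Example~1.3.3]{ben2009robust}), which is exactly the Cauchy--Schwarz computation you carry out explicitly. Your observation that the statement implicitly requires $W\succ 0$ (or $b\in\mathrm{range}(W)$) for $W^{-1/2}$ to make sense is a fair point of care that the paper leaves implicit, since in its applications $W=\widehat{\Sigma}_w^{-1}$ is invertible.
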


\begin{proof}
With the change of variables $\bar{w}=W^{1/2}w$ and $\bar{b}=W^{-1/2} b$, we have 
$
    \{u\,|\,a^\top u+b^\top w \leq c,\forall w: w^\top W w\leq d \}
    =
    \{u\,|\,a^\top u+\bar{b}^\top\bar{w} \leq c,\forall \bar{w}: \bar{w}^\top \bar{w}\leq d \}.
$
Then, following the approach of~\cite[Example~1.3.3]{ben2009robust}, we can obtain $\{u\,|\,a^\top u+\bar{b}^\top\bar{w} \leq c,\forall \bar{w}: \bar{w}^\top \bar{w}\leq d \}=\{u\,|\,\sqrt{d}\|\bar{b}\|_2 \leq c-a^\top u \}$. 
\end{proof}

Using Lemma \ref{lemma:robust} on each component of $H_{t+1}(f(x_t) + g(x_t)u +w)\leq h_{t+1}$ (with the identifications $u \mapsto u$, $w \mapsto w$, $W \mapsto \widehat{\Sigma}_w^{-1}$, $d \mapsto tn/((t-n-1)\delta)$, $a^\top \mapsto (H_{t+1} g(x_t))^i$, $b^\top \mapsto H_{t+1}^i$, $c \mapsto (h_{t+1} - H_{t+1} f(x_t))^i$), problem \eqref{optim_prob_zero_mean} can be reformulated as the following:
\begin{subequations} \label{optim_prob_reformulation_zero_mean}
\begin{align}
    u_t=&\argmin_{u}  \quad \mathbf{d}(u,\bar{u}_t),\\
    &\quad \,\mathrm{s.t.}\quad   H_{t+1}(f(x_t) + g(x_t)u)\leq h_{t+1}-e_{t+1}, 
\end{align}
\end{subequations}
where $e_{t+1}$ is a vector whose $i$-th component is equal to
\begin{equation}
\label{eqn:e_vector_zero_mean}
    e_{t+1}^i =\sqrt{\frac{tn}{(t-n-1)\delta}}\left\| \widehat{\Sigma}_w^{1/2} (H_{t+1}^i)^\top \right\|_2.
\end{equation}
This reformulation \eqref{optim_prob_reformulation_zero_mean} has an interpretation which amounts to tightening of the constraints for a system without uncertainties. The tightening is larger at the beginning and becomes smaller as we gather  more information, i.e., our confidence in the learned mean and covariance increases.

\begin{remark}[Multi-Agent Safe Learning of the Environment] The problem formulation of this paper investigates the motion of a single robot in an environment. However, in practice, there are often more than one agent. In that case, we need to decouple the robots, while ensuring that they do not collide and can accomplish tasks collaboratively. For this, we can use the methodology of~\cite{wood2020collision} to construct time-varying constraints of the form~\eqref{eqn:state_constraint}, that can ensure collision avoidance and guarantee collaboration for task assignment. Then, we can use the methodology of this paper for safely maneuvering each robot in the environment. 
\end{remark}

\begin{remark}[Safety Guarantee versus Feasibility] For a given $\delta$, the optimization problem in~\eqref{optim_prob_zero_mean}, which is equivalent to~\eqref{optim_prob_reformulation_zero_mean}, may be infeasible. In this case, we cannot ensure the safety of the next visited state with probability of at least $1-\delta$. We can increase $\delta$ to get a feasible problem (if one exists). A bisection algorithm can be used to find the minimum probability of unsafe manoeuvre $\delta$ while being feasible.
\end{remark}

\subsubsection{Safety bounds using Chebyshev's inequality}
In Theorem~\ref{thm:safety_prob}, we have made use of Markov's inequality in deriving our high probability safety bounds. In this section, we consider alternative safety bounds making use of Chebyshev's inequality \cite[\S\,2.1]{boucheron2013concentration}, which is known to provide sharper bounds than Markov's inequality in many situations. We begin by providing a preliminary result.
\begin{lemma}
\label{lemma:variance_term}
Let $v\sim \mathcal{N}(0, s_1 \Sigma)$ and $\widehat{\Sigma} \sim \mathcal{W}_n(s_2 \Sigma, \tau)$ be independent. Then for $\tau > n+3$,
$$\mathbb{E}\{(v^\top \widehat{\Sigma}^{-1} v)^2\} = \frac{s_1^2}{s_2^2} \frac{n(n+2)}{(\tau-n-1)(\tau-n-3)}.$$
\end{lemma}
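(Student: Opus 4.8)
The plan is to reduce the computation to a known fourth‑moment formula for the inverted Wishart distribution by first conditioning on $\widehat{\Sigma}$ and handling the Gaussian vector $v$ in closed form. Since $v$ and $\widehat{\Sigma}$ are independent, I would write
$\mathbb{E}\{(v^\top \widehat{\Sigma}^{-1} v)^2\} = \mathbb{E}\big\{\mathbb{E}\{(v^\top \widehat{\Sigma}^{-1} v)^2 \mid \widehat{\Sigma}\}\big\}$.
Conditioned on $\widehat{\Sigma}$, the quantity $v^\top \widehat{\Sigma}^{-1} v$ is a quadratic form in the Gaussian vector $v \sim \mathcal{N}(0, s_1\Sigma)$. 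Using the standard identity for the second moment of a Gaussian quadratic form, $\mathbb{E}\{(v^\top M v)^2\} = (\trace(s_1\Sigma M))^2 + 2\,\trace((s_1\Sigma M)^2)$ with $M = \widehat{\Sigma}^{-1}$, this gives
$\mathbb{E}\{(v^\top \widehat{\Sigma}^{-1} v)^2\} = s_1^2\,\mathbb{E}\big\{(\trace(\Sigma \widehat{\Sigma}^{-1}))^2\big\} + 2 s_1^2\,\mathbb{E}\big\{\trace((\Sigma \widehat{\Sigma}^{-1})^2)\big\}$.

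Next I would normalize the Wishart so I can cite a clean reference. Since $\widehat{\Sigma} \sim \mathcal{W}_n(s_2\Sigma, \tau)$, write $\widehat{\Sigma} = s_2 \Sigma^{1/2} A \Sigma^{1/2}$ where $A \sim \mathcal{W}_n(I, \tau)$, so that $\Sigma \widehat{\Sigma}^{-1} = s_2^{-1} \Sigma^{1/2} A^{-1} \Sigma^{-1/2}$, which is similar to $s_2^{-1} A^{-1}$. Hence $\trace(\Sigma \widehat{\Sigma}^{-1}) = s_2^{-1}\trace(A^{-1})$ and $\trace((\Sigma \widehat{\Sigma}^{-1})^2) = s_2^{-2}\trace(A^{-2})$, and the $\Sigma$-dependence drops out entirely. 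It then remains to plug in the known moments of the inverse of a standard Wishart $A \sim \mathcal{W}_n(I,\tau)$: for $\tau > n+3$,
$\mathbb{E}\{\trace(A^{-2})\} = \frac{\tau - 1}{(\tau-n)(\tau-n-1)(\tau-n-3)}\cdot n$ and $\mathbb{E}\{(\trace(A^{-1}))^2\}$ has the companion expression (both are classical; see e.g. the inverse‑Wishart moment formulas in von Rosen or the same Anderson reference already cited for \eqref{eqn:expectation1_2}). Combining the two terms with the factor $2$ and simplifying is the last step, and everything should collapse to $\frac{s_1^2}{s_2^2}\cdot\frac{n(n+2)}{(\tau-n-1)(\tau-n-3)}$.

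I expect the main obstacle to be bookkeeping rather than conceptual: assembling the correct expressions for $\mathbb{E}\{(\trace A^{-1})^2\}$ and $\mathbb{E}\{\trace A^{-2}\}$ from the literature (conventions on the Wishart "degrees of freedom" parameter vary, and an off‑by‑one in $\tau$ versus $\tau-1$ would corrupt the final constant), and then verifying algebraically that the weighted sum telescopes to the stated product. A useful sanity check along the way is the scalar case $n=1$: there $\Sigma$ is a positive scalar, $v^2/\widehat{\Sigma}$ is (up to constants) the reciprocal of a chi‑squared variable times a chi‑squared variable, and the formula should reduce to $\frac{s_1^2}{s_2^2}\cdot\frac{3}{(\tau-2)(\tau-4)}$, which one can confirm directly from the moments of inverse‑chi‑squared. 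A second sanity check is dimensional consistency in $s_1, s_2$, which is immediate from the similarity reduction above. If the literature formula for $\mathbb{E}\{(\trace A^{-1})^2\}$ is not readily available in the form needed, an alternative is to compute it from scratch via $\mathbb{E}\{A^{-1}_{ij}A^{-1}_{kl}\}$ entrywise using the second moments of the inverse Wishart, but this is more tedious and I would only fall back to it if the citation route stalls.
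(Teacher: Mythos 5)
Your argument is correct, and it reaches the result by a cleaner decomposition than the paper's. The paper works with $x=\Sigma^{-1/2}v$ and expands $\mathbb{E}\{(x^\top A^{-1}x)^2\}=\sum_{i,j,k,l}\mathbb{E}\{b_{ij}b_{kl}\}\mathbb{E}\{x_ix_jx_kx_l\}$ directly, pairing the Gaussian fourth-moment pattern with the entrywise second moments $\mathbb{E}\{b_{ii}^2\}$, $\mathbb{E}\{b_{ii}b_{kk}\}$, $\mathbb{E}\{b_{ij}^2\}$ of the inverse Wishart taken from Haff's Theorem 3.2; the whitening step to general $\Sigma$ is done at the end, exactly as in your normalization. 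You instead condition on $\widehat{\Sigma}$ and invoke the quadratic-form identity $\mathbb{E}\{(v^\top Mv)^2\}=(\trace(s_1\Sigma M))^2+2\trace((s_1\Sigma M)^2)$, which collapses the Gaussian bookkeeping and reduces everything to $\mathbb{E}\{(\trace A^{-1})^2\}$ and $\mathbb{E}\{\trace(A^{-2})\}$ for $A\sim\mathcal{W}_n(I,\tau)$. Your quoted value $\mathbb{E}\{\trace(A^{-2})\}=n(\tau-1)/((\tau-n)(\tau-n-1)(\tau-n-3))$ is correct, and the companion moment is $\mathbb{E}\{(\trace A^{-1})^2\}=n\,[\,n(\tau-n-2)+2\,]/((\tau-n)(\tau-n-1)(\tau-n-3))$; summing the latter with twice the former gives numerator $n(n+2)(\tau-n)$, so the total is indeed $n(n+2)/((\tau-n-1)(\tau-n-3))$, matching the lemma after the $s_1^2/s_2^2$ factor. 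The trade-off is minor: both proofs ultimately rest on the same second-order inverse-Wishart information (your two trace moments are obtained by summing exactly the entrywise moments the paper uses, or by citing von Rosen/Haff), but your conditioning route avoids the four-index expansion and isolates the degrees-of-freedom conventions in two scalar formulas, which is where your flagged off-by-one risk lives; your $n=1$ check, which gives $3/((\tau-2)(\tau-4))$, is a sound guard against that.
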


\begin{proof}
See Appendix \ref{appendix:variance_proof}.
\end{proof}

Now consider the following optimization problem:
\begin{subequations} \label{optim_prob_chebyshev_zero_mean}
\begin{align}
    &u_t=\argmin_{u}  \quad \mathbf{d}(u,\bar{u}_t),\\
    &\mathrm{s.t.} \quad \;\;  H_{t+1}(f(x_t) + g(x_t)u +w)\leq h_{t+1}, \\
    & \quad \quad \quad \quad \forall w:
     \Big|w^\top \widehat{\Sigma}_w^{-1} w - \frac{tn}{t-n-1} \Big| \notag\\ 
    & \quad \quad \quad \quad \quad \quad \leq \sqrt{\frac{2  t^2(t-1)n}{(t-n-1)^2(t-n-3)\delta} }, \label{eqn:w_condition_cheybshev_zero_mean}
\end{align}
\end{subequations}
%\is[inline]{Is there a reason to exclude $w=0$?}
which is to be solved at each $t > n+3$. \begin{theorem}
\label{thm:safety_prob_chebyshev}
Assume that problem~\eqref{optim_prob_chebyshev_zero_mean} is feasible. Then, if we implement the control action $u_t$, $x_{t+1}$ is safe with probability of at least $1-\delta$.
\end{theorem}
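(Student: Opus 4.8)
The plan is to mirror the proof of Theorem~\ref{thm:safety_prob}, but with Chebyshev's inequality in place of Markov's inequality applied to the scalar random variable $Q := w^\top \widehat{\Sigma}_w^{-1} w$, where $w$ refers to $w_t$ and is independent of $\widehat{\Sigma}_w$ (a function of $w_0,\dots,w_{t-1}$). First I would observe that feasibility of~\eqref{optim_prob_chebyshev_zero_mean} means $u_t$ satisfies $H_{t+1}(f(x_t)+g(x_t)u_t+w)\le h_{t+1}$ for every $w$ in the ellipsoidal set~\eqref{eqn:w_condition_cheybshev_zero_mean}; hence
\begin{equation*}
\mathbb{P}\{H_{t+1}x_{t+1}\le h_{t+1}\}\ \ge\ \mathbb{P}\!\left\{\Big|Q-\frac{tn}{t-n-1}\Big|\le \sqrt{\tfrac{2t^2(t-1)n}{(t-n-1)^2(t-n-3)\delta}}\right\}.
\end{equation*}
So it suffices to show the right-hand probability is at least $1-\delta$.

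The key computation is to identify the mean and variance of $Q$ and then invoke Chebyshev. From the proof of Theorem~\ref{thm:safety_prob} we already have $\mathbb{E}\{Q\}=tn/(t-n-1)$, which is exactly the centering constant appearing in~\eqref{eqn:w_condition_cheybshev_zero_mean}. For the variance, I would apply Lemma~\ref{lemma:variance_term} with the identifications $v\mapsto w\sim\mathcal{N}(0,\Sigma_w)$ (so $s_1=1$), $\widehat{\Sigma}\mapsto\widehat{\Sigma}_w\sim\mathcal{W}_n(t^{-1}\Sigma_w,t)$ (so $s_2=1/t$, $\tau=t$), giving $\mathbb{E}\{Q^2\}=t^2 n(n+2)/((t-n-1)(t-n-3))$, valid since $t>n+3$. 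Then
\begin{equation*}
\mathrm{Var}(Q)=\mathbb{E}\{Q^2\}-(\mathbb{E}\{Q\})^2=\frac{t^2 n(n+2)}{(t-n-1)(t-n-3)}-\frac{t^2n^2}{(t-n-1)^2},
\end{equation*}
and after putting over a common denominator and simplifying the numerator $n(n+2)(t-n-1)-n^2(t-n-3)$ one should obtain $\mathrm{Var}(Q)=\dfrac{2t^2(t-1)n}{(t-n-1)^2(t-n-3)}$.

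Finally, Chebyshev's inequality gives $\mathbb{P}\{|Q-\mathbb{E}\{Q\}|\ge\varepsilon\}\le \mathrm{Var}(Q)/\varepsilon^2$; choosing $\varepsilon=\sqrt{\mathrm{Var}(Q)/\delta}$ makes this bound equal to $\delta$, and that choice of $\varepsilon$ is precisely the radius in~\eqref{eqn:w_condition_cheybshev_zero_mean}. Chaining the two displayed inequalities yields $\mathbb{P}\{H_{t+1}x_{t+1}\le h_{t+1}\}\ge 1-\delta$. The only real obstacle I anticipate is the algebraic simplification of $\mathrm{Var}(Q)$ — verifying that the messy difference of rational functions collapses to the clean expression under the square root in the constraint; everything else (independence, the reduction via feasibility, the Chebyshev step) is routine and parallels Theorem~\ref{thm:safety_prob}.
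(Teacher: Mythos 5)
Your proposal is correct and follows essentially the same route as the paper's proof: lower-bound the safety probability via feasibility, identify $\mathbb{E}\{w^\top \widehat{\Sigma}_w^{-1} w\}=tn/(t-n-1)$ from \eqref{eqn:expectation1_3}, invoke Lemma~\ref{lemma:variance_term} with $s_1=1$, $s_2=1/t$, $\tau=t$ to get the second moment, and apply Chebyshev's inequality with the radius chosen exactly as in \eqref{eqn:w_condition_cheybshev_zero_mean}. The algebraic simplification you flag does indeed collapse to $2t^2(t-1)n/((t-n-1)^2(t-n-3))$, matching the paper.
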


\begin{proof}
We have 
\begin{align*}
     &\mathbb{P}\{H_{t+1} x_{t+1}\leq h_{t+1}\} \\ & \geq \mathbb{P}\Bigg\{
     \! \bigg|w^\top \widehat{\Sigma}_w^{-1} w \!-\! \frac{tn}{t-n-1} \bigg| \! \leq\! \sqrt{\!\frac{2  t^2(t-1)n}{(t-n-1)^2(t-n-3)\delta} }
     \Bigg\} \\
     & \geq 1-\frac{(t-n-1)^2(t-n-3)\delta}{2  t^2(t-1)n} \textnormal{Var}\{w^\top \widehat{\Sigma}_w^{-1} w\}
\end{align*} 
where the last inequality follows from \eqref{eqn:expectation1_3} and an application of Chebyshev's inequality. As $w \sim \mathcal{N}(0,\Sigma_w)$ and $\widehat{\Sigma}_w \sim \mathcal{W}_n(t^{-1} \Sigma_w,t)$, Lemma \ref{lemma:variance_term} and \eqref{eqn:expectation1_3} yield
\begin{align*}
 \textnormal{Var}\{w^\top \widehat{\Sigma}_w^{-1} w\} &= \mathbb{E}\{(w^\top \widehat{\Sigma}_w^{-1} w)^2\} - \left(\mathbb{E}\{w^\top \widehat{\Sigma}_w^{-1} w\}\right)^2\\
 &= \frac{t^2 n(n+2)}{(t-n-1)(t-n-3)} - \left(\frac{tn}{t-n-1} \right)^2\\
 &= \frac{2  t^2(t-1)n}{(t-n-1)^2(t-n-3)}.
\end{align*}
Hence $\mathbb{P}\{H_{t+1} x_{t+1}\leq h_{t+1}\}\geq 1-\delta$.
\end{proof}
The condition $t > n+3$ is needed in order for the second order moments of $\widehat{\Sigma}^{-1}$ to exist \cite[Theorem 3.2]{Haff79}.

For notational convenience, let us now denote
\begin{subequations}
\label{eqn:E_V}
\begin{align}
E & := \frac{t n}{t-n-1} \\
V & :=  \frac{2  t^2(t-1)n}{(t-n-1)^2(t-n-3)}
\end{align}
\end{subequations}
for the mean and variance of $w^\top \widehat{\Sigma}_w^{-1} w$.
Condition \eqref{eqn:w_condition_cheybshev_zero_mean} is equivalent to 
\begin{equation}
\label{eqn:w_condition_equiv_cheybshev_zero_mean}
     \forall w: E - \sqrt{V/\delta} \leq w^\top \widehat{\Sigma}_w^{-1} w \leq E + \sqrt{V/\delta}.
\end{equation} 
This constraint captures the region in between two ellipsoids, which in general is non-convex. This can make it difficult to find a computationally efficient exact reformulation of problem~ \eqref{optim_prob_chebyshev_zero_mean}. We consider an alternative optimization problem which ignores the first inequality in \eqref{eqn:w_condition_equiv_cheybshev_zero_mean}:
\begin{subequations} \label{optim_prob2_chebyshev_zero_mean}
\begin{align}
    u_t=&\argmin_{u}  \quad \mathbf{d}(u,\bar{u}_t),\\
    &\quad\, \mathrm{s.t.} \quad H_{t+1}(f(x_t) + g(x_t)u +w)\leq h_{t+1}, \\
    & \quad \quad \quad \quad \forall w:
     w^\top \widehat{\Sigma}_w^{-1} w  \leq E +  \sqrt{V/\delta}, \label{eqn:w_condition2_cheybshev_zero_mean}
\end{align}
\end{subequations}
which is again to be solved at each $t > n+3$. Problem \eqref{optim_prob2_chebyshev_zero_mean} has the same structure as problem \eqref{optim_prob_zero_mean}, and can also be reformulated into the computationally efficient form \eqref{optim_prob_reformulation_zero_mean}, with 
 \begin{equation}
 \label{eqn:e_vector_chebyshev_zero_mean}
  e_{t+1}^i =\sqrt{E + \sqrt{V/\delta}}\left\| \widehat{\Sigma}_w^{1/2} (H_{t+1}^i)^\top \right\|_2. \end{equation}
Interestingly, we can prove that this relaxation is exact. 

\begin{theorem}
\label{thm:exact_relax}
 Problems~\eqref{optim_prob_chebyshev_zero_mean} and~\eqref{optim_prob2_chebyshev_zero_mean} are equivalent. 
\end{theorem}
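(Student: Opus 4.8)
The plan is to show that the extra constraint dropped in passing from~\eqref{optim_prob_chebyshev_zero_mean} to~\eqref{optim_prob2_chebyshev_zero_mean} --- namely the lower bound $w^\top\widehat{\Sigma}_w^{-1}w\ge E-\sqrt{V/\delta}$ in~\eqref{eqn:w_condition_equiv_cheybshev_zero_mean} --- never binds, so that the two feasible sets (and hence the two optimal values) coincide. Both problems have the same objective $\mathbf{d}(u,\bar u_t)$, so it suffices to prove that a control $u$ satisfies the safety constraint of~\eqref{optim_prob_chebyshev_zero_mean} if and only if it satisfies the safety constraint of~\eqref{optim_prob2_chebyshev_zero_mean}. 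One direction is immediate: the uncertainty set in~\eqref{eqn:w_condition2_cheybshev_zero_mean} contains the (annular) uncertainty set in~\eqref{eqn:w_condition_equiv_cheybshev_zero_mean}, so feasibility for~\eqref{optim_prob2_chebyshev_zero_mean} implies feasibility for~\eqref{optim_prob_chebyshev_zero_mean}.

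For the converse, fix a component $i$ and consider the affine-in-$w$ function $\phi(w):=H_{t+1}^i(f(x_t)+g(x_t)u+w)-h_{t+1}^i$; we must show $\max_{w:\,w^\top\widehat{\Sigma}_w^{-1}w\le E+\sqrt{V/\delta}}\phi(w)\le 0$ whenever $\max_{w:\,E-\sqrt{V/\delta}\le w^\top\widehat{\Sigma}_w^{-1}w\le E+\sqrt{V/\delta}}\phi(w)\le 0$. The key observation is that for an affine functional on $\mathbb{R}^n$ with $n\ge 2$ (which holds here, since we only invoke this theorem for $t>n+3$ and in any nondegenerate setting $n\ge 1$; more carefully, one checks $n\ge 2$ is the relevant regime, and the $n=1$ case is handled directly since the annulus is then a union of two intervals but the maximizer still lies on the outer boundary), the maximum of $\phi$ over the solid ellipsoid $\{w:w^\top\widehat{\Sigma}_w^{-1}w\le r^2\}$ is attained on the boundary sphere $\{w^\top\widehat{\Sigma}_w^{-1}w=r^2\}$, and moreover, because the outer boundary sphere $\{w^\top\widehat{\Sigma}_w^{-1}w=E+\sqrt{V/\delta}\}$ is entirely contained in the annulus, we have
\begin{equation*}
\max_{w:\,w^\top\widehat{\Sigma}_w^{-1}w\le E+\sqrt{V/\delta}}\phi(w)
=\max_{w:\,w^\top\widehat{\Sigma}_w^{-1}w=E+\sqrt{V/\delta}}\phi(w)
\le\max_{w:\,E-\sqrt{V/\delta}\le w^\top\widehat{\Sigma}_w^{-1}w\le E+\sqrt{V/\delta}}\phi(w)\le 0.
\end{equation*}
Here the first equality uses that an affine function has no interior maximizer unless it is constant (and if it is constant the claim is trivial), together with positive definiteness of $\widehat{\Sigma}_w$ (valid for $t>n+1$). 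Applying this to every row $i$ gives the converse inclusion, and hence equality of the two feasible sets.

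An alternative, more computational route --- which may in fact be cleaner to write --- is to bypass the geometric argument entirely and simply compare the closed-form reformulations: by Lemma~\ref{lemma:robust}, problem~\eqref{optim_prob2_chebyshev_zero_mean} is equivalent to~\eqref{optim_prob_reformulation_zero_mean} with $e_{t+1}^i$ given by~\eqref{eqn:e_vector_chebyshev_zero_mean}, and one shows by the same support-function computation (taking the max of $b^\top w$ over the outer sphere) that problem~\eqref{optim_prob_chebyshev_zero_mean} admits the identical reformulation, since the supremum of the linear term $H_{t+1}^i w$ over the annulus equals its supremum over the outer sphere. The main obstacle in either approach is the same subtle point: justifying that intersecting the solid ellipsoid with the complement of a smaller concentric ellipsoid does not decrease the maximum of a linear functional --- i.e., that the outer extreme point survives the removal of the inner region. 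This is geometrically obvious once stated (the maximizer of a linear functional over a ball is the point of the sphere in the gradient direction, which has norm $\sqrt{E+\sqrt{V/\delta}}>\sqrt{E-\sqrt{V/\delta}}$ and so lies outside the inner ellipsoid), but it does require $E-\sqrt{V/\delta}<E+\sqrt{V/\delta}$, which holds trivially, and it requires the inner radius to be well-defined, for which no positivity of $E-\sqrt{V/\delta}$ is needed --- if $E-\sqrt{V/\delta}\le 0$ the lower constraint is vacuous and the two problems are trivially identical. I would state and dispose of that degenerate case first, then give the one-line support-function argument for the generic case.
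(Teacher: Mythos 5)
Your proof is correct and rests on the same key observation as the paper's own argument: for each row of the safety constraint, the worst-case $w$ over the solid ellipsoid $\{w:\,w^\top\widehat{\Sigma}_w^{-1}w\leq E+\sqrt{V/\delta}\}$ lies on the outer boundary, which is contained in the annular set of~\eqref{eqn:w_condition_equiv_cheybshev_zero_mean}, so removing the inner ellipsoid cannot relax the constraint. The paper phrases this as a contradiction argument with an explicit rescaling (and sign flip) of a violating $w$ onto the outer boundary, while you argue directly via extreme points/support functions and dispose of the degenerate cases ($H_{t+1}^i=0$, $E-\sqrt{V/\delta}\leq 0$) separately; the substance is the same.
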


\begin{proof} Define $B:=H_{t+1}g(x_t)$, $b:=h_{t+1}-H_{t+1} f(x_t)$, $c:=E - \sqrt{V/\delta}$, $d:=E +  \sqrt{V/\delta}$, 
$
    \mathcal{U}_1:=\{u\,|\,Bu+H_{t+1}w\leq b,\forall w:c\leq w^\top \widehat{\Sigma}_w^{-1}w\leq d \}$, and 
    $\mathcal{U}_2:=\{u\,|\,Bu+H_{t+1}w\leq b,\forall w: w^\top \widehat{\Sigma}_w^{-1}w\leq d \}.$
Note that~\eqref{optim_prob_chebyshev_zero_mean} is $\argmin_{u\in \mathcal{U}_1}\mathbf{d}(u,\bar{u}_t)$ and~\eqref{optim_prob2_chebyshev_zero_mean} is $\argmin_{u\in \mathcal{U}_2} \mathbf{d}(u,\bar{u}_t)$. To prove this proposition, we must show that $\mathcal{U}_1=\mathcal{U}_2$. It is easy to see that $\mathcal{U}_2\subseteq\mathcal{U}_1$. Therefore, we must show that $\mathcal{U}_1\subseteq \mathcal{U}_2$. The proof is by contradiction.
%done by \textit{reductio ad absurdum}. 
Assume that $\mathcal{U}_1\nsubseteq \mathcal{U}_2$. Then, there must exist a $u\in\mathcal{U}_1$ such that $u\notin\mathcal{U}_2$. Therefore, there must exist a $w$ such that $w^\top \widehat{\Sigma}_w^{-1}w \leq d$ and $B^iu+H_{t+1}^iw> b^i$ for some $i$. 

Assume first that $w\neq 0$. Construct 
\begin{align*}
    w':=\mathrm{sign}(H_{t+1}^iw)\sqrt{\frac{d}{w^\top \widehat{\Sigma}_w^{-1}w}}w.
\end{align*}
We can see that $w^{\prime\top}\widehat{\Sigma}_w^{-1}w'=d$. In particular, this implies $c\leq w^{\prime\top} \widehat{\Sigma}_w^{-1} w^\prime\leq d$. Furthermore, 
\begin{align*}
    B^iu+H_{t+1}^iw'
    =&B^iu+\sqrt{\frac{d}{w^\top \widehat{\Sigma}_w^{-1}w}}|H_{t+1}^iw|\\
    \geq & B^i u+|H_{t+1}^iw|
    \geq  B^i u+H_{t+1}^iw
    >b^i,
\end{align*}
where the first inequality follows from  $w^\top \widehat{\Sigma}_w^{-1}w\leq d$.
This is in contradiction with $u\in\mathcal{U}_1$. 

Now, we treat the case where $w=0$. Therefore, there exists some $i$ such that $B^iu> b^i$. Since $u\in\mathcal{U}_1$, $B^iu+H_{t+1}^i\bar{w}\leq b^i$ for any $\bar{w}$ such that $c\leq \bar{w}^\top \widehat{\Sigma}_w^{-1}\bar{w}\leq d$. Pick one such $\bar{w}$. As $B^iu> b^i$, it must be that $H_{t+1}^i\bar{w}<0$. Define $\bar{w}^\prime=-\bar{w}$. By construction $H_{t+1}^i\bar{w}^\prime=-H_{t+1}^i\bar{w}>0$ and $c\leq \bar{w}^{\prime\top} \widehat{\Sigma}_w^{-1}\bar{w}^\prime\leq d$. This implies that $B^iu+H_{t+1}^i\bar{w}^\prime>B^iu>b^i$, which is in contradiction with $u\in\mathcal{U}_1$.  
\end{proof}

\emph{Comparison with problem \eqref{optim_prob_zero_mean}}: Next, we will make some comparisons between problem \eqref{optim_prob_zero_mean}, which uses Markov's inequality to derive its safety bounds, and problem \eqref{optim_prob2_chebyshev_zero_mean}, which is based on Chebyshev's inequality (recall that problems \eqref{optim_prob_chebyshev_zero_mean} and \eqref{optim_prob2_chebyshev_zero_mean} are equivalent by Theorem \ref{thm:exact_relax}). 

First, we note that the safety guarantees for problem \eqref{optim_prob_zero_mean} only apply for $t > n+1$, while the safety guarantees for problem \eqref{optim_prob2_chebyshev_zero_mean} only apply for $t > n+3$. So for $t=n+2$ or $t= n+3$, problem \eqref{optim_prob2_chebyshev_zero_mean} should not be used if one wants safety guarantees. Next, we provide some conditions on when problem \eqref{optim_prob2_chebyshev_zero_mean} is less conservative than problem \eqref{optim_prob_zero_mean}.  

\begin{lemma}
\label{lemma:chebyshev_markov_comparison}
Problem \eqref{optim_prob2_chebyshev_zero_mean} is less conservative than problem~\eqref{optim_prob_zero_mean} if 
\begin{equation}
    \label{eqn:chebyshev_better_markov_condition}
    \frac{2(t-1)}{t-n-3} < \frac{n(1-\delta)^2}{\delta}.
\end{equation}
In particular,  \eqref{optim_prob2_chebyshev_zero_mean} is less conservative than \eqref{optim_prob_zero_mean}: \\(i) for all $t > n+3$ if
    $\delta < ({2n + 3 - \sqrt{3(n+1)(n+3)}})/{n}$; \\(ii) for sufficiently large $t$ if
   $\delta < ({n+1 - \sqrt{2n+1}})/{n} $.
\end{lemma}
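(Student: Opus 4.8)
The plan is to translate ``less conservative'' into a scalar inequality between the tightening multipliers, reduce that inequality to~\eqref{eqn:chebyshev_better_markov_condition}, and then analyze~\eqref{eqn:chebyshev_better_markov_condition} both at the first admissible index $t=n+4$ and as $t\to\infty$.

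By Lemma~\ref{lemma:robust}, both problem~\eqref{optim_prob_zero_mean} and problem~\eqref{optim_prob2_chebyshev_zero_mean} take the form~\eqref{optim_prob_reformulation_zero_mean}, with tightening vectors whose $i$-th entry is $\sqrt{tn/((t-n-1)\delta)}\,\|\widehat{\Sigma}_w^{1/2}(H_{t+1}^i)^\top\|_2$ and $\sqrt{E+\sqrt{V/\delta}}\,\|\widehat{\Sigma}_w^{1/2}(H_{t+1}^i)^\top\|_2$ respectively (see~\eqref{eqn:e_vector_zero_mean},~\eqref{eqn:e_vector_chebyshev_zero_mean}). Since the $i$-dependence enters only through the common nonnegative factor $\|\widehat{\Sigma}_w^{1/2}(H_{t+1}^i)^\top\|_2$, the feasible set of~\eqref{optim_prob2_chebyshev_zero_mean} contains that of~\eqref{optim_prob_zero_mean} exactly when $\sqrt{E+\sqrt{V/\delta}}\le\sqrt{tn/((t-n-1)\delta)}$, i.e., when $E+\sqrt{V/\delta}\le E/\delta$ using $E=tn/(t-n-1)$ from~\eqref{eqn:E_V}. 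With the identity $V=E^2\,\tfrac{2(t-1)}{n(t-n-3)}$, which is immediate from~\eqref{eqn:E_V}, dividing through by $E>0$ turns this into $1+\sqrt{\tfrac{2(t-1)}{n(t-n-3)\delta}}\le\tfrac1\delta$, equivalently $\sqrt{\tfrac{2(t-1)}{n(t-n-3)\delta}}\le\tfrac{1-\delta}{\delta}$. As $\delta\in(0,1)$ both sides are nonnegative, so squaring and clearing $\delta$ gives precisely~\eqref{eqn:chebyshev_better_markov_condition}; this proves the first assertion (with strict inequalities throughout for the strictly-less-conservative version).

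For part~(i), write $\phi(t):=\tfrac{2(t-1)}{t-n-3}=2+\tfrac{2(n+2)}{t-n-3}$, which is strictly decreasing for $t>n+3$, so over integers $t\ge n+4$ its maximum is $\phi(n+4)=2(n+3)$; hence~\eqref{eqn:chebyshev_better_markov_condition} holds for every admissible $t$ iff $2(n+3)<\psi(\delta)$, where $\psi(\delta):=n(1-\delta)^2/\delta=n(\delta+\delta^{-1}-2)$ is strictly decreasing on $(0,1)$. Solving $\psi(\delta)=2(n+3)$ amounts to the quadratic $n\delta^2-(4n+6)\delta+n=0$, whose discriminant factors as $12(n+1)(n+3)$, and the relevant root $\delta_-=\bigl(2n+3-\sqrt{3(n+1)(n+3)}\bigr)/n$ lies in $(0,1)$ for all $n\ge1$; monotonicity of $\psi$ then gives $\psi(\delta)>2(n+3)\iff\delta<\delta_-$. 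For part~(ii), $\phi(t)\downarrow2$ as $t\to\infty$ with $\phi(t)>2$ always, so~\eqref{eqn:chebyshev_better_markov_condition} holds for all sufficiently large $t$ iff $\psi(\delta)>2$; solving $\psi(\delta)=2$, i.e., $n\delta^2-(2n+2)\delta+n=0$, yields $\delta_-=\bigl(n+1-\sqrt{2n+1}\bigr)/n\in(0,1)$, and again $\psi(\delta)>2\iff\delta<\delta_-$.

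Almost everything here is routine algebra; the two points that deserve care are the reduction in the second paragraph --- which hinges on recognizing that ``less conservative'' is governed solely by the common scalar multiplier, because the per-row factor $\|\widehat{\Sigma}_w^{1/2}(H_{t+1}^i)^\top\|_2$ is shared by both tightening vectors (and is nonzero almost surely whenever $H_{t+1}^i\neq0$, since $\widehat{\Sigma}_w\succ0$ a.s.\ for $t>n$) --- and the squaring step, which is valid only because $\delta<1$ makes both sides nonnegative; I would state that restriction explicitly. I would also note that, $\phi$ being decreasing, the phrase ``for all $t>n+3$'' is binding at the first admissible index $t=n+4$, so no separate large-$t$ argument is needed in part~(i).
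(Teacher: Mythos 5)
Your proof is correct and follows essentially the same route as the paper: compare the ellipsoid radii (equivalently the tightening multipliers) to get $E+\sqrt{V/\delta}<E/\delta$, reduce this to~\eqref{eqn:chebyshev_better_markov_condition}, then exploit monotonicity of $(t-1)/(t-n-3)$ at $t=n+4$ and as $t\to\infty$ to obtain the two $\delta$-thresholds. The extra care you take (the common factor $\|\widehat{\Sigma}_w^{1/2}(H_{t+1}^i)^\top\|_2$, the nonnegativity justifying the squaring, and the explicit quadratic roots) only makes explicit what the paper leaves implicit.
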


\begin{proof}
Comparing \eqref{eqn:w_condition_zero_mean} and \eqref{eqn:w_condition2_cheybshev_zero_mean}, and using the expressions for $E$ and $V$ in \eqref{eqn:E_V}, we see that  \eqref{optim_prob2_chebyshev_zero_mean} is less conservative than \eqref{optim_prob_zero_mean} when 
$ E + \sqrt{V/\delta} < {E}/{\delta} $ which is equivalent to
$ {V}/{\delta} < {E^2 (1-\delta)^2}/{\delta^2} $, which in turn is equivalent to  ${2(t-1)}/({t-n-3}) < {n(1-\delta)^2}/{\delta}$,
which proves \eqref{eqn:chebyshev_better_markov_condition}.

As $(t-1)/(t-n-3)$ is a decreasing function of $t$, for condition \eqref{eqn:chebyshev_better_markov_condition} to hold for all $t > n+3$, we need it to hold for $t=n+4$, i.e., 
$ 2(n+3) < {n(1-\delta)^2}/{\delta}.$ 
This then leads to the condition in~(i).

For \eqref{eqn:chebyshev_better_markov_condition} to hold for large $t$, we take $t \rightarrow \infty$ to obtain 
$ 2 < {n(1-\delta)^2}/{\delta}.$ 
This leads to the condition in (ii).
\end{proof} 
In the case of unicycle robots with $n=3$, as in our numerical example in Section \ref{sec:numerical}, Lemma \ref{lemma:chebyshev_markov_comparison} says that problem \eqref{optim_prob2_chebyshev_zero_mean} will always be less conservative than problem~\eqref{optim_prob_zero_mean} if $\delta < (9-\sqrt{72})/3 \approx 0.172$, while problem \eqref{optim_prob2_chebyshev_zero_mean} is less conservative for large $t$ when $\delta < (4-\sqrt{7})/3 \approx 0.451$.

\subsubsection{Safety bounds valid for all $t$}
As previously mentioned, the safety guarantees provided by solving problem \eqref{optim_prob_zero_mean} are valid only for $t > n+1$, while the safety guarantees for problem \eqref{optim_prob2_chebyshev_zero_mean} require $t> n+3$. 
Here we present alternative safety bounds which are valid for all $t \geq 1$, although they are more conservative than problems  \eqref{optim_prob_zero_mean} and \eqref{optim_prob2_chebyshev_zero_mean} for larger values of $t$. As such, these bounds are most useful for small $t$. The techniques used here can also be extended to handle non-Gaussian uncertainties, see Section \ref{sec:nongaussian}.

We will assume that an upper bound for the covariance of the process
noise is known.
\begin{assumption}
\label{assumption:Sigma_w}
A positive scalar $\sigma$ exists such that $\Sigma_w\preceq \sigma I$.
\end{assumption}

%We first give a probabilistic bound on the accuracy of the estimated covariance.

\begin{lemma} \label{lemma:alternative} Under Assumption~\ref{assumption:Sigma_w}, for all $t\geq 1$, 
\begin{align}
    \mathbb{P}\left\{\Sigma_w\preceq \widehat{\Sigma}_w+ \sigma \sqrt{\frac{n(n+1)}{\delta t }} I\right\}\geq 1-\delta. \label{lemma_bound}
\end{align}
\end{lemma}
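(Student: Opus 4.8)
\textbf{Proof proposal for Lemma~\ref{lemma:alternative}.}

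The plan is to control the deviation $\widehat{\Sigma}_w - \Sigma_w$ in a suitable norm via a concentration/Markov argument and then convert a norm bound into the stated positive-semidefinite (Loewner) inequality. First I would observe that $t\widehat{\Sigma}_w = \sum_{k=0}^{t-1} w_k w_k^\top$ has a Wishart distribution $\mathcal{W}_n(\Sigma_w,t)$, so $\widehat{\Sigma}_w$ is an unbiased estimator of $\Sigma_w$ (as already noted in the text), and the entries of $\widehat{\Sigma}_w - \Sigma_w$ have known second moments. The key quantity to bound is $\|\widehat{\Sigma}_w - \Sigma_w\|_F$ (or its square), because if $\|\widehat{\Sigma}_w - \Sigma_w\|_F \le r$ then in particular $\|\widehat{\Sigma}_w - \Sigma_w\|_2 \le r$, and hence $\Sigma_w \preceq \widehat{\Sigma}_w + rI$. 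So it suffices to show $\mathbb{P}\{\|\widehat{\Sigma}_w - \Sigma_w\|_F \le \sigma\sqrt{n(n+1)/(\delta t)}\} \ge 1-\delta$.

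Next I would apply Markov's inequality to the nonnegative random variable $\|\widehat{\Sigma}_w - \Sigma_w\|_F^2$: we have
\begin{align*}
\mathbb{P}\left\{\|\widehat{\Sigma}_w - \Sigma_w\|_F^2 > \frac{\mathbb{E}\{\|\widehat{\Sigma}_w - \Sigma_w\|_F^2\}}{\delta}\right\} \le \delta,
\end{align*}
so the whole problem reduces to the variance computation $\mathbb{E}\{\|\widehat{\Sigma}_w - \Sigma_w\|_F^2\} = \sum_{i,j}\operatorname{Var}(\widehat{\Sigma}_w^{ij})$. Using the standard Wishart covariance formula, for $\widehat{\Sigma}_w = \frac1t\sum_k w_k w_k^\top$ with $w_k\sim\mathcal{N}(0,\Sigma_w)$ one gets $\operatorname{Var}(\widehat{\Sigma}_w^{ij}) = \frac1t(\Sigma_w^{ii}\Sigma_w^{jj} + (\Sigma_w^{ij})^2)$, hence $\mathbb{E}\{\|\widehat{\Sigma}_w-\Sigma_w\|_F^2\} = \frac1t(\operatorname{trace}(\Sigma_w)^2 + \|\Sigma_w\|_F^2)$. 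Under Assumption~\ref{assumption:Sigma_w}, $\Sigma_w\preceq\sigma I$ gives $\operatorname{trace}(\Sigma_w)\le n\sigma$ and $\|\Sigma_w\|_F^2 \le n\sigma^2$ (since the eigenvalues lie in $[0,\sigma]$, $\sum\lambda_i^2 \le \sigma\sum\lambda_i \le n\sigma^2$), so $\mathbb{E}\{\|\widehat{\Sigma}_w-\Sigma_w\|_F^2\} \le \frac1t(n^2\sigma^2 + n\sigma^2) = \frac{n(n+1)\sigma^2}{t}$. Plugging this into the Markov bound yields $\mathbb{P}\{\|\widehat{\Sigma}_w-\Sigma_w\|_F^2 > n(n+1)\sigma^2/(\delta t)\}\le\delta$, i.e. with probability at least $1-\delta$, $\|\widehat{\Sigma}_w-\Sigma_w\|_F \le \sigma\sqrt{n(n+1)/(\delta t)}$, and the final step $\Sigma_w - \widehat{\Sigma}_w \preceq \|\widehat{\Sigma}_w-\Sigma_w\|_2 I \preceq \|\widehat{\Sigma}_w-\Sigma_w\|_F I$ completes the argument.

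I expect the main obstacle to be getting the fourth-moment / Wishart variance bookkeeping exactly right — in particular justifying $\operatorname{Var}(\widehat{\Sigma}_w^{ij}) = \frac1t(\Sigma_w^{ii}\Sigma_w^{jj} + (\Sigma_w^{ij})^2)$ from the Gaussian fourth-moment (Isserlis/Wick) identity, and then checking that the two crude bounds $\operatorname{trace}(\Sigma_w)^2\le n^2\sigma^2$ and $\|\Sigma_w\|_F^2\le n\sigma^2$ combine to give precisely the constant $n(n+1)$ appearing in the statement. The conversion from Frobenius norm to the Loewner order is routine (operator norm is dominated by Frobenius norm, and $\|A\|_2 \le r$ for symmetric $A$ is equivalent to $-rI\preceq A\preceq rI$), and the Gaussianity of $w_k$ is used only in this moment computation, so the same scheme adapts to the non-Gaussian case by replacing the Wishart variance with whatever fourth-moment bound the process noise admits.
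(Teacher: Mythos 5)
Your proposal is correct, and it reaches exactly the paper's bound, but through a slightly different packaging. The paper computes $\mathbb{E}\{(\Sigma_w-\widehat{\Sigma}_w)^2\}$ via Haff's formula for the second moment of a Wishart matrix and then invokes the matrix Chebyshev inequality of Lemma~\ref{lemma:Chebyshev} with $A=\varepsilon I$, which yields $\mathbb{P}\{\Sigma_w-\widehat{\Sigma}_w\preceq\varepsilon I\}\geq 1-\trace\big(\mathbb{E}\{(\Sigma_w-\widehat{\Sigma}_w)^2\}\big)/\varepsilon^2$; you instead apply scalar Markov to $\|\widehat{\Sigma}_w-\Sigma_w\|_F^2=\trace\big((\widehat{\Sigma}_w-\Sigma_w)^2\big)$, compute its expectation entrywise from the Wishart/Isserlis covariance formula $\mathrm{Var}(\widehat{\Sigma}_w^{ij})=\tfrac1t(\Sigma_w^{ii}\Sigma_w^{jj}+(\Sigma_w^{ij})^2)$, and convert to the Loewner order via $\Sigma_w-\widehat{\Sigma}_w\preceq\|\Sigma_w-\widehat{\Sigma}_w\|_2 I\preceq\|\Sigma_w-\widehat{\Sigma}_w\|_F I$. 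These are quantitatively the same: your expectation $\tfrac1t\big(\trace(\Sigma_w)^2+\|\Sigma_w\|_F^2\big)$ equals the trace of the paper's matrix second moment, and both are bounded by $\sigma^2 n(n+1)/t$ under Assumption~\ref{assumption:Sigma_w} using the same eigenvalue estimates, so the same $\varepsilon$ emerges. What your route buys is self-containedness: it needs only scalar Markov, the elementary facts that the operator norm of a symmetric matrix is dominated by its Frobenius norm and that $\|A\|_2\leq r$ gives $-rI\preceq A\preceq rI$, and it never requires the deviation matrix itself to be positive semidefinite, whereas the appendix lemma is stated for $X\succeq 0$ almost surely. What the paper's route buys is reuse: the matrix concentration lemma and Haff's moment formulas are invoked again elsewhere (e.g., Lemma~\ref{lemma:non_Gaussian} and the non-Gaussian extensions), so the bookkeeping is centralized rather than redone entrywise; your closing remark that the Gaussianity enters only through the fourth-moment computation is exactly the observation the paper exploits in Section~\ref{sec:nongaussian}.
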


\begin{proof} 
Recall that $\widehat{\Sigma}_w \sim \mathcal{W}_n(t^{-1}\Sigma_w,t)$ and $\mathbb{E}\{\widehat{\Sigma}_w\}=\Sigma_w$. As a result,
\begin{align*}
    \mathbb{E}\{(\Sigma_w-\widehat{\Sigma}_w)^2\} & = \mathbb{E}\{\hat{\Sigma}_w^2 \} - \Sigma_w^2 \\
    & = \frac{\trace(\Sigma_w)}{t} \Sigma_w + \frac{t+1}{t} \Sigma_w^2 - \Sigma_w^2 \\
    & = \frac{\trace(\Sigma_w)}{t} \Sigma_w + \frac{1}{t} \Sigma_w^2
\end{align*}
where the expression for $\mathbb{E}\{\hat{\Sigma}_w^2 \}$ can be found in, e.g. \cite[Theorem 3.1]{Haff79}.
Now, the matrix version of Chebyshev's inequality given in Lemma~\ref{lemma:Chebyshev} of Appendix~\ref{appendix:1} results in
\begin{align*}
    \mathbb{P}\{\Sigma_w-\widehat{\Sigma}_w\preceq \varepsilon I\}
    &\geq 1-\trace(\mathbb{E}\{ (\Sigma_w-\widehat{\Sigma}_w)^{2}\varepsilon^{-2}\})\\
    &=  1-\frac{(\trace(\Sigma_w))^2+\trace(\Sigma_w^2)}{\varepsilon^2 t}.
\end{align*}
Thus, by Assumption \ref{assumption:Sigma_w}:
$
    \mathbb{P}\{\Sigma_w\!-\!\widehat{\Sigma}_w\!\preceq\! \varepsilon I\}
    \!\geq\! 1\!-\!{\sigma^2n(n\!+\!1)}/({\varepsilon^2t}).
$
Setting $ \varepsilon^2 \!=\! {\sigma^2 n(n\!+\!1)}/({\delta t})$ gives (\ref{lemma_bound}). 
\end{proof}

Now, we can adapt~\eqref{optim_prob_zero_mean} to:
\begin{subequations} \label{eqn:control_approximation}
\begin{align}
    &u_t= \argmin_{u}  \quad \mathbf{d}(u,\bar{u}_t),\\
    & \mathrm{s.t.}   \quad H_{t+1}(f(x_t) + g(x_t)u +w)\leq h_{t+1}, \nonumber\\ 
    &\quad \quad \;\forall w:
     w^\top \left(\widehat{\Sigma}_w+ \sigma \sqrt{\frac{2 n(n+1)}{\delta t }} I\right)^{-1} w \leq \frac{2n}{\delta}.
\end{align}
\end{subequations}
\begin{theorem}
\label{thm:safety_prob_alternative}
Assume that problem~\eqref{eqn:control_approximation} is feasible and Assumption~\ref{assumption:Sigma_w} holds. Then, if we implement the control action $u_t$, $x_{t+1}$ is safe with probability of at least $1-\delta$.
\end{theorem}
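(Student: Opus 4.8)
The plan is to mimic the structure of the proof of Theorem~\ref{thm:safety_prob}, splitting the safety probability into two events, controlling each by a separate high-probability bound, and then combining via a union bound after rescaling the confidence parameters. First I would write
\begin{align*}
    \mathbb{P}\{H_{t+1}x_{t+1}\leq h_{t+1}\}
    &\geq \mathbb{P}\bigg\{ w^\top\Big(\widehat{\Sigma}_w+\sigma\sqrt{\tfrac{2n(n+1)}{\delta t}}I\Big)^{-1}w\leq\tfrac{2n}{\delta}\bigg\},
\end{align*}
which holds because feasibility of~\eqref{eqn:control_approximation} guarantees that every $w$ in the ellipsoid produces a safe next state. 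So it suffices to lower-bound the probability on the right by $1-\delta$, where $w\sim\mathcal{N}(0,\Sigma_w)$ and, crucially, $w$ is independent of $\widehat\Sigma_w$.

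The key step is to introduce the ``good'' event
$
\mathcal{E}:=\big\{\Sigma_w\preceq \widehat{\Sigma}_w+\sigma\sqrt{n(n+1)/((\delta/2)\,t)}\,I\big\}
=\big\{\Sigma_w\preceq \widehat{\Sigma}_w+\sigma\sqrt{2n(n+1)/(\delta t)}\,I\big\},
$
so that by Lemma~\ref{lemma:alternative} (applied with confidence parameter $\delta/2$) we have $\mathbb{P}\{\mathcal{E}\}\geq 1-\delta/2$. On the event $\mathcal{E}$, the matrix $M:=\widehat\Sigma_w+\sigma\sqrt{2n(n+1)/(\delta t)}\,I$ satisfies $M\succeq\Sigma_w\succ 0$, hence $M^{-1}\preceq\Sigma_w^{-1}$, and therefore $w^\top M^{-1}w\leq w^\top\Sigma_w^{-1}w$ pointwise. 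Then I would condition on $\widehat\Sigma_w$ and use that $w^\top\Sigma_w^{-1}w\sim\chi^2_n$ has mean $n$; by Markov's inequality $\mathbb{P}\{w^\top\Sigma_w^{-1}w\leq 2n/\delta\,|\,\widehat\Sigma_w\}\geq 1-\delta/2$ (independence of $w$ from $\widehat\Sigma_w$ makes this conditional bound uniform over the conditioning). Combining:
\begin{align*}
\mathbb{P}\{w^\top M^{-1}w\leq \tfrac{2n}{\delta}\}
&\geq \mathbb{P}\big(\mathcal{E}\cap\{w^\top\Sigma_w^{-1}w\leq\tfrac{2n}{\delta}\}\big)\\
&\geq 1-\mathbb{P}\{\mathcal{E}^c\}-\mathbb{P}\{w^\top\Sigma_w^{-1}w>\tfrac{2n}{\delta}\}\geq 1-\tfrac{\delta}{2}-\tfrac{\delta}{2}=1-\delta,
\end{align*}
which gives the claim.

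The main obstacle is getting the event algebra and the independence argument exactly right: one must be careful that on $\mathcal{E}^c$ we make no claim, that the bound $w^\top M^{-1}w\le w^\top\Sigma_w^{-1}w$ is only valid on $\mathcal{E}$ (so the inclusion of events has to be stated for the intersection, not for $\{w^\top M^{-1}w\le 2n/\delta\}$ directly), and that the Markov bound on the Gaussian quadratic form genuinely decouples from $\widehat\Sigma_w$ — this is where the ``$w$ refers to $w_t$ while $\widehat\Sigma_w$ depends on $w_0,\dots,w_{t-1}$'' observation from the proof of Theorem~\ref{thm:safety_prob} is reused. A minor point to check is that the two occurrences of the constant $\sqrt{2n(n+1)/(\delta t)}$ in the statement of~\eqref{eqn:control_approximation} and in the rescaled Lemma~\ref{lemma:alternative} bound coincide, which they do precisely because Lemma~\ref{lemma:alternative} with parameter $\delta/2$ yields the factor $\sqrt{n(n+1)/((\delta/2)t)}=\sqrt{2n(n+1)/(\delta t)}$; this is the reason the ``$2$''s appear in~\eqref{eqn:control_approximation}. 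Everything else is routine, and unlike Theorems~\ref{thm:safety_prob} and~\ref{thm:safety_prob_chebyshev} no Wishart moment identities are needed beyond what Lemma~\ref{lemma:alternative} already packages, so this bound holds for all $t\ge 1$ as claimed.
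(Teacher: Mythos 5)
Your proof is correct and follows essentially the same route as the paper: feasibility reduces safety to the event $w^\top M^{-1}w\le 2n/\delta$, Lemma~\ref{lemma:alternative} is invoked at level $\delta/2$ (which is exactly why the factor $\sqrt{2n(n+1)/(\delta t)}$ appears in \eqref{eqn:control_approximation}), and Markov's inequality on $w^\top\Sigma_w^{-1}w$ handles the other $\delta/2$. The only cosmetic difference is that you combine the two events by a union bound ($1-\delta/2-\delta/2$), whereas the paper multiplies the conditional probability by $\mathbb{P}\{A\}$ to get $(1-\delta/2)^2\ge 1-\delta$; both are valid.
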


\begin{proof} Lemma~\ref{lemma:alternative} implies that 
\begin{align*}
     \mathbb{P}\left\{\Sigma_w\preceq \widehat{\Sigma}_w+\sigma \sqrt{\frac{2 n(n+1)}{\delta t }} I\right\}\geq 1-\frac{\delta}{2}.
\end{align*}
Let $A$ denote the event that $\Sigma_w\preceq \widehat{\Sigma}_w+\sigma \sqrt{{2 n(n+1)}/({\delta t })} I$. We have
\begin{align*}
&\mathbb{P}\left\{w^\top \left(\widehat{\Sigma}_w+\sigma \sqrt{\frac{2 n(n+1)}{\delta t }} I\right)^{-1} w\leq \frac{2n}{\delta} \Bigg| A \right\}
\\ &\geq 
    \mathbb{P}\left\{w^\top \Sigma_w^{-1} w\leq \frac{2n}{\delta}\right\} \\
    &\geq 
    1-\frac{\mathbb{E}\{w^\top \Sigma_w^{-1} w\}}{2n/\delta}
    =1-\frac{\trace(\mathbb{E}\{ w w^\top \Sigma_w^{-1} \})}{2n/\delta}
    % &=1-\frac{\trace(I)}{2n/\delta}
    % \\
    =1-\frac{\delta}{2},
\end{align*}
where the last inequality follows from an application of Markov's inequality (for scalar random variables).
Therefore, 
\begin{align*}
    & \mathbb{P}\{H_{t+1} x_{t+1}\leq h_{t+1}\} 
     \\ & \geq \mathbb{P}\left\{w^\top\! \left(\widehat{\Sigma}_w+\sigma \sqrt{\frac{2 n(n+1)}{\delta t }} I\right)^{-1}\! w\leq \frac{2n}{\delta} \right\} \\
    %  & = \mathbb{P}\left\{w^\top\! \left(\widehat{\Sigma}_w+\sigma \sqrt{\frac{2 n(n+1)}{\delta t }} I\right)^{-1} \!w\leq \frac{2n}{\delta} \Bigg| A \right\} \mathbb{P}\{A\} \\
    %  & \quad \!+ \!\mathbb{P}\left\{w^\top\! \left(\widehat{\Sigma}_w+\sigma \sqrt{\frac{2 n(n+1)}{\delta t }} I\right)^{-1}\! w\leq \frac{2n}{\delta} \Bigg| A^c \right\} \mathbb{P}\{A^c\} \\
     & \geq \mathbb{P}\left\{w^\top\! \left(\widehat{\Sigma}_w+\sigma \sqrt{\frac{2 n(n+1)}{\delta t }} I\right)^{-1} \hspace{-.1in}w\leq \frac{2n}{\delta} \Bigg| A \right\} \mathbb{P}\{A\} 
     \\ & \geq \left(1-\frac{\delta}{2}\right)^2 \geq  1 - \delta,
\end{align*} 
where the second inequality follows from the law of total probability. 
This concludes the proof.
\end{proof}
By using Lemma \ref{lemma:robust}, problem \eqref{eqn:control_approximation} can also be reformulated into the computationally efficient form \eqref{optim_prob_reformulation_zero_mean}, with 
 \begin{equation}
 \label{eqn:e_vector_alternative_zero_mean}
  e_{t+1}^i =\sqrt{\frac{2n}{\delta}}\Bigg\| \Bigg(\widehat{\Sigma}_w + \sigma \sqrt{\frac{2n(n+1)}{\delta t}}I \Bigg)^{1/2}(H_{t+1}^i)^\top \Bigg\|_2. \end{equation}

\begin{remark} \label{remark:what_is_better} For $t\gg 1$, the optimization problem \eqref{eqn:control_approximation} provides a more conservative control action in comparison with~\eqref{optim_prob_zero_mean}, because  $\{w|w^\top (\widehat{\Sigma}_w+ \sigma \sqrt{{2 n(n+1)}/({\delta t })} I)^{-1}w\leq 2n/\delta\}\approx\{w|w^\top \widehat{\Sigma}_w^{-1}w\leq 2n/\delta\}$, $\{w|w^\top \widehat{\Sigma}_w^{-1} w \leq {t n}/({(t-n-1) \delta})\}\approx\{w|w^\top \widehat{\Sigma}_w^{-1}w\leq n/\delta\}$, and $\{w|w^\top \widehat{\Sigma}_w^{-1}w\leq n/\delta\}\subsetneq  \{w|w^\top \widehat{\Sigma}_w^{-1}w\leq 2n/\delta\}$. However, for $t\leq n+1$, the optimization problem in~\eqref{eqn:control_approximation} is more useful as we do not need to wait  until $\widehat{\Sigma}_w$ becomes invertible, which is a requirement for~\eqref{optim_prob_zero_mean}. In practice, we can use~\eqref{eqn:control_approximation} for small $t$ and switch to~\eqref{optim_prob_zero_mean} as $t$ gets larger.
\end{remark}

A more refined comparison is given in the following result.
\begin{lemma}
\label{lemma:less_conservative_alt}
Problem \eqref{optim_prob_zero_mean} is less conservative than \eqref{eqn:control_approximation} if
\begin{align}
& \frac{t}{t-n-1} H_{t+1}^i \widehat{\Sigma}_w (H_{t+1}^i)^\top \notag\\ & \quad  < 2 H_{t+1}^i \Bigg(\widehat{\Sigma}_w + \sigma \sqrt{\frac{2n(n+1)}{\delta t}}\Bigg) (H_{t+1}^i)^\top, \quad \forall i. \label{eqn:less_conservative_condition_alt} 
\end{align}
In particular, \eqref{optim_prob_zero_mean} is less conservative than \eqref{eqn:control_approximation} $\forall t > 2(n+1)$.

\end{lemma}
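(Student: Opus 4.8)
The plan is to compare the two problems via their computationally-friendly reformulations of the form \eqref{optim_prob_reformulation_zero_mean}. Both \eqref{optim_prob_zero_mean} and \eqref{eqn:control_approximation} have the same objective $\mathbf{d}(u,\bar{u}_t)$ and, by Lemma~\ref{lemma:robust}, feasible sets of the form $\{u : H_{t+1}(f(x_t)+g(x_t)u) \leq h_{t+1}-e_{t+1}\}$; the only difference is the tightening vector $e_{t+1}$, which is given by \eqref{eqn:e_vector_zero_mean} for \eqref{optim_prob_zero_mean} and by \eqref{eqn:e_vector_alternative_zero_mean} for \eqref{eqn:control_approximation}. Since $h_{t+1}-e_{t+1}$ is componentwise decreasing in $e_{t+1}$, the feasible set of \eqref{optim_prob_zero_mean} contains that of \eqref{eqn:control_approximation} --- i.e.\ \eqref{optim_prob_zero_mean} is less conservative --- whenever the $i$-th entry of the tightening in \eqref{eqn:e_vector_zero_mean} is no larger than that in \eqref{eqn:e_vector_alternative_zero_mean}, for every $i$. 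So the first step is simply to write down this componentwise inequality.

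Second, I would square both sides (both entries are nonnegative) and use the identity $\|M^{1/2}v\|_2^2 = v^\top M v$, which turns the componentwise inequality into
\[
\frac{tn}{(t-n-1)\delta}\, H_{t+1}^i \widehat{\Sigma}_w (H_{t+1}^i)^\top \;<\; \frac{2n}{\delta}\, H_{t+1}^i\!\left(\widehat{\Sigma}_w + \sigma\sqrt{\tfrac{2n(n+1)}{\delta t}}\, I\right)\!(H_{t+1}^i)^\top .
\]
Cancelling the common positive factor $n/\delta$ from both sides yields exactly \eqref{eqn:less_conservative_condition_alt}, which proves the first claim.

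Third, for the ``in particular'' part (which implicitly uses Assumption~\ref{assumption:Sigma_w}, since \eqref{eqn:control_approximation} involves $\sigma$): the inequality $t > 2(n+1)$ is equivalent, after noting $t-n-1>0$ in this regime, to $t/(t-n-1) < 2$. Since $\widehat{\Sigma}_w \succeq 0$ we have $H_{t+1}^i \widehat{\Sigma}_w (H_{t+1}^i)^\top \geq 0$, hence $\tfrac{t}{t-n-1} H_{t+1}^i \widehat{\Sigma}_w (H_{t+1}^i)^\top \leq 2 H_{t+1}^i \widehat{\Sigma}_w (H_{t+1}^i)^\top$; and because $\sigma\sqrt{2n(n+1)/(\delta t)}>0$, the latter is strictly less than $2 H_{t+1}^i\bigl(\widehat{\Sigma}_w + \sigma\sqrt{2n(n+1)/(\delta t)}\,I\bigr)(H_{t+1}^i)^\top$ for every row with $H_{t+1}^i \neq 0$ (rows with $H_{t+1}^i = 0$ give a vacuous safety constraint and can be ignored). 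Hence \eqref{eqn:less_conservative_condition_alt} holds for all $i$ whenever $t > 2(n+1)$, as claimed.

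The computation itself is routine; the one point that warrants care is the opening reduction --- verifying that ``less conservative'' really is captured by the componentwise comparison of the two tightening vectors, i.e.\ that the two optimization problems share the same objective and differ only through $e_{t+1}$, so that a smaller tightening yields a superset of admissible controls and therefore an optimal value no larger than that of the other problem. The only other thing to keep honest is strictness together with the degenerate rows $H_{t+1}^i = 0$, where the stated strict inequality fails but the corresponding safety constraint is trivially satisfied.
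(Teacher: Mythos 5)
Your proposal matches the paper's own proof: both compare the tightened reformulations via the vectors in \eqref{eqn:e_vector_zero_mean} and \eqref{eqn:e_vector_alternative_zero_mean}, square and cancel the common factor $n/\delta$ to obtain \eqref{eqn:less_conservative_condition_alt}, and then note that $t/(t-n-1)<2$, i.e.\ $t>2(n+1)$, suffices. Your extra remark about degenerate rows $H_{t+1}^i=0$ is a minor refinement the paper glosses over, but the argument is essentially identical.
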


\begin{proof}
By looking at the reformulated versions, we see from \eqref{eqn:e_vector_zero_mean} and \eqref{eqn:e_vector_alternative_zero_mean} that problem \eqref{optim_prob_zero_mean} will be less conservative than problem \eqref{eqn:control_approximation} if
\begin{align*}
&\frac{tn}{(t-n-1)\delta}\left\| \widehat{\Sigma}_w^{1/2} (H_{t+1}^i)^\top \right\|^2_2 \\ 
& \quad < \frac{2n}{\delta}\Bigg\| \Bigg(\widehat{\Sigma}_w + \sigma \sqrt{\frac{2n(n+1)}{\delta t}}I \Bigg)^{1/2}(H_{t+1}^i)^\top \Bigg\|^2_2, \forall i,
\end{align*}
which is easily seen to be equivalent to \eqref{eqn:less_conservative_condition_alt}.

Furthermore, as 
$$2 H_{t+1}^i \widehat{\Sigma}_w  (H_{t+1}^i)^\top < 2 H_{t+1}^i \Bigg(\widehat{\Sigma}_w + \sigma \sqrt{\frac{2n(n+1)}{\delta t}}\Bigg) (H_{t+1}^i)^\top, $$
condition \eqref{eqn:less_conservative_condition_alt} will be satisfied if $t/(t-n-1) < 2$, or equivalently $ t > 2(n+1).$
\end{proof}

\begin{remark}
\label{remark:less_conservative_alt}
While \eqref{eqn:less_conservative_condition_alt} is a sufficient condition, in general it is not always clear which of the problems \eqref{optim_prob_zero_mean} and  \eqref{eqn:control_approximation} is less conservative. This is because one could  have situations where \eqref{eqn:less_conservative_condition_alt} is only satisfied for some components $i$,  with the inequality going in the opposite direction for the other components. 
\end{remark}

A similar condition to \eqref{eqn:less_conservative_condition_alt} can also be derived to test whether problem \eqref{optim_prob2_chebyshev_zero_mean} is less conservative than problem \eqref{eqn:control_approximation}.

\subsubsection{Adaptive selection of optimization problems}
We have presented a number of different optimization problems such as \eqref{optim_prob_zero_mean},  \eqref{optim_prob2_chebyshev_zero_mean}, and \eqref{eqn:control_approximation}, which have different ranges of validity and conservativeness. 
We note that it is possible to switch between these problems at different times, depending on whichever is less conservative. 
One possible procedure is summarized as Algorithm \ref{alg:safe_learning_zero_mean}. 
Note that Algorithm \ref{alg:safe_learning_zero_mean} as written only solves problem \eqref{eqn:control_approximation} for times $t \leq n+1$. By Lemma \ref{lemma:less_conservative_alt}, one could also compare problems  \eqref{eqn:control_approximation} and \eqref{optim_prob_zero_mean} at later times (e.g. until $t > 2(n+1)$), although as mentioned in Remark \ref{remark:less_conservative_alt} a clear-cut comparison may not always be straightforward. 

\begin{algorithm}[t]
\caption{Safe learning and control for zero mean Gaussian uncertainties}
\label{alg:safe_learning_zero_mean}
\begin{algorithmic}[1]
\For{$t = 1, 2, \dots$}
    \If{$t \leq n+1$}
        \State Solve problem \eqref{eqn:control_approximation} via \eqref{optim_prob_reformulation_zero_mean} and \eqref{eqn:e_vector_alternative_zero_mean}
    \ElsIf{$t=n+2$ or $t=n+3$}
        \State Solve problem \eqref{optim_prob_zero_mean} via \eqref{optim_prob_reformulation_zero_mean} and \eqref{eqn:e_vector_zero_mean}
    \Else
	    \If{ $2(t-1)/(t-n-3) < n(1-\delta)^2/\delta$ }
	        \State Solve problem \eqref{optim_prob2_chebyshev_zero_mean} via \eqref{optim_prob_reformulation_zero_mean} and \eqref{eqn:e_vector_chebyshev_zero_mean}
	     \Else
	        \State Solve problem \eqref{optim_prob_zero_mean} via \eqref{optim_prob_reformulation_zero_mean} and \eqref{eqn:e_vector_zero_mean}
	   \EndIf
	\EndIf
\EndFor
%\EndFor
\end{algorithmic}
\end{algorithm}

\subsection{Noisy state measurements}
%\is[inline]{As we are considering a linear setup can't we just assume that we use an estimate of x instead of calling it a measurement?}
In this subsection, we investigate the effect of noisy statement measures of the form:
\begin{align*}
    y_k=x_k+v_k,
\end{align*}
where $v_k$ is a zero mean noise with potentially unknown variance $\Sigma_v$. In this case, we will restrict ourselves to linear models in \eqref{eqn:control_affine_model}.

\begin{assumption} \label{assumption:linear}
$f(x)=Ax$ and $g(x)=B$. 
\end{assumption}

For $t\geq 2$ and under Assumption~\ref{assumption:linear}, we can construct the empirical estimate of the covariance using the noisy measurements according to
\begin{align*}
    \widehat{\Sigma}_w:=
    \frac{1}{\lfloor \frac{t}{2}\rfloor} \sum_{k=0}^{\lfloor \frac{t}{2}\rfloor-1} (y_{2k+1}- \widehat{y}_{2k+1})(y_{2k+1} - \widehat{y}_{2k+1})^\top,
\end{align*}
where 
$
\widehat{y}_{2k+1} 
% = f(y_{2k}) + g(y_{2k})u_{2k}
= Ax_{2k} + Bu_{2k} +Av_{2k},
$
and, as a result, 
\begin{align*}
    \widehat{\Sigma}_w
    :=&
    % \frac{1}{\lfloor t/2\rfloor)}\sum_{k=0}^{\lfloor t/2\rfloor-1} (x_{2k+1}- Ax_{2k}+v_{2k+1}-Av_{2k}-Bu_{2k})\\
    % &\hspace{.5in}\times(x_{2k+1}- Ax_{2k}+v_{2k+1}-Av_{2k}-Bu_{2k})^\top\\
    %=&
    \frac{1}{\lfloor \frac{t}{2}\rfloor}\!\!\! \sum_{k=0}^{\lfloor \frac{t}{2}\rfloor-1}\!\!\! (w_{2k}\!+\!v_{2k+1}\!-\!Av_{2k})(w_{2k} \!+\!v_{2k+1}\!-\!Av_{2k})^{\!\top}\!.
\end{align*}
%\is[inline]{missing $v_{k+1}$? Then everything is fucked. Define $\zeta_k = w_k + v_{k+1} - A v_k$. We need the independence of $\widehat{\Sigma}_w$ and $\zeta$.}
Note that $\{w_{2k}+v_{2k+1}-Av_{2k}\}_{k=0}^{\lfloor {t}/{2}\rfloor-1}$ is a sequence of independently distributed Gaussian random variables with zero mean and covariance matrix $\Sigma_w+\Sigma_v+A\Sigma_vA^\top$. Hence, $\widehat{\Sigma}_w$ has Wishart distribution $\mathcal{W}_n(\lfloor \frac{t}{2}\rfloor^{-1}(\Sigma_w+\Sigma_v+A\Sigma_vA),\lfloor \frac{t}{2}\rfloor)$.
 This implies that $\widehat{\Sigma}_w$ is a biased estimate of $\Sigma_w$ because
$\mathbb{E}\{\widehat{\Sigma}_w\}=\Sigma_w+\Sigma_v+A\Sigma_vA.$

For the noisy-measurement case, we have the following optimization problem with safety constraints which is to be solved at each  $t > 2n+3 $:
\begin{subequations} \label{optim_prob_zero_mean_noisy_measurement}
\begin{align}
    u_t=&\argmin_{u}  \quad \mathbf{d}(u,\bar{u}_t),\\
    &\quad\, \mathrm{s.t.}  \quad H_{t+1}(Ay_t + Bu +\widecheck{w})\leq h_{t+1}, \\
    & \quad \quad \quad \quad \forall \widecheck{w}: \widecheck{w}^\top \widehat{\Sigma}_w^{-1} \widecheck{w} \leq \frac{\lfloor \frac{t}{2}\rfloor n}{(\lfloor \frac{t}{2}\rfloor-n-1)\delta}.
\end{align}
\end{subequations}

\begin{theorem} \label{thm:safety_prob_noisy_measurement}
Assume that problem~\eqref{optim_prob_zero_mean_noisy_measurement} is feasible and Assumption~\ref{assumption:linear} holds. Then, if we implement the control action $u_t$, $x_{t+1}$ is safe with probability of at least $1-\delta$.
\end{theorem}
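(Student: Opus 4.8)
The plan is to mirror the proof of Theorem~\ref{thm:safety_prob}, accounting for the fact that with noisy measurements the true state recursion is $x_{t+1}=Ax_t+Bu_t+w_t$, but the optimization in~\eqref{optim_prob_zero_mean_noisy_measurement} is written in terms of the measured state $y_t=x_t+v_t$. First I would expand $x_{t+1}$ in terms of the quantities appearing in the constraint: write $Ax_t = Ay_t - Av_t$, so that $x_{t+1} = Ay_t + Bu_t + (w_t - Av_t)$. Hence the effective noise seen at the propagation step, call it $\widecheck{w}_t := w_t - Av_t$, is a zero-mean Gaussian with covariance $\Sigma_w + A\Sigma_v A^\top$. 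If $u_t$ is feasible for~\eqref{optim_prob_zero_mean_noisy_measurement}, then $H_{t+1}(Ay_t+Bu_t+\widecheck{w})\le h_{t+1}$ for every $\widecheck{w}$ in the ellipsoid $\widecheck{w}^\top\widehat{\Sigma}_w^{-1}\widecheck{w}\le \lfloor t/2\rfloor n/((\lfloor t/2\rfloor-n-1)\delta)$, and therefore
\begin{align*}
    \mathbb{P}\{H_{t+1}x_{t+1}\le h_{t+1}\} \ge \mathbb{P}\left\{\widecheck{w}_t^\top\widehat{\Sigma}_w^{-1}\widecheck{w}_t \le \frac{\lfloor t/2\rfloor n}{(\lfloor t/2\rfloor - n-1)\delta}\right\}.
\end{align*}

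Next I would apply Markov's inequality exactly as in~\eqref{eqn:expectation1_0}, reducing the problem to bounding $\mathbb{E}\{\widecheck{w}_t^\top \widehat{\Sigma}_w^{-1}\widecheck{w}_t\} = \trace(\mathbb{E}\{\widecheck{w}_t\widecheck{w}_t^\top\}\,\mathbb{E}\{\widehat{\Sigma}_w^{-1}\})$. The key independence observation is that $\widehat{\Sigma}_w$ is built only from the \emph{odd}-indexed residuals $y_{2k+1}-\widehat{y}_{2k+1} = w_{2k}+v_{2k+1}-Av_{2k}$ for $k=0,\dots,\lfloor t/2\rfloor-1$, none of which involve $w_t$ or $v_t$ when $t$ is such that the index $t$ is not among $\{0,2,4,\dots\}\cup\{1,3,5,\dots\}$ used — more carefully, the residuals use indices up to $2\lfloor t/2\rfloor-1 \le t-1 < t$, so $\widecheck{w}_t=w_t-Av_t$ is independent of $\widehat{\Sigma}_w$. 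Then $\mathbb{E}\{\widecheck{w}_t\widecheck{w}_t^\top\} = \Sigma_w + A\Sigma_v A^\top$, and by the inverted-Wishart mean formula (as in~\eqref{eqn:expectation1_2}, using $\widehat{\Sigma}_w\sim\mathcal{W}_n(\lfloor t/2\rfloor^{-1}(\Sigma_w+\Sigma_v+A\Sigma_v A^\top),\lfloor t/2\rfloor)$),
\begin{align*}
    \mathbb{E}\{\widehat{\Sigma}_w^{-1}\} = \frac{\lfloor t/2\rfloor}{\lfloor t/2\rfloor - n - 1}\,(\Sigma_w+\Sigma_v+A\Sigma_v A^\top)^{-1}.
\end{align*}
The main obstacle — and the reason the statement needs care — is that $\mathbb{E}\{\widecheck{w}_t\widecheck{w}_t^\top\} = \Sigma_w+A\Sigma_v A^\top$ does \emph{not} equal $\Sigma_w+\Sigma_v+A\Sigma_v A^\top$, the scale of the Wishart; the extra $\Sigma_v$ term (coming from the $v_{2k+1}$ in the odd residuals, which is absent from $\widecheck{w}_t$) means the trace does not collapse to $\trace(I)$. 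However, since $\Sigma_w+A\Sigma_v A^\top \preceq \Sigma_w+\Sigma_v+A\Sigma_v A^\top$, we get $\trace\big((\Sigma_w+A\Sigma_v A^\top)(\Sigma_w+\Sigma_v+A\Sigma_v A^\top)^{-1}\big)\le \trace(I)=n$, so
\begin{align*}
    \mathbb{E}\{\widecheck{w}_t^\top\widehat{\Sigma}_w^{-1}\widecheck{w}_t\} \le \frac{\lfloor t/2\rfloor n}{\lfloor t/2\rfloor - n - 1},
\end{align*}
and substituting this into the Markov bound gives $\mathbb{P}\{H_{t+1}x_{t+1}\le h_{t+1}\}\ge 1-\delta$, which completes the proof. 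The condition $t>2n+3$ guarantees $\lfloor t/2\rfloor > n+1$ so that $\widehat{\Sigma}_w$ is invertible and the inverted-Wishart mean exists. I would also note in passing that the same reasoning gives the computationally friendly reformulation via Lemma~\ref{lemma:robust}, with the tightening vector's $i$-th entry $\sqrt{\lfloor t/2\rfloor n/((\lfloor t/2\rfloor-n-1)\delta)}\,\|\widehat{\Sigma}_w^{1/2}(H_{t+1}^i)^\top\|_2$.
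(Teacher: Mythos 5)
Your proposal is correct and follows essentially the same route as the paper's own proof: rewrite $x_{t+1}=Ay_t+Bu_t+(w_t-Av_t)$, exploit independence of $w_t-Av_t$ from $\widehat{\Sigma}_w$ (which only involves indices up to $2\lfloor t/2\rfloor-1\le t-1$), apply Markov's inequality with the inverted-Wishart mean, and bound $\trace\bigl((\Sigma_w+A\Sigma_vA^\top)(\Sigma_w+\Sigma_v+A\Sigma_vA^\top)^{-1}\bigr)\le n$ via $\Sigma_w+A\Sigma_vA^\top\preceq\Sigma_w+\Sigma_v+A\Sigma_vA^\top$. The only difference is cosmetic: you make the trace inequality and the role of $t>2n+3$ explicit, which the paper leaves largely implicit.
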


\begin{proof} 
Note that $x_{k+1}=Ax_k+Bu_k+w_k=Ay_k+Bu_k+(w_k-Av_k)$. Therefore, we have
\begin{align*}
\mathbb{P}&\{H_{t+1} x_{t+1}\leq h_{t+1}\}  \\&\geq \mathbb{P}\left\{
     (w_t\!-\!Av_t)^\top \widehat{\Sigma}_w^{-1} (w_t\!-\!Av_t) \leq \frac{\lfloor \frac{t}{2}\rfloor n}{(\lfloor \frac{t}{2}\rfloor-n-1)\delta}
     \right\}\notag \\
     & \geq 1-\frac{\mathbb{E}\{ (w_t-Av_t)^\top \widehat{\Sigma}_w^{-1}  (w_t-Av_t)\}}{\lfloor \frac{t}{2}\rfloor n/((\lfloor \frac{t}{2}\rfloor-n-1)\delta)},
\end{align*} 
where, similar to the proof of Theorem~\ref{thm:safety_prob}, the last inequality follows from an application of Markov's inequality. Note that $\widehat{\Sigma}_w[t]^{-1}$ and $w_t-Av_t$ are independent because $\widehat{\Sigma}_w[t]^{-1}$ is only a function of $w_{0},w_2,\dots,w_{2\lfloor {t}/{2}\rfloor-2}$ and  $v_{0},v_1,v_2,\dots,v_{2\lfloor {t}/{2}\rfloor-1}$. Hence, we can compute
\begin{align*}
    \mathbb{E}\{ (w_t-&Av_t)^\top \widehat{\Sigma}_w[t]^{-1}  (w_t-Av_t)\} \\
 =& \trace ( \mathbb{E}\{ (w_t-Av_t)  (w_t-Av_t)^\top\} \mathbb{E}\{\widehat{\Sigma}_w[t]^{-1} \} ) \\
 =&\frac{\lfloor \frac{t}{2}\rfloor \trace((\Sigma_w+A\Sigma_vA)(\Sigma_w+\Sigma_v+A\Sigma_vA)^{-1})}{\lfloor \frac{t}{2}\rfloor-n-1}\\
 \leq &\frac{\lfloor \frac{t}{2}\rfloor n}{\lfloor \frac{t}{2}\rfloor-n-1}.
\end{align*}
This concludes the proof following a similar line of reasoning as in Theorem~\ref{thm:safety_prob}.
\end{proof}

Theorem~\ref{thm:safety_prob_noisy_measurement} shows that access to noiseless state measurements is without loss of generality if the system dynamics are linear, as we can lump the measurement noise and the process noise together. For  nonlinear dynamics, however, passing the effect of the measurement noise through the system dynamics can be challenging.

\subsection{Non-zero mean Gaussian uncertainties}
\label{sec:non_zero_mean}
The assumption of zero mean for the uncertainty $w_t$ may not be appropriate when the uncertainty is not purely random noise but a more systematic uncertainty.
In this section, we assume that $w_t$ is Gaussian with mean $\mu_w$ and covariance $\Sigma_w$, with $\mu_w \neq 0$ in general. We estimate $\mu_w$ and $\Sigma_w$ using 
\begin{align*}
    \widehat{\mu}_w:=&\frac{1}{t} \sum_{k=0}^{t-1} w_k = \frac{1}{t} \sum_{k=0}^{t-1} (x_{k+1} - \widehat{x}_{k+1})\\
    \widehat{\Sigma}_w:=&\frac{1}{t-1} \sum_{k=0}^{t-1} (w_k - \widehat{\mu}_w) (w_k - \widehat{\mu}_w)^\top\\
    =&\frac{1}{t-1} \sum_{k=0}^{t-1} (x_{k+1} - \widehat{x}_{k+1} - \widehat{\mu}_w)  (x_{k+1} - \widehat{x}_{k+1} - \widehat{\mu}_w)^\top,
\end{align*}
where $\widehat{x}_{k+1} = f(x_k) + g(x_k)u_k$.
In contrast to Section \ref{sec:zero_mean}, here $\widehat{\Sigma}_w$ is an unbiased estimate of $\Sigma_w$ when we divide by $t-1$ instead of $t$. This is because we are estimating both the mean and covariance simultaneously. We note that $\widehat{\Sigma}_w$ now has Wishart distribution $ \mathcal{W}_n((t-1)^{-1}\Sigma_w,t-1)$ \cite[Corollary~7.2.3]{anderson2003introduction}.

Letting $\bar{u}_t$ denote the nominal control input, consider now the following optimization problem:
\begin{subequations} \label{optim_prob_nonzero_mean}
\begin{align}
    u_t=&\argmin_{u}  \quad \mathbf{d}(u,\bar{u}_t),\\
    &\quad \,\mathrm{s.t.}   \quad H_{t+1}(f(x_t) + g(x_t) u +w)\leq h_{t+1}, \\
    & \quad \quad \quad \quad \forall w:
     (w-\widehat{\mu}_w)^\top \widehat{\Sigma}_w^{-1} (w-\widehat{\mu}_w) \notag\\
     &\hspace{1.3in}\leq \frac{(t+1)(t-1)n}{t(t-n-2) \delta},
\end{align}
\end{subequations}
which is to be solved at each $t > n+2$.

\begin{theorem}
\label{thm:safety_prob_nonzero_mean}
Assume that problem~\eqref{optim_prob_nonzero_mean} is feasible. Then, if we implement the control action $u_t$, $x_{t+1}$ is safe with probability of at least $1-\delta$.
\end{theorem}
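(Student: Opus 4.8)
The plan is to mimic the proof of Theorem~\ref{thm:safety_prob}, replacing the zero-mean concentration argument with one centered at the estimated mean $\widehat{\mu}_w$. First I would note that if $w=w_t$ satisfies the sampled constraint $(w-\widehat{\mu}_w)^\top \widehat{\Sigma}_w^{-1}(w-\widehat{\mu}_w) \leq (t+1)(t-1)n/(t(t-n-2)\delta)$, then feasibility of~\eqref{optim_prob_nonzero_mean} guarantees $H_{t+1}x_{t+1}\leq h_{t+1}$; hence
\begin{align*}
\mathbb{P}\{H_{t+1}x_{t+1}\leq h_{t+1}\} &\geq \mathbb{P}\Big\{(w_t-\widehat{\mu}_w)^\top \widehat{\Sigma}_w^{-1}(w_t-\widehat{\mu}_w) \leq \tfrac{(t+1)(t-1)n}{t(t-n-2)\delta}\Big\}\\
&\geq 1-\frac{\mathbb{E}\{(w_t-\widehat{\mu}_w)^\top \widehat{\Sigma}_w^{-1}(w_t-\widehat{\mu}_w)\}}{(t+1)(t-1)n/(t(t-n-2)\delta)},
\end{align*}
by Markov's inequality. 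It then remains to show the expectation equals $(t+1)(t-1)n/(t(t-n-2))$, which substituted above yields the $1-\delta$ bound.

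The core computation is evaluating $\mathbb{E}\{(w_t-\widehat{\mu}_w)^\top \widehat{\Sigma}_w^{-1}(w_t-\widehat{\mu}_w)\}$. The key structural fact is that $w_t$ is independent of both $\widehat{\mu}_w$ and $\widehat{\Sigma}_w$ (which depend only on $w_0,\dots,w_{t-1}$), and moreover that $\widehat{\mu}_w$ and $\widehat{\Sigma}_w$ are themselves independent (a standard property of the Gaussian sample mean and sample covariance, cf.\ \cite[Corollary~7.2.3]{anderson2003introduction}). Writing $w_t-\widehat{\mu}_w$ and using $\mathbb{E}\{w_t\}=\mu_w$, $\mathbb{E}\{\widehat{\mu}_w\}=\mu_w$, $\mathrm{Cov}(w_t)=\Sigma_w$, $\mathrm{Cov}(\widehat{\mu}_w)=\Sigma_w/t$, and independence of $w_t$ and $\widehat{\mu}_w$, I get $\mathbb{E}\{(w_t-\widehat{\mu}_w)(w_t-\widehat{\mu}_w)^\top\} = \Sigma_w + \Sigma_w/t = \frac{t+1}{t}\Sigma_w$. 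Then, using the trace trick and independence from $\widehat{\Sigma}_w$,
\begin{align*}
\mathbb{E}\{(w_t-\widehat{\mu}_w)^\top \widehat{\Sigma}_w^{-1}(w_t-\widehat{\mu}_w)\} &= \trace\big(\mathbb{E}\{(w_t-\widehat{\mu}_w)(w_t-\widehat{\mu}_w)^\top\}\,\mathbb{E}\{\widehat{\Sigma}_w^{-1}\}\big)\\
&= \trace\Big(\tfrac{t+1}{t}\Sigma_w \cdot \tfrac{t-1}{t-n-2}\Sigma_w^{-1}\Big) = \frac{(t+1)(t-1)n}{t(t-n-2)},
\end{align*}
where $\mathbb{E}\{\widehat{\Sigma}_w^{-1}\} = \frac{t-1}{t-n-2}\Sigma_w^{-1}$ follows from the inverse-Wishart expectation \cite[Lemma~7.7.2]{anderson2003introduction} applied to $\widehat{\Sigma}_w \sim \mathcal{W}_n((t-1)^{-1}\Sigma_w, t-1)$ (requiring $t-1 > n+1$, i.e.\ $t > n+2$, consistent with the stated range). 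Substituting gives the claimed bound.

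The main obstacle, and the step requiring the most care, is justifying the cross-term in $\mathbb{E}\{(w_t-\widehat{\mu}_w)(w_t-\widehat{\mu}_w)^\top\}$: I must confirm $\mathbb{E}\{w_t\widehat{\mu}_w^\top\} = \mu_w\mu_w^\top$, which is where independence of $w_t$ from the past samples is essential, and the fact that the estimate $\widehat{\Sigma}_w$ (built from centered residuals $w_k - \widehat{\mu}_w$) is still independent of $\widehat{\mu}_w$ so that the two expectations factor cleanly. Once that is in place, the rest is the same trace/Wishart bookkeeping as in Theorem~\ref{thm:safety_prob}, with the only arithmetic difference being the extra $(t+1)/t$ factor from estimating the mean and the shift $t \mapsto t-1$ in the Wishart degrees of freedom. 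I would close by remarking that, exactly as in the zero-mean case, Lemma~\ref{lemma:robust} recasts~\eqref{optim_prob_nonzero_mean} into the tightened-constraint form~\eqref{optim_prob_reformulation_zero_mean}, now with an additional $H_{t+1}\widehat{\mu}_w$ term shifting $h_{t+1}$ and $e_{t+1}^i = \sqrt{(t+1)(t-1)n/(t(t-n-2)\delta)}\,\|\widehat{\Sigma}_w^{1/2}(H_{t+1}^i)^\top\|_2$.
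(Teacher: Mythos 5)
Your proposal is correct and follows essentially the same route as the paper's proof: Markov's inequality on the quadratic form $(w_t-\widehat{\mu}_w)^\top \widehat{\Sigma}_w^{-1}(w_t-\widehat{\mu}_w)$, the trace trick with factoring of expectations justified by independence of $w_t$ from the past samples together with the Gaussian sample mean/covariance independence, the covariance $(1+1/t)\Sigma_w$ of $w_t-\widehat{\mu}_w$, and the inverse-Wishart expectation $\mathbb{E}\{\widehat{\Sigma}_w^{-1}\}=\frac{t-1}{t-n-2}\Sigma_w^{-1}$. Your concluding remark on the tightened reformulation also matches the paper's \eqref{optim_prob_reformulation_nonzero_mean}--\eqref{eqn:e_vector_nonzero_mean}.
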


\begin{proof}
We now have 
\begin{align}
     \mathbb{P}\{&H_{t+1} x_{t+1}\leq h_{t+1}\}\notag \\& \geq \mathbb{P}\left\{
      (w-\widehat{\mu}_w)^\top \widehat{\Sigma}_w^{-1}  (w-\widehat{\mu}_w) \leq \frac{(t+1)(t-1)n}{t(t-n-2) \delta}
     \right\}\notag \\
     & \geq 1-\frac{\mathbb{E}\{(w-\widehat{\mu}_w)^\top \widehat{\Sigma}_w^{-1} (w-\widehat{\mu}_w)\}}{(t+1)(t-1)n/(t(t-n-2)\delta)}, \label{eqn:expectation2_0}
\end{align} 
and
\begin{subequations}
\begin{align}
     \mathbb{E}\{(&w-\widehat{\mu}_w)^\top \widehat{\Sigma}_w^{-1} (w-\widehat{\mu}_w)\} \notag\\& = \trace \mathbb{E}\{(w-\widehat{\mu}_w) (w-\widehat{\mu}_w)^\top \widehat{\Sigma}_w^{-1} \} \notag\\
& = \trace \left( \mathbb{E}\{(w-\widehat{\mu}_w) (w-\widehat{\mu}_w)^\top\} \mathbb{E}\{\widehat{\Sigma}_w^{-1} \} \right) \label{eqn:expectation2}\\
& = \trace \left((1+\frac{1}{t}) \Sigma_w \times \frac{t-1}{t-n-2} \Sigma_w^{-1} \right) \label{eqn:expectation2_1} \\
%& =  \frac{(t+1)(t-1)}{t(t-n-2)} \trace(I) \\
& = \frac{(t+1)(t-1) n}{t(t-n-2)}.  \label{eqn:expectation2_2}
\end{align}
\end{subequations}
The equality in \eqref{eqn:expectation2} follows from independence of $w$ and $ \widehat{\Sigma}_w^{-1}$, together with the property that the sample mean and sample covariance are independent for Gaussian random vectors \cite[Theorem 3.3.2]{anderson2003introduction}. The expression for $\mathbb{E}\{\widehat{\Sigma}_w^{-1} \}$ in the equality in \eqref{eqn:expectation2_1} uses results for the expectation of an inverted Wishart distribution, while the expression for $ \mathbb{E}\{(w-\widehat{\mu}_w) (w-\widehat{\mu}_w)^\top\}$ follows since $w -\widehat{\mu}_w$ has covariance $\Sigma_w + ({1}/{t})\Sigma_w$ when $w$ and $\widehat{\mu}_w$ are independent. 

The result $ \mathbb{P}\{H_{t+1} x_{t+1}\leq h_{t+1}\} \geq 1 - \delta$ follows from substituting~\eqref{eqn:expectation2_2} in~\eqref{eqn:expectation2_0}.
\end{proof}
 
Using Lemma \ref{lemma:robust}, problem (\ref{optim_prob_nonzero_mean}) is reformulated as:
\begin{subequations} \label{optim_prob_reformulation_nonzero_mean}
\begin{align}
    u_t=&\argmin_{u}  \quad \mathbf{d}(u,\bar{u}_t),\\
    &\quad\, \mathrm{s.t.} \quad H_{t+1}(f(x_t) + g(x_t)u + \widehat{\mu}_w )\leq h_{t+1}- \widetilde{e}_{t+1}, 
\end{align}
\end{subequations}
where $\widetilde{e}_{t+1}$ is a vector whose $i$-th entry is equal to
\begin{equation}
\label{eqn:e_vector_nonzero_mean}
    \widetilde{e}_{t+1}^i =\sqrt{\frac{(t+1)(t-1)n}{t(t-n-2)\delta}}\left\| \widehat{\Sigma}_w^{1/2} (H_{t+1}^i)^\top \right\|_2.
\end{equation}
%and $H_{t+1}^i$ denotes the $i$-th row of the matrix $H_{t+1}$. 

\subsubsection{Safety bounds using Chebyshev's inequality}
We can again derive alternative safety bounds using Chebyshev's inequality. In this case we consider the following problem:
\begin{subequations} \label{optim_prob_chebyshev_nonzero_mean}
\begin{align}
    u_t=&\argmin_{u}  \quad \mathbf{d}(u,\bar{u}_t),\\
    &\quad\, \mathrm{s.t.} \quad H_{t+1}(f(x_t) + g(x_t)u +w)\leq h_{t+1}, \\
    & \quad \quad \forall w:
     \Big|( w-\widehat{\mu}_w)^\top \widehat{\Sigma}_w^{-1}  (w-\widehat{\mu}_w) - \frac{(t+1)(t-1)n}{t(t-n-2)} \Big| \notag \\ & \quad \quad \quad \quad \leq \sqrt{\frac{2  (t+1)^2 (t-1)^2 (t-2) n}{t^2 (t-n-2)^2 (t-n-4)\delta} }, \label{eqn:w_condition_cheybshev_nonzero_mean}
\end{align}
\end{subequations}
which is to be solved at each $t > n+4$.

\begin{theorem} \label{tho:optim_prob_chebyshev_nonzero_mean}
Assume that problem~\eqref{optim_prob_chebyshev_nonzero_mean} is feasible. Then, if we implement the control action $u_t$, $x_{t+1}$ is safe with probability of at least $1-\delta$.
\end{theorem}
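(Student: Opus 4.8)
The plan is to mirror the proof of Theorem~\ref{thm:safety_prob_chebyshev} essentially line by line, with the zero-mean quadratic form $w^\top\widehat\Sigma_w^{-1}w$ replaced by the centered form $q:=(w-\widehat\mu_w)^\top\widehat\Sigma_w^{-1}(w-\widehat\mu_w)$ and with the Wishart and Gaussian scalings updated to the non-zero-mean estimators of Section~\ref{sec:non_zero_mean}. Write $E:=\tfrac{(t+1)(t-1)n}{t(t-n-2)}$ and $V:=\tfrac{2(t+1)^2(t-1)^2(t-2)n}{t^2(t-n-2)^2(t-n-4)}$ for the two threshold quantities hard-wired into constraint~\eqref{eqn:w_condition_cheybshev_nonzero_mean}. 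Feasibility of~\eqref{optim_prob_chebyshev_nonzero_mean} means that every $w$ with $|q-E|\le\sqrt{V/\delta}$ yields $H_{t+1}(f(x_t)+g(x_t)u_t+w)\le h_{t+1}$; since the realized noise is $w=w_t$,
\[
\mathbb{P}\{H_{t+1}x_{t+1}\le h_{t+1}\}\ \ge\ \mathbb{P}\bigl\{\,|q-E|\le\sqrt{V/\delta}\,\bigr\}.
\]
From the proof of Theorem~\ref{thm:safety_prob_nonzero_mean} we already have $\mathbb{E}\{q\}=E$, so applying Chebyshev's inequality to $q$ lower-bounds the right-hand side by $1-\delta\,\mathrm{Var}\{q\}/V$, and it remains only to show $\mathrm{Var}\{q\}=V$.

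For the variance I would invoke Lemma~\ref{lemma:variance_term}. As recalled in the proof of Theorem~\ref{thm:safety_prob_nonzero_mean}, $v:=w-\widehat\mu_w\sim\mathcal N\bigl(0,\tfrac{t+1}{t}\Sigma_w\bigr)$ (by independence of $w=w_t$ and $\widehat\mu_w$), while $\widehat\Sigma_w\sim\mathcal W_n\bigl((t-1)^{-1}\Sigma_w,\,t-1\bigr)$, and $v$ and $\widehat\Sigma_w$ are independent because $\widehat\Sigma_w$ is a function of $w_0,\dots,w_{t-1}$ and the sample mean and sample covariance are independent for Gaussian samples (\cite[Theorem~3.3.2]{anderson2003introduction}). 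Hence Lemma~\ref{lemma:variance_term} applies with $s_1=\tfrac{t+1}{t}$, $s_2=\tfrac{1}{t-1}$, $\tau=t-1$; note that its hypothesis $\tau>n+3$ becomes exactly the standing requirement $t>n+4$. This gives
\[
\mathbb{E}\{q^2\}=\frac{(t+1)^2(t-1)^2}{t^2}\,\frac{n(n+2)}{(t-n-2)(t-n-4)},
\]
and subtracting $E^2=\tfrac{(t+1)^2(t-1)^2n^2}{t^2(t-n-2)^2}$, a short simplification — factoring out $\tfrac{(t+1)^2(t-1)^2 n}{t^2(t-n-2)}$ and using the identity $(n+2)(t-n-2)-n(t-n-4)=2(t-2)$ — collapses $\mathrm{Var}\{q\}$ to exactly $V$.

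Plugging $\mathrm{Var}\{q\}=V$ into the Chebyshev bound yields $\mathbb{P}\{H_{t+1}x_{t+1}\le h_{t+1}\}\ge1-\delta$, which is the claim. The steps needing care are bookkeeping rather than conceptual: matching the scaling constants $s_1,s_2,\tau$ to the $(t-1)$-normalized Wishart and the $\tfrac{t+1}{t}$-inflated Gaussian of the non-zero-mean setting, checking that the moment-existence threshold of Lemma~\ref{lemma:variance_term} lines up with $t>n+4$, and verifying the algebraic identity that makes $\mathbb{E}\{q^2\}-E^2$ coincide with the radius-squared $V$ appearing in~\eqref{eqn:w_condition_cheybshev_nonzero_mean}. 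I expect that last algebraic collapse to be the main — though still routine — obstacle.
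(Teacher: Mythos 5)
Your proposal is correct and follows essentially the same route as the paper: Chebyshev's inequality applied to $(w-\widehat\mu_w)^\top\widehat\Sigma_w^{-1}(w-\widehat\mu_w)$, the mean from \eqref{eqn:expectation2_2}, and the second moment from Lemma~\ref{lemma:variance_term} with $w-\widehat\mu_w\sim\mathcal N(0,(1+1/t)\Sigma_w)$ and $\widehat\Sigma_w\sim\mathcal W_n((t-1)^{-1}\Sigma_w,t-1)$, so that $\tau=t-1>n+3$ matches $t>n+4$. Your parameter identifications and the algebraic simplification of $\mathbb{E}\{q^2\}-E^2$ to $V$ check out exactly as in the paper's proof.
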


\begin{proof}
We have 
\begin{align*}
     &\mathbb{P}\{H_{t+1} x_{t+1}\leq h_{t+1}\} \\ & \geq \mathbb{P}\Bigg\{
      \bigg|( w-\widehat{\mu}_w)^\top \widehat{\Sigma}_w^{-1}  (w-\widehat{\mu}_w) - \frac{(t+1)(t-1)n}{t(t-n-2)} \bigg| \\ & \quad \quad \quad\quad \leq \sqrt{\frac{2  (t+1)^2 (t-1)^2 (t-2) n}{t^2 (t-n-2)^2 (t-n-4)\delta} }
     \Bigg\} \\
     & \geq 1-\frac{t^2 (t-n-2)^2 (t-n-4)\delta}{2  (t+1)^2 (t-1)^2 (t-2) n} \\ & \quad \quad \quad \quad \times \textnormal{Var}\{( w-\widehat{\mu}_w)^\top \widehat{\Sigma}_w^{-1}  (w-\widehat{\mu}_w) \}
\end{align*} 
where the last inequality follows from \eqref{eqn:expectation2_2} and an application of Chebyshev's inequality. As $w-\widehat{\mu}_w \sim \mathcal{N}(0,(1+1/t) \Sigma_w)$ and $\widehat{\Sigma}_w \sim \mathcal{W}_n((t-1)^{-1} \Sigma_w,t-1)$, we can use Lemma \ref{lemma:variance_term} and \eqref{eqn:expectation2_2} to obtain
\begin{align*}
 & \textnormal{Var}\{( w-\widehat{\mu}_w)^\top \widehat{\Sigma}_w^{-1}  (w-\widehat{\mu}_w) \} \\&= \mathbb{E}\left\{\left(( w-\widehat{\mu}_w)^\top \widehat{\Sigma}_w^{-1}  (w-\widehat{\mu}_w)  \right)^2 \right\} \\ & \quad \quad - \left( \mathbb{E} \left\{ ( w-\widehat{\mu}_w)^\top \widehat{\Sigma}_w^{-1}  (w-\widehat{\mu}_w) \right\}\right)^2\\ 
 &= \frac{(t+1)^2(t-1)^2 n(n+2)}{t^2 (t-n-2)(t-n-4)} - \left( \frac{(t+1)(t-1)n}{t(t-n-2)}\right)^2\\
 &=\frac{2  (t+1)^2 (t-1)^2 (t-2) n}{t^2 (t-n-2)^2 (t-n-4)}.    
\end{align*}
Hence $\mathbb{P}\{H_{t+1} x_{t+1}\leq h_{t+1}\}\geq 1-\delta$.
\end{proof}

Let us now denote
\begin{subequations}
\label{eqn:E_V_non_zero_mean}
\begin{align}
\widetilde{E} & := \frac{(t+1)(t-1) n}{t(t-n-2)} \\
\widetilde{V} & :=  \frac{2  (t+1)^2(t-1)^2 (t-2) n}{t^2(t-n-2)^2(t-n-4)}
\end{align}
\end{subequations}
for the mean and variance of $(w - \widehat{\mu})^\top \widehat{\Sigma}_w^{-1} (w - \widehat{\mu})$.
Condition \eqref{eqn:w_condition_cheybshev_nonzero_mean} is equivalent to 
\begin{equation}
\label{eqn:w_condition_equiv_cheybshev_nonzero_mean}
     \forall w: \widetilde{E} - \sqrt{\widetilde{V}/\delta} \leq (w - \widehat{\mu})^\top \widehat{\Sigma}_w^{-1} (w - \widehat{\mu}) \leq \widetilde{E} + \sqrt{\widetilde{V}/\delta}.
\end{equation}
% \is[inline]{Proposition III.1 helps us here too.}
Using arguments similar to that of Theorem~\ref{thm:exact_relax}, instead of  \eqref{optim_prob_chebyshev_nonzero_mean}, we can formulate an equivalent optimization problem which ignores the first inequality in \eqref{eqn:w_condition_equiv_cheybshev_nonzero_mean}:
\begin{subequations} \label{optim_prob2_chebyshev_nonzero_mean}
\begin{align}
    u_t=&\argmin_{u}  \quad \mathbf{d}(u,\bar{u}_t),\\
    &\quad\, \mathrm{s.t.} \quad H_{t+1}(f(x_t) + g(x_t)u +w)\leq h_{t+1}, \\
    & \quad \quad \quad \quad \forall w:
     (w \!- \!\widehat{\mu})^\top \widehat{\Sigma}_w^{-1} (w \!-\! \widehat{\mu})  \leq \widetilde{E} \!+\! \sqrt{\frac{\widetilde{V}}{\delta}}, \label{eqn:w_condition2_cheybshev_nonzero_mean}
\end{align}
\end{subequations}
which is to be solved at each $t > n+4$, where $\widetilde{E}$ and $\widetilde{V}$ are given by \eqref{eqn:E_V_non_zero_mean}.
 Problem \eqref{optim_prob2_chebyshev_nonzero_mean} can be reformulated into the computationally efficient form \eqref{optim_prob_reformulation_nonzero_mean}, with  now
 \begin{equation}
 \label{eqn:e_vector_chebyshev_nonzero_mean}
  \widetilde{e}_{t+1}^i =\sqrt{\widetilde{E} + \sqrt{\widetilde{V}/\delta}}\left\| \widehat{\Sigma}_w^{1/2} (H_{t+1}^i)^\top \right\|_2. \end{equation}
  %unnecessary due to equivalence. --Iman
% We have the following results:
% \begin{corollary}
% Assume that problem~\eqref{optim_prob2_chebyshev_nonzero_mean} is feasible. Then, if we implement the control action $u_t$, $x_{t+1}$ is safe with probability of at least $1-\delta$.
% \end{corollary}

% \begin{proof}
% Similar to the proof of Corollary \ref{corollary:optim_prob2_chebyshev_safety}.
% \end{proof}

% \begin{theorem}
% If $\delta \leq 2/n$, then problems \eqref{optim_prob_chebyshev_nonzero_mean} and \eqref{optim_prob2_chebyshev_nonzero_mean} are equivalent $\forall t > n+4$.
% \end{theorem}

% \begin{proof}
% Similar to the proof of Theorem \ref{thm:optim_prob_chebyshev_equivalence}.
% \end{proof}

\begin{lemma}
Problem \eqref{optim_prob2_chebyshev_nonzero_mean} is less conservative than problem~\eqref{optim_prob_nonzero_mean} if
${2(t-2)}/({t-n-4}) < {n(1-\delta)^2}/{\delta}.$
In particular,  \eqref{optim_prob2_chebyshev_nonzero_mean} is less conservative than \eqref{optim_prob_nonzero_mean} in the following cases:\\
(i) for all $t > n+4$ if 
 %\label{eqn:chebyshev_better_markov_condition_all_t}
   $ \delta < ({2n + 3 - \sqrt{3(n+1)(n+3)}})/{n};
$\\
(ii) for sufficiently large $t$ if 
$
 %\label{eqn:chebyshev_better_markov_condition_large_t}
   \delta < ({n+1 - \sqrt{2n+1}})/{n} 
$.
\end{lemma}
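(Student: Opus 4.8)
The plan is to follow the proof of Lemma~\ref{lemma:chebyshev_markov_comparison} verbatim, substituting the non-zero-mean moments $\widetilde{E}$ and $\widetilde{V}$ of \eqref{eqn:E_V_non_zero_mean} for $E$ and $V$. First I would observe that problems \eqref{optim_prob_nonzero_mean} and \eqref{optim_prob2_chebyshev_nonzero_mean} robustify the same safety constraint against the same type of ellipsoidal set $\{w : (w-\widehat{\mu}_w)^\top \widehat{\Sigma}_w^{-1}(w-\widehat{\mu}_w)\le r\}$, differing only in the radius $r$: it is $\widetilde{E}/\delta$ in \eqref{optim_prob_nonzero_mean} and $\widetilde{E}+\sqrt{\widetilde{V}/\delta}$ in \eqref{optim_prob2_chebyshev_nonzero_mean}. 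Since the tightening vectors in the equivalent reformulations \eqref{eqn:e_vector_nonzero_mean} and \eqref{eqn:e_vector_chebyshev_nonzero_mean} are increasing in $\sqrt{r}$, problem \eqref{optim_prob2_chebyshev_nonzero_mean} is less conservative exactly when $\widetilde{E}+\sqrt{\widetilde{V}/\delta} < \widetilde{E}/\delta$.

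Next I would rearrange this as $\sqrt{\widetilde{V}/\delta} < \widetilde{E}(1-\delta)/\delta$; both sides are nonnegative (since $\widetilde{E}>0$ and $\delta<1$), so squaring yields the equivalent inequality $\widetilde{V}/\widetilde{E}^2 < (1-\delta)^2/\delta$. Plugging in \eqref{eqn:E_V_non_zero_mean}, the common factors $(t+1)^2(t-1)^2$, $t^2$, and $(t-n-2)^2$ cancel and leave $\widetilde{V}/\widetilde{E}^2 = 2(t-2)/\big(n(t-n-4)\big)$, so the condition becomes $2(t-2)/(t-n-4) < n(1-\delta)^2/\delta$, as claimed.

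For part (i) I would use that $t\mapsto (t-2)/(t-n-4) = 1 + (n+2)/(t-n-4)$ is strictly decreasing for $t>n+4$, hence the condition holds for every integer $t\ge n+5$ iff it holds at $t=n+5$, where the left-hand side equals $2(n+3)$; rearranging $2(n+3) < n(1-\delta)^2/\delta$ gives the quadratic $n\delta^2-(4n+6)\delta+n>0$, whose discriminant is $12(n+1)(n+3)$, so the operative bound is $\delta < (2n+3-\sqrt{3(n+1)(n+3)})/n$. For part (ii) I would let $t\to\infty$ so the left-hand side tends to $2$, giving $n\delta^2-(2n+2)\delta+n>0$ with discriminant $4(2n+1)$, hence $\delta < (n+1-\sqrt{2n+1})/n$.

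There is no real obstacle here; the argument is a direct transcription of Lemma~\ref{lemma:chebyshev_markov_comparison}. The only points requiring a moment's care are that the squaring step is a genuine equivalence (which holds because $\widetilde{E}(1-\delta)/\delta>0$ on the relevant range $t>n+2$, $\delta\in(0,1)$), and that for each quadratic one must select the correct root — a quick check shows the smaller root lies in $(0,1)$ while the larger exceeds $1$, so only the upper bound by the smaller root is binding for a probability level $\delta\in(0,1)$.
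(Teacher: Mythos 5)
Your proposal is correct and is exactly what the paper intends: its proof of this lemma is simply "Similar to the proof of Lemma~\ref{lemma:chebyshev_markov_comparison}," and your transcription with $\widetilde{E},\widetilde{V}$ in place of $E,V$ — yielding $\widetilde{V}/\widetilde{E}^2 = 2(t-2)/\bigl(n(t-n-4)\bigr)$, evaluating at $t=n+5$ for (i) and $t\to\infty$ for (ii), and selecting the smaller root of each quadratic — reproduces that argument faithfully, including the correct root selection.
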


\begin{proof}
Similar to the proof of Lemma \ref{lemma:chebyshev_markov_comparison}.
\end{proof}

\subsubsection{Safety bounds valid for all $t \geq 2$}

Under Assumption~\ref{assumption:Sigma_w}, we can provide an alternative robust optimization for maintaining safety while learning,  valid for all $t \geq 2$.

\begin{lemma} \label{lemma:emprical_mean} Under Assumption~\ref{assumption:Sigma_w}, for all $t\geq 1$, 
\begin{align}
    \mathbb{P}\left\{\|\mu_w-\widehat{\mu}_w\|_2^2 \leq \frac{n\sigma}{t\delta}\right\}
    &\geq 1-\delta.  \label{mu_hat_bound}
\end{align}
\end{lemma}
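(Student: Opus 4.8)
The plan is to exploit that $\widehat{\mu}_w = (1/t)\sum_{k=0}^{t-1} w_k$ is the empirical mean of $t$ independent copies of the Gaussian vector $w$, so that the estimation error $\mu_w - \widehat{\mu}_w$ is itself a zero-mean Gaussian vector with covariance $(1/t)\Sigma_w$. From here everything reduces to bounding the expected squared error and invoking a scalar Markov inequality, exactly in the spirit of the proof of Theorem~\ref{thm:safety_prob}.

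Concretely, I would proceed in three short steps. First, observe that $\mathbb{E}\{\|\mu_w - \widehat{\mu}_w\|_2^2\} = \trace\big(\mathrm{Cov}(\mu_w - \widehat{\mu}_w)\big) = \trace(\Sigma_w)/t$, using independence of the $w_k$ to compute the covariance of the sample mean. Second, use Assumption~\ref{assumption:Sigma_w}: since $\Sigma_w \preceq \sigma I$, monotonicity of the trace on positive semidefinite matrices gives $\trace(\Sigma_w) \leq \trace(\sigma I) = n\sigma$, hence $\mathbb{E}\{\|\mu_w - \widehat{\mu}_w\|_2^2\} \leq n\sigma/t$. Third, apply Markov's inequality to the nonnegative scalar random variable $\|\mu_w - \widehat{\mu}_w\|_2^2$ with threshold $n\sigma/(t\delta)$:
\begin{align*}
\mathbb{P}\left\{\|\mu_w - \widehat{\mu}_w\|_2^2 > \frac{n\sigma}{t\delta}\right\}
\leq \frac{\mathbb{E}\{\|\mu_w - \widehat{\mu}_w\|_2^2\}}{n\sigma/(t\delta)}
\leq \frac{n\sigma/t}{n\sigma/(t\delta)} = \delta,
\end{align*}
and take complements to obtain \eqref{mu_hat_bound}. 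Note the statement is claimed for all $t \geq 1$, which is fine because no invertibility of any empirical covariance is needed here, only the first moment of $\|\mu_w - \widehat{\mu}_w\|_2^2$, which always exists.

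There is essentially no serious obstacle in this argument; the only points requiring a little care are (a) justifying the covariance computation of the sample mean from the i.i.d.\ assumption on $w_t$ in \eqref{eqn:control_affine_model}, and (b) noting that the trace bound $\trace(\Sigma_w) \le n\sigma$ is the correct consequence of the Loewner-order assumption $\Sigma_w \preceq \sigma I$. One could also remark that Gaussianity is not actually needed for this particular lemma — only finite second moments and independence — mirroring the comment in the paper that many of the results rest on general concentration inequalities.
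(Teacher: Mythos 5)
Your proposal is correct and follows essentially the same route as the paper: compute $\mathbb{E}\{\|\mu_w-\widehat{\mu}_w\|_2^2\}=\trace(\Sigma_w)/t\leq n\sigma/t$ (the paper notes the sample-mean error is zero-mean Gaussian with covariance $t^{-1}\Sigma_w$, which is the same covariance computation) and then apply Markov's inequality with the threshold $n\sigma/(t\delta)$. Your added observation that Gaussianity is not needed here is accurate but does not change the argument.
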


\begin{proof} Note that $\widehat{\mu}_w-\mu_w$ is zero mean Gaussian with covariance $t^{-1}\Sigma_w$. Then 
    $\mathbb{E}\{\|\mu_w-\widehat{\mu}_w\|_2^2\} =\trace(\mathbb{E}\{(\mu_w-\widehat{\mu}_w)(\mu_w-\widehat{\mu}_w)^\top\}) =\trace(t^{-1}\Sigma_w) \leq {n\sigma}/{t}.$ 
From Markov's inequality, we get
\begin{align*}
    \mathbb{P}\left\{\|\mu_w-\widehat{\mu}_w\|_2^2 \leq \varepsilon\right\}
    &\geq 1-\frac{\mathbb{E}\{\|\mu_w-\widehat{\mu}_w\|_2^2\}}{\varepsilon} \geq 1-\frac{n\sigma}{t\varepsilon}.
\end{align*}
Selecting $\delta=n\sigma/(t\varepsilon)$ then gives (\ref{mu_hat_bound}). 
\end{proof}

\begin{lemma} \label{lemma:1_nonzero_mean} Under Assumption~\ref{assumption:Sigma_w}, for all $t\geq 2$, 
\begin{align}
    \mathbb{P}\left\{\Sigma_w\preceq \widehat{\Sigma}_w+ \sigma\sqrt{\frac{ n(n+1)}{\delta (t-1)}} I\right\}\geq 1-\delta. \label{lemma_nonzero_mean_bound}
\end{align}
\end{lemma}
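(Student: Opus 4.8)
The plan is to mimic the proof of Lemma~\ref{lemma:alternative} almost verbatim, replacing the effective sample size $t$ by $t-1$, which is the only structural change caused by simultaneously estimating the mean. Recall that in the non-zero-mean setting $\widehat{\Sigma}_w \sim \mathcal{W}_n((t-1)^{-1}\Sigma_w,t-1)$ (as noted after the definition of $\widehat{\Sigma}_w$ in Section~\ref{sec:non_zero_mean}), and that dividing by $t-1$ makes it unbiased, so $\mathbb{E}\{\widehat{\Sigma}_w\}=\Sigma_w$. Consequently $\mathbb{E}\{(\Sigma_w-\widehat{\Sigma}_w)^2\}=\mathbb{E}\{\widehat{\Sigma}_w^2\}-\Sigma_w^2$, and I would invoke the second-moment formula for Wishart matrices (the same \cite[Theorem~3.1]{Haff79} used in Lemma~\ref{lemma:alternative}), which for $\mathcal{W}_n((t-1)^{-1}\Sigma_w,t-1)$ gives $\mathbb{E}\{\widehat{\Sigma}_w^2\}=\frac{\trace(\Sigma_w)}{t-1}\Sigma_w+\frac{t}{t-1}\Sigma_w^2$. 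Subtracting $\Sigma_w^2$ yields
\begin{align*}
\mathbb{E}\{(\Sigma_w-\widehat{\Sigma}_w)^2\}=\frac{\trace(\Sigma_w)}{t-1}\Sigma_w+\frac{1}{t-1}\Sigma_w^2.
\end{align*}

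Next I would apply the matrix Chebyshev inequality (Lemma~\ref{lemma:Chebyshev} of Appendix~\ref{appendix:1}) to the centered matrix $\Sigma_w-\widehat{\Sigma}_w$, exactly as in Lemma~\ref{lemma:alternative}, to obtain
\begin{align*}
\mathbb{P}\{\Sigma_w-\widehat{\Sigma}_w\preceq\varepsilon I\}\geq 1-\trace\!\big(\mathbb{E}\{(\Sigma_w-\widehat{\Sigma}_w)^2\}\big)\varepsilon^{-2}=1-\frac{(\trace\Sigma_w)^2+\trace(\Sigma_w^2)}{\varepsilon^2(t-1)}.
\end{align*}
Then Assumption~\ref{assumption:Sigma_w} ($\Sigma_w\preceq\sigma I$) gives $\trace(\Sigma_w)\leq n\sigma$ and $\trace(\Sigma_w^2)\leq n\sigma^2$, hence $(\trace\Sigma_w)^2+\trace(\Sigma_w^2)\leq n(n+1)\sigma^2$, so the bound becomes $\mathbb{P}\{\Sigma_w-\widehat{\Sigma}_w\preceq\varepsilon I\}\geq 1-\sigma^2 n(n+1)/(\varepsilon^2(t-1))$. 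Finally, setting $\varepsilon^2=\sigma^2 n(n+1)/(\delta(t-1))$, i.e.\ $\varepsilon=\sigma\sqrt{n(n+1)/(\delta(t-1))}$, delivers \eqref{lemma_nonzero_mean_bound}.

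There is essentially no serious obstacle here; the argument is routine once the Wishart parametrization is correct. The one point requiring care is bookkeeping of the degrees of freedom: because the mean is estimated, the relevant Wishart has $t-1$ degrees of freedom rather than $t$, which is why the result holds for all $t\geq 2$ (so that $t-1\geq 1$ and the Wishart—and in particular its second moment—is well defined) rather than $t\geq 1$. Everything else is a direct transcription of the proof of Lemma~\ref{lemma:alternative}.
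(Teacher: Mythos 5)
Your proposal is correct and is essentially the paper's own proof: the paper simply notes that $\widehat{\Sigma}_w \sim \mathcal{W}_n((t-1)^{-1}\Sigma_w,t-1)$ and invokes the same argument as Lemma~\ref{lemma:alternative}, which is exactly the computation you carry out (with $t$ replaced by $t-1$ in the Wishart second-moment formula, the matrix Chebyshev bound, and the choice of $\varepsilon$). Your degrees-of-freedom bookkeeping and the resulting requirement $t\geq 2$ match the paper.
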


\begin{proof} 
Recall that $\widehat{\Sigma}_w \sim \mathcal{W}_n((t-1)^{-1}\Sigma,t-1)$. The proof then follows by using the same arguments as in the proof of Lemma \ref{lemma:alternative}.
\end{proof}

Now, we can adapt~\eqref{optim_prob_nonzero_mean} to:
\begin{subequations} \label{eqn:control_approximation_nonzero_mean}
\begin{align}
    &u_t=\argmin_{u}  \quad \mathbf{d}(u,\bar{u}_t),\\
    &\mathrm{s.t.} \quad H_{t+1}(f(x_t) + g(x_t)u +w)\leq h_{t+1}, \; \forall w\in\Omega,
\end{align}
\end{subequations}
where 
\begin{align*}
    \Omega:=%&\bigcup_{a:\|a\|_2^2\leq {3n\sigma}/({t\delta})}\Bigg\{w\,\bigg|\, (w-\widehat{\mu}_w-a)^\top \\ &  \times  \left(\widehat{\Sigma}_w+ \sigma\sqrt{\frac{3 n(n+1)}{\delta (t-1)}} I\right)^{-1} (w-\widehat{\mu}_w-a)\leq \frac{3n}{\delta} \Bigg\}\\
    &\Bigg\{w\,\bigg|\,(w-\widehat{\mu}_w-a)^\top \Bigg(\widehat{\Sigma}_w+ \sigma\sqrt{\frac{3 n(n+1)}{\delta (t-1)}} I\Bigg)^{-1} \\& \times (w-\widehat{\mu}_w-a)\leq \frac{3n}{\delta}\mbox{ for some } a^\top a\leq \frac{3n\sigma}{t\delta}\Bigg\}.
\end{align*}

\begin{theorem}
\label{thm:safety_prob_nonzero_mean_alternative}
Assume that problem~\eqref{eqn:control_approximation_nonzero_mean} is feasible and Assumption~\ref{assumption:Sigma_w} holds. Then, if we implement the control action $u_t$, $x_{t+1}$ is safe with probability of at least $1-\delta$.
\end{theorem}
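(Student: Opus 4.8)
The plan is to combine three high-probability events---one controlling $\|\mu_w-\widehat{\mu}_w\|_2$, one controlling $\Sigma_w$ from above, and one controlling the ``whitened'' size of $w-\mu_w$---using a union-bound-style argument exactly as in the proof of Theorem~\ref{thm:safety_prob_alternative}, but now with three $\delta/3$ budgets instead of two $\delta/2$ budgets. First I would split the target probability: by Lemma~\ref{lemma:emprical_mean} (with $\delta$ replaced by $\delta/3$) the event $A_1:=\{\|\mu_w-\widehat{\mu}_w\|_2^2\le 3n\sigma/(t\delta)\}$ has probability at least $1-\delta/3$, and by Lemma~\ref{lemma:1_nonzero_mean} (with $\delta\mapsto\delta/3$) the event $A_2:=\{\Sigma_w\preceq \widehat{\Sigma}_w+\sigma\sqrt{3n(n+1)/(\delta(t-1))}\,I\}$ has probability at least $1-\delta/3$. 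Note $A_1$ and $A_2$ depend only on $(w_0,\dots,w_{t-1})$, hence are independent of $w=w_t$.

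Next I would show that, \emph{on the event} $A_1\cap A_2$, the definition of the uncertainty set $\Omega$ is such that $w\in\Omega$ whenever $(w-\mu_w)^\top\Sigma_w^{-1}(w-\mu_w)\le 3n/\delta$. Indeed, take $a:=\widehat{\mu}_w-\mu_w$; on $A_1$ this vector satisfies $a^\top a\le 3n\sigma/(t\delta)$, so it is an admissible choice in the ``for some $a$'' clause. Then $w-\widehat{\mu}_w-a=w-\mu_w$, and on $A_2$ we have $\widehat{\Sigma}_w+\sigma\sqrt{3n(n+1)/(\delta(t-1))}\,I\succeq\Sigma_w$, so its inverse is $\preceq\Sigma_w^{-1}$, giving
\begin{align*}
(w-\widehat{\mu}_w-a)^\top\Bigl(\widehat{\Sigma}_w+\sigma\sqrt{\tfrac{3n(n+1)}{\delta(t-1)}}I\Bigr)^{-1}(w-\widehat{\mu}_w-a)\le (w-\mu_w)^\top\Sigma_w^{-1}(w-\mu_w).
\end{align*}
Hence $(w-\mu_w)^\top\Sigma_w^{-1}(w-\mu_w)\le 3n/\delta$ implies $w\in\Omega$ on $A_1\cap A_2$. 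Combined with feasibility of~\eqref{eqn:control_approximation_nonzero_mean} (which forces $H_{t+1}(f(x_t)+g(x_t)u_t+w)\le h_{t+1}$ for all $w\in\Omega$), this yields, conditioning on $A_1\cap A_2$,
\begin{align*}
\mathbb{P}\{H_{t+1}x_{t+1}\le h_{t+1}\mid A_1\cap A_2\}\ge \mathbb{P}\{(w-\mu_w)^\top\Sigma_w^{-1}(w-\mu_w)\le 3n/\delta\}\ge 1-\frac{\mathbb{E}\{(w-\mu_w)^\top\Sigma_w^{-1}(w-\mu_w)\}}{3n/\delta}=1-\frac{\delta}{3},
\end{align*}
where the middle step is Markov's inequality (using independence of $w$ from $A_1,A_2$ so the conditioning does not affect the distribution of $w$) and the last step uses $\mathbb{E}\{(w-\mu_w)(w-\mu_w)^\top\}=\Sigma_w$, so the expectation equals $\trace(I)=n$.

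Finally I would assemble the pieces: since $A_1$ and $A_2$ are independent, $\mathbb{P}\{A_1\cap A_2\}\ge(1-\delta/3)^2$, and therefore
\begin{align*}
\mathbb{P}\{H_{t+1}x_{t+1}\le h_{t+1}\}\ge \mathbb{P}\{H_{t+1}x_{t+1}\le h_{t+1}\mid A_1\cap A_2\}\,\mathbb{P}\{A_1\cap A_2\}\ge \Bigl(1-\frac{\delta}{3}\Bigr)^3\ge 1-\delta,
\end{align*}
the last inequality being elementary (e.g.\ $(1-x)^3\ge 1-3x$ for $x\ge 0$). The main obstacle I anticipate is not any single calculation but getting the bookkeeping of the ``for some $a$'' existential quantifier in $\Omega$ right: one must verify that the specific shift $a=\widehat{\mu}_w-\mu_w$ lies in the allowed ball \emph{and} simultaneously that the enlarged covariance dominates $\Sigma_w$, so that both relaxations in the definition of $\Omega$ are ``used up'' exactly by the two estimation-error events $A_1$ and $A_2$; the constant $3$ (rather than $2$) inside the square roots and the bound $3n/\delta$ are precisely what makes the three-way split close. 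One should also double-check that problem~\eqref{eqn:control_approximation_nonzero_mean} can be recast via Lemma~\ref{lemma:robust}, but that is not needed for the safety guarantee itself.
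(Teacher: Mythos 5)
Your proof follows essentially the same route as the paper's: split $\delta$ into three $\delta/3$ budgets via Lemma~\ref{lemma:emprical_mean} and Lemma~\ref{lemma:1_nonzero_mean}, show that on those events the exact ellipsoid $\{w:(w-\mu_w)^\top\Sigma_w^{-1}(w-\mu_w)\le 3n/\delta\}$ is contained in $\Omega$, apply Markov's inequality to the whitened term, and multiply the three $(1-\delta/3)$ factors using the independence of the Gaussian sample mean and sample covariance. The only slip is the sign of your shift: you need $a=\mu_w-\widehat{\mu}_w$ (not $\widehat{\mu}_w-\mu_w$) so that $w-\widehat{\mu}_w-a=w-\mu_w$; since the admissible ball $\{a: a^\top a\le 3n\sigma/(t\delta)\}$ is symmetric, this is cosmetic and the argument goes through unchanged.
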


\begin{proof} From Lemma~\ref{lemma:1_nonzero_mean}, we know that
$$
\mathbb{P}\left\{\Sigma_w\preceq \widehat{\Sigma}_w+ \sigma\sqrt{\frac{3 n(n+1)}{\delta (t-1)}} I\right\}\geq 1-\frac{\delta}{3},
$$
and from Lemma~\ref{lemma:emprical_mean}, that
$$
    \mathbb{P}\left\{\|\mu_w-\widehat{\mu}_w\|_2^2 \leq \frac{3n\sigma}{t\delta}\right\}\geq 1-\frac{\delta}{3}. 
$$
Let $A$ denote the event that $\Sigma_w\preceq \widehat{\Sigma}_w+\sigma\sqrt{{3 n(n+1)}/({\delta (t-1)})} I$, and $B$ the event that $\|\mu_w-\widehat{\mu}_w\|_2^2 \leq {3n\sigma}/({t\delta})$. If $A$ and $B$ are satisfied, then we have
\begin{align*}
    &\left\{w\,\bigg|\, (w-\mu_w)^\top \Sigma_w^{-1} (w-\mu_w)\leq \frac{3n}{\delta} \right\}\\
    &\subseteq 
    \Bigg\{w\,\bigg|\, (w-\mu_w)^\top \left(\widehat{\Sigma}_w+ \sigma\sqrt{\frac{3 n(n+1)}{\delta (t-1)}} I\right)^{-1} \\ & \quad \quad \times (w-\mu_w)\leq \frac{3n}{\delta} \Bigg\}\\
    &\subseteq 
    \bigcup_{a:\|a\|_2^2\leq {3n\sigma}/({t\delta})}\Bigg\{w\,\bigg|\, (w-\widehat{\mu}_w-a)^\top \\ & \quad \times \left(\widehat{\Sigma}_w+ \sigma\sqrt{\frac{3 n(n+1)}{\delta (t-1)}} I\right)^{-1} (w-\widehat{\mu}_w-a)\leq \frac{3n}{\delta} \Bigg\}\\
    &=\Bigg\{w\,\bigg|\, \exists a:a^\top a\leq \frac{3n\sigma}{t\delta}\bigwedge (w-\widehat{\mu}_w-a)^\top \\& \quad \times \left(\widehat{\Sigma}_w+ \sigma\sqrt{\frac{3 n(n+1)}{\delta (t-1)}} I\right)^{-1} (w-\widehat{\mu}_w-a)\leq \frac{3n}{\delta}  \Bigg\} \\ & = \Omega.
\end{align*}
Therefore,
\begin{align*}
\mathbb{P}\left\{w \in \Omega| A, B\right\} &\geq \mathbb{P}\left\{ (w-\mu_w)^\top \Sigma_w^{-1} (w-\mu_w)\leq \frac{3n}{\delta} \right\} \\ & \geq 1-\frac{\delta}{3}.
\end{align*}
Combining these probability bounds results in
\begin{align*}
     \mathbb{P}\{H_{t+1} x_{t+1}\leq h_{t+1}\}  &\geq \mathbb{P}\left\{w \in \Omega \right\}
     %\\ & = \mathbb{P}\left\{w \in \Omega | A, B\right\}   \mathbb{P}\{A, B\} 
     %\\ & \quad + \mathbb{P}\left\{w \in \Omega | (A, B)^c\right\}   \mathbb{P}\{(A, B)^c\}
     \\ & \geq \mathbb{P}\left\{w \in \Omega | A, B\right\}   \mathbb{P}\{A, B\}
     \\ & = \mathbb{P}\left\{w \in \Omega | A, B\right\}   \mathbb{P}\{A\} \mathbb{P} \{B\}
     \\ & = \Big(1 - \frac{\delta}{3} \Big)^3
     \geq 1- \delta,
\end{align*} 
where we have used the fact $A$ and $B$ are independent, since the sample mean and sample covariance are independent for Gaussian random vectors.
\end{proof}

\begin{remark} Similar to Remark~\ref{remark:what_is_better}, for large $t$, the optimization problem  \eqref{eqn:control_approximation_nonzero_mean} is more conservative in comparison with~\eqref{optim_prob_nonzero_mean}. However, for $t\leq n+2$, we can use the optimization problem  \eqref{eqn:control_approximation_nonzero_mean} while  problem~\eqref{optim_prob_nonzero_mean} is not applicable. 
\end{remark}
In what follows, we show that $\Omega$ is obtained by the Minkowski sum of an ellipsoid and a spherical set. 
\begin{lemma} \label{lemma:minkowski} A positive semi-definite matrix $\Xi$ and a vector $\xi$ exist such that $\Omega=\{w|(w-\xi)^\top \Xi (w-\xi)\leq 1 \}$. 
% \alex{\\Q: Is this missing some condition on what $(w-\xi)^\top \Xi (w-\xi)$ needs to satisfy?}
% \is[inline]{Just fixed it.}
\end{lemma}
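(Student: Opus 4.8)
The plan is to recognize $\Omega$ as a translated Minkowski sum and then match it to a single ellipsoid through both inclusions. Write $M := \widehat\Sigma_w + \sigma\sqrt{3n(n+1)/(\delta(t-1))}\,I \succ 0$, $r^2 := 3n/\delta$, $\rho^2 := 3n\sigma/(t\delta)$, and $c := \widehat\mu_w$. Then $w\in\Omega$ exactly when there is an $a$ with $\|a\|_2^2\le\rho^2$ and $(w-c-a)^\top M^{-1}(w-c-a)\le r^2$; setting $z=w-c$, this says $z\in\mathcal E\oplus B$, the Minkowski sum of the ellipsoid $\mathcal E=\{v: v^\top M^{-1}v\le r^2\}$ and the ball $B=\{a:\|a\|_2\le\rho\}$. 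Both $\mathcal E$ and $B$ are compact, convex, and symmetric about the origin, so $\mathcal E\oplus B$ is too; hence the only admissible center is $\xi=c=\widehat\mu_w$, and it remains to produce $\Xi\succeq0$ with $\mathcal E\oplus B=\{v:v^\top\Xi v\le1\}$.

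First I would reduce the set equality to an equality of support functions. Since the two bodies are closed, convex, and symmetric, each is determined by its support function, and $\Omega$ coincides with the candidate ellipsoid iff their support functions agree in every direction $p$. The support function of $\mathcal E\oplus B$ is the sum $h(p)=r\|M^{1/2}p\|_2+\rho\|p\|_2$, while the ellipsoid $\{v:v^\top\Xi v\le1\}$ has support function $\|\Xi^{-1/2}p\|_2$. The target therefore becomes the construction of a matrix $\Xi\succeq0$ satisfying $p^\top\Xi^{-1}p=\big(r\|M^{1/2}p\|_2+\rho\|p\|_2\big)^2$ for all $p$, from which $\xi=\widehat\mu_w$ and $\Xi$ yield the claimed representation.

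The decisive step is to eliminate the auxiliary variable $a$ so that a single quadratic in $w$ appears explicitly. Membership $w\in\Omega$ is the feasibility of the pair of quadratic inequalities $g_1(a)=\|a\|_2^2-\rho^2\le0$ and $g_2(a)=(z-a)^\top M^{-1}(z-a)-r^2\le0$ in the variable $a$. Because these forms have no indefinite interaction, the joint range $\{(g_1(a),g_2(a)):a\in\mathbb R^n\}$ is convex (Dines's theorem), so the S-procedure is lossless and feasibility is equivalent to the nonexistence of a separating nonnegative combination; I would translate this certificate into one quadratic inequality $(w-\widehat\mu_w)^\top\Xi(w-\widehat\mu_w)\le1$, reading off $\Xi$ and confirming $\xi=\widehat\mu_w$. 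Since $\Omega$ is a sum of two bounded sets it is bounded, which forces the resulting quadric to be a genuine ellipsoid, i.e. $\Xi\succeq0$, excluding the degenerate unbounded case.

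I expect the main obstacle to be exactly this collapse to a single quadratic: the cross term $2r\rho\|M^{1/2}p\|_2\|p\|_2$ in $h(p)^2$ is not manifestly a quadratic form in $p$ unless $M$ is a scalar multiple of the identity, so the delicate point is to show that, after optimizing over $a$ with the specific radii $r,\rho$ and shape matrix $M$ dictated by the confidence levels, the feasibility condition does reduce to one ellipsoidal inequality. I would handle this by completing the square in $a$ on the active boundary $\|a\|_2=\rho$ (the relevant case when $z\notin B$), solving the scalar stationarity condition from the associated Lagrangian, and verifying that the optimal value equals $r^2$ precisely on the boundary of the claimed ellipsoid. Matching the two set descriptions along their boundaries, together with convexity and the shared center $\widehat\mu_w$, then gives the equality $\Omega=\{w:(w-\xi)^\top\Xi(w-\xi)\le1\}$ and fixes $\Xi$ and $\xi$.
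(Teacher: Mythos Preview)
Your first step—recognizing $\Omega$ as the Minkowski sum $\mathcal{E}\oplus\mathcal{B}$ of an ellipsoid and a ball, translated by $\widehat{\mu}_w$—is exactly what the paper does, and your argument for it is correct. The paper then simply invokes \cite[\S\,2]{yan2015closed} for the remaining step, with no further detail.

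Your attempt to fill in that step by a direct support-function or S-procedure argument cannot close, and you have already put your finger on the reason. The squared support function is $h(p)^2 = r^2\, p^\top M p + \rho^2\, p^\top p + 2r\rho\sqrt{(p^\top M p)(p^\top p)}$, and the cross term $2r\rho\sqrt{(p^\top M p)(p^\top p)}$ is a quadratic form in $p$ only when $M$ is a scalar multiple of the identity. This is not a technicality to be dissolved by Dines's theorem or a Lagrangian computation: it reflects the geometric fact that the Minkowski sum of a non-spherical ellipsoid and a ball is \emph{not} an ellipsoid (a long thin ellipse plus a disk gives a stadium-shaped body, not an ellipse). Consequently your plan to ``verify that the optimal value equals $r^2$ precisely on the boundary of the claimed ellipsoid'' must fail whenever $\widehat{\Sigma}_w$ is not scalar, because no such ellipsoid exists; carrying out your Lagrangian elimination of $a$ honestly produces a one-parameter \emph{family} of quadratic constraints on $z$, not a single one.

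The lemma as literally written overclaims; what the cited reference actually delivers, and what suffices for the downstream use with Lemma~\ref{lemma:robust}, is a closed-form \emph{outer} ellipsoid containing $\mathcal{E}\oplus\mathcal{B}$. To salvage your argument, drop the equality target and build that outer bound directly: for any $\lambda>0$ the elementary inequality on the square of a sum gives
$h(p)^2\le(1+\lambda^{-1})\,r^2\|M^{1/2}p\|_2^2+(1+\lambda)\,\rho^2\|p\|_2^2 = p^\top\big[(1+\lambda^{-1})r^2M+(1+\lambda)\rho^2 I\big]p$,
hence $\mathcal{E}\oplus\mathcal{B}\subseteq\{v:v^\top\Xi v\le1\}$ with $\Xi^{-1}=(1+\lambda^{-1})r^2M+(1+\lambda)\rho^2 I\succ0$ and $\xi=\widehat{\mu}_w$. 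That containment is all the subsequent constraint-tightening step requires.
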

\begin{proof} Define sets 
% \begin{align*}
    $\mathcal{E} :=\{y+\widehat{\mu}_w\,|\,y^\top (\widehat{\Sigma}_w+ \sigma\sqrt{{3 n(n+1)}/({\delta (t-1)})} I)^{-1} y\leq {3n}/{\delta}\},$ $
    \mathcal{B}:=  \{a\,|\, a^\top a\leq {3n\sigma}/({t\delta})\}.
$ 
We first prove that $\Omega=\mathcal{E} \oplus \mathcal{B}$, where $\oplus$ denotes the Minkowski sum of sets. Note that $w\in \Omega$ if and only if  $\exists a \in\mathcal{B}$ such that $w-a \in \mathcal{E}$. This is equivalent to that there exists $a \in\mathcal{B}$ and $z \in\mathcal{E}$ such that $w-a=z$, or that there exists $a \in\mathcal{B}$ and $z \in\mathcal{E}$ such that $w=a+z$. Finally, this is equivalent to $w\in\mathcal{E} \oplus \mathcal{B}$. The rest follows from~\cite[\S 2]{yan2015closed}. 
\end{proof}

Using Lemma~\ref{lemma:minkowski}, we can simplify~\eqref{eqn:control_approximation_nonzero_mean} and, subsequently, cast this as a robust optimization problem with constraint tightening based on Lemma~\ref{lemma:robust}.

\subsubsection{Adaptive selection of optimization problems}
 A possible procedure which adaptively chooses between problems  \eqref{optim_prob_nonzero_mean}, \eqref{optim_prob2_chebyshev_nonzero_mean}, and \eqref{eqn:control_approximation_nonzero_mean}, depending on which problem is less conservative and their range of validity, is summarized in Algorithm~\ref{alg:safe_learning_nonzero_mean}.
 
\begin{algorithm}[t]
\caption{Safe learning and control for non-zero mean Gaussian uncertainties}
\label{alg:safe_learning_nonzero_mean}
\begin{algorithmic}[1]
\For{$t =2,3,\dots$}
    \If{$t \leq n+2$}
        \State Solve problem \eqref{eqn:control_approximation_nonzero_mean}
    \ElsIf{$t=n+3$ or $t=n+4$}
        \State Solve problem \eqref{optim_prob_nonzero_mean} via \eqref{optim_prob_reformulation_nonzero_mean} and \eqref{eqn:e_vector_nonzero_mean}
    \Else
	    \If{$2(t-2)/(t-n-4) < n(1-\delta)^2/\delta$}
	        \State Solve problem \eqref{optim_prob2_chebyshev_nonzero_mean} via \eqref{optim_prob_reformulation_nonzero_mean} and \eqref{eqn:e_vector_chebyshev_nonzero_mean}
	     \Else
	        \State Solve problem \eqref{optim_prob_nonzero_mean} via \eqref{optim_prob_reformulation_nonzero_mean} and \eqref{eqn:e_vector_nonzero_mean}
	   \EndIf
	\EndIf
\EndFor
%\EndFor
\end{algorithmic}
\end{algorithm}

\section{Non-Gaussian Uncertainties}\label{sec:nongaussian}
In this section, the analysis of the previous sections is extended to the case where the uncertainties are non-Gaussian. Initially, the scenario where the uncertainties belong to a zero mean distribution is studied. Later, the case where the distribution has a non-zero mean is investigated. 
\subsection{Zero Mean Non-Gaussian Uncertainties}

The proofs of Theorems~\ref{thm:safety_prob} and~\ref{thm:safety_prob_alternative} are based on  Markov's inequality, which does not assume that the additive process noise is Gaussian. The Gaussianity assumption, in the paper, is to ensure that we need to only learn the mean and variance, rather than all higher-order moments. Furthermore, we make use of the fact that the empirical covariance is Wishart distributed, which is only true for Gaussian uncertainties. The ideas of this paper can be adapted to accommodate non-Gaussian noise processes. This is done in the next lemma and the subsequent theorem. But first we introduce an assumption on the fourth moment of the uncertainty distribution.
\begin{assumption}
\label{assum:fourth_moment}
A positive scalar $\zeta$ exists such that $\mathbb{E}\{\|w_i\|_2^4\}\leq \zeta.$
\end{assumption}
\begin{lemma}[Non-Gaussian Noise] 
\label{lemma:non_Gaussian}
Assume that the process noise satisfies Assumption~\ref{assum:fourth_moment}, but is otherwise arbitrary (potentially non-Gaussian). Then for all $t\geq 1$,
\begin{align*}
    \mathbb{P}\left\{\Sigma_w-\widehat{\Sigma}_w\preceq \sqrt{\frac{\zeta}{t\delta}} I\right\}
    &\geq 1-\delta.
\end{align*}
\end{lemma}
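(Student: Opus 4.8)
\textbf{Proof plan for Lemma~\ref{lemma:non_Gaussian}.}
The plan is to mimic the proof of Lemma~\ref{lemma:alternative}, but replacing the Wishart-specific moment formulas (which are unavailable in the non-Gaussian case) by an explicit computation of $\mathbb{E}\{(\Sigma_w - \widehat{\Sigma}_w)^2\}$ that uses only the i.i.d.\ structure of the $w_k$'s and the fourth-moment bound of Assumption~\ref{assum:fourth_moment}. The final step is the same matrix Chebyshev inequality (Lemma~\ref{lemma:Chebyshev} of Appendix~\ref{appendix:1}) used there, together with the substitution $\varepsilon^2 = \zeta/(\delta t)$.

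First I would write $\Sigma_w - \widehat{\Sigma}_w = \frac{1}{t}\sum_{k=0}^{t-1}(\Sigma_w - w_k w_k^\top)$ and set $Z_k := \Sigma_w - w_k w_k^\top$, so the $Z_k$ are i.i.d., zero-mean, symmetric matrices. Then
\[
\mathbb{E}\{(\Sigma_w-\widehat{\Sigma}_w)^2\} = \frac{1}{t^2}\sum_{j,k}\mathbb{E}\{Z_j Z_k\} = \frac{1}{t}\,\mathbb{E}\{Z_0^2\},
\]
where the cross terms vanish by independence and zero mean. Next I would bound $\mathbb{E}\{Z_0^2\}$: expanding, $Z_0^2 = \Sigma_w^2 - \Sigma_w w_0 w_0^\top - w_0 w_0^\top \Sigma_w + (w_0^\top w_0) w_0 w_0^\top$, and taking expectations only the term $\mathbb{E}\{(w_0^\top w_0) w_0 w_0^\top\} - \Sigma_w^2$ survives (using $\mathbb{E}\{w_0 w_0^\top\}=\Sigma_w$). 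Since $\Sigma_w^2\succeq 0$ we get $\mathbb{E}\{Z_0^2\}\preceq \mathbb{E}\{\|w_0\|_2^2\, w_0 w_0^\top\}$, and then $\trace\big(\mathbb{E}\{\|w_0\|_2^2\,w_0 w_0^\top\}\big) = \mathbb{E}\{\|w_0\|_2^4\}\leq \zeta$ by Assumption~\ref{assum:fourth_moment}. Hence $\trace\big(\mathbb{E}\{(\Sigma_w-\widehat{\Sigma}_w)^2\}\big)\leq \zeta/t$.

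Finally, the matrix Chebyshev inequality gives
\[
\mathbb{P}\{\Sigma_w - \widehat{\Sigma}_w \preceq \varepsilon I\} \geq 1 - \trace\big(\mathbb{E}\{(\Sigma_w-\widehat{\Sigma}_w)^2\}\big)\varepsilon^{-2} \geq 1 - \frac{\zeta}{\varepsilon^2 t},
\]
and choosing $\varepsilon^2 = \zeta/(\delta t)$, i.e.\ $\varepsilon = \sqrt{\zeta/(t\delta)}$, yields the claim. The main obstacle, and the only place that genuinely differs from Lemma~\ref{lemma:alternative}, is controlling the second moment $\mathbb{E}\{Z_0^2\}$ without the Wishart identities; the key observations are that the cross terms drop out by i.i.d.-ness and that $-\Sigma_w^2\preceq 0$ lets us discard it so that only $\mathbb{E}\{\|w_0\|_2^2 w_0 w_0^\top\}$ remains, whose trace is exactly the fourth moment bounded in Assumption~\ref{assum:fourth_moment}. (One should also double-check the precise normalization in the version of the matrix Chebyshev bound being invoked, since Lemma~\ref{lemma:alternative} uses $\trace(\mathbb{E}\{(\cdot)^2\varepsilon^{-2}\})$ directly.)
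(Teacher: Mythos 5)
Your proposal is correct and follows essentially the same route as the paper's proof: both reduce the claim to the bound $\trace\big(\mathbb{E}\{(\Sigma_w-\widehat{\Sigma}_w)^2\}\big)\leq \zeta/t$ (the paper expands the square of $\widehat{\Sigma}_w-\Sigma_w$ directly and discards the $-\frac{1}{t}\trace(\Sigma_w^2)$ term, which is the same cancellation you obtain via the centered variables $Z_k$ and discarding $-\Sigma_w^2$), and then both invoke the matrix Chebyshev inequality of Lemma~\ref{lemma:Chebyshev} with $\varepsilon=\sqrt{\zeta/(t\delta)}$.
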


\begin{proof}
Noting that $
\widehat{\Sigma}_w
=({1}/{t})\sum_{i}w_iw_i^\top$, we get
\begin{align*}
    \trace((\widehat{\Sigma}_w-\Sigma)^2)
    =&\trace\Bigg(\Bigg(\frac{1}{t}\sum_{i}w_iw_i^\top -\Sigma\Bigg)^2\Bigg)
\\
=&\frac{1}{t^2}\sum_{i,j} \trace(w_iw_i^\top w_jw_j^\top)\\
&-\frac{2}{t}\sum_{i} \trace(w_iw_i^\top \Sigma)+\trace(\Sigma^2).
\end{align*}
Hence,
\begin{align*}
&\mathbb{E}\{\trace((\widehat{\Sigma}_w-\Sigma)^2)\}
\\&=\frac{1}{t^2}\sum_{i,j} \mathbb{E}\{\trace(w_iw_i^\top w_jw_j^\top)\}
-\trace(\Sigma^2)\\
&=\frac{1}{t^2}\sum_{i} \mathbb{E}\{\trace(w_iw_i^\top w_iw_i^\top)\}
-\frac{1}{t}\trace(\Sigma^2) \\ & \leq  \frac{\mathbb{E}\{\|w_i\|_2^4\}}{t}
\leq  \frac{\zeta}{t}.
\end{align*}
The matrix version of Chebyshev's inequality given in Lemma~\ref{lemma:Chebyshev} of Appendix~\ref{appendix:1} results in
\begin{align*}
    \mathbb{P}\{\Sigma_w-\widehat{\Sigma}_w\preceq \varepsilon I\}
    &\geq 1-\trace(\mathbb{E}\{ (\Sigma_w-\widehat{\Sigma}_w)^{2}\varepsilon^{-2}\})\\
    &\geq  1-\frac{\zeta}{t\varepsilon^2}.
\end{align*}
Setting $\delta=\zeta/(t\varepsilon^2)$ concludes the proof. 
\end{proof}

When the additive process noise is non-Gaussian, we can adapt~\eqref{optim_prob_zero_mean} to:
\begin{subequations} \label{eqn:non_Gaussian}
\begin{align}
    &u_t= \argmin_{u}  \quad \mathbf{d}(u,\bar{u}_t),\\
    & \mathrm{s.t.}   \quad H_{t+1}(f(x_t) + g(x_t)u +w)\leq h_{t+1}, \nonumber\\ 
    &\quad \quad \quad\quad\quad \forall w:
     w^\top \left(\widehat{\Sigma}_w+ \sqrt{\frac{2\zeta}{t\delta}} I\right)^{-1} w \leq \frac{2n}{\delta}.
\end{align}
\end{subequations}

The following result holds for an arbitrary and potentially non-Gaussian process noise.% (potentially non-Gaussian), 
\begin{theorem}
\label{thm:non_Gaussian}
Assume that problem~\eqref{eqn:non_Gaussian} is feasible and Assumption~\ref{assum:fourth_moment} holds. Then, if we implement the control action $u_t$, $x_{t+1}$ is safe with probability of at least $1-\delta$.
\end{theorem}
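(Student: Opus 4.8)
The plan is to mirror the proof of Theorem~\ref{thm:safety_prob_alternative}, replacing the Gaussian/Wishart ingredients with the non-Gaussian concentration bound of Lemma~\ref{lemma:non_Gaussian}. First I would observe that $x_{t+1}$ is safe whenever $w = w_t$ happens to lie in the uncertainty set appearing in the constraints of~\eqref{eqn:non_Gaussian}, since feasibility of~\eqref{eqn:non_Gaussian} forces $H_{t+1}(f(x_t)+g(x_t)u_t+w)\le h_{t+1}$ for every such $w$; hence
\begin{align*}
\mathbb{P}\{H_{t+1}x_{t+1}\le h_{t+1}\}
\ge \mathbb{P}\left\{w^\top\!\left(\widehat{\Sigma}_w+\sqrt{\tfrac{2\zeta}{t\delta}}I\right)^{-1}\! w \le \tfrac{2n}{\delta}\right\}.
\end{align*}

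Next I would split this probability using the event $A := \{\Sigma_w \preceq \widehat{\Sigma}_w + \sqrt{2\zeta/(t\delta)}\,I\}$, which by Lemma~\ref{lemma:non_Gaussian} (applied with $\delta \mapsto \delta/2$) has probability at least $1-\delta/2$. On $A$ we have $\widehat{\Sigma}_w + \sqrt{2\zeta/(t\delta)}\,I \succeq \Sigma_w$, so the inverse is $\preceq \Sigma_w^{-1}$, giving the set inclusion
\begin{align*}
\{w : w^\top \Sigma_w^{-1} w \le \tfrac{2n}{\delta}\}
\subseteq
\left\{w : w^\top\!\left(\widehat{\Sigma}_w+\sqrt{\tfrac{2\zeta}{t\delta}}I\right)^{-1}\! w \le \tfrac{2n}{\delta}\right\}.
\end{align*}
Then, conditioning on $A$ (which depends only on $w_0,\dots,w_{t-1}$, hence is independent of $w_t$), a scalar Markov inequality applied to $w^\top \Sigma_w^{-1} w$ with $\mathbb{E}\{w^\top \Sigma_w^{-1} w\} = \trace(\mathbb{E}\{ww^\top\}\Sigma_w^{-1}) = \trace(I) = n$ yields $\mathbb{P}\{w^\top\Sigma_w^{-1}w \le 2n/\delta \mid A\} \ge 1-\delta/2$. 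Combining via the law of total probability, $\mathbb{P}\{H_{t+1}x_{t+1}\le h_{t+1}\} \ge (1-\delta/2)^2 \ge 1-\delta$.

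The one subtlety worth flagging — and the likely main obstacle — is the independence argument: one must be careful that $\widehat{\Sigma}_w$ is a function of $w_0,\dots,w_{t-1}$ only while the $w$ in the constraint is $w_t$, exactly as in the proof of Theorem~\ref{thm:safety_prob}, so that both the conditioning of the Markov step on $A$ is legitimate and $\mathbb{E}\{w w^\top\} = \Sigma_w$ can be used. Since in the non-Gaussian case we no longer invoke any Wishart-specific identity (only $\Sigma_w \succeq 0$ and the fourth-moment bound feeding Lemma~\ref{lemma:non_Gaussian}), the rest is routine. I would close by noting, as after Theorem~\ref{thm:safety_prob_alternative}, that problem~\eqref{eqn:non_Gaussian} can be recast in the computationally efficient tightened form~\eqref{optim_prob_reformulation_zero_mean} via Lemma~\ref{lemma:robust}, with $e_{t+1}^i = \sqrt{2n/\delta}\,\big\|(\widehat{\Sigma}_w + \sqrt{2\zeta/(t\delta)}\,I)^{1/2}(H_{t+1}^i)^\top\big\|_2$.
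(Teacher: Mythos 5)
Your proposal is correct and follows essentially the same route as the paper's proof: the same event $A=\{\Sigma_w \preceq \widehat{\Sigma}_w+\sqrt{2\zeta/(t\delta)}\,I\}$ obtained from Lemma~\ref{lemma:non_Gaussian} with $\delta/2$, the same scalar Markov bound using $\mathbb{E}\{w^\top\Sigma_w^{-1}w\}=n$, and the same $(1-\delta/2)^2\geq 1-\delta$ combination via total probability. Your explicit set inclusion and the independence remark simply spell out steps the paper references back to Theorems~\ref{thm:safety_prob} and~\ref{thm:safety_prob_alternative}.
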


\begin{proof} Lemma~\ref{lemma:non_Gaussian} implies that 
\begin{align*}
     \mathbb{P}\left\{\Sigma_w\preceq \widehat{\Sigma}_w+\sqrt{\frac{2\zeta}{t\delta}} I\right\}\geq 1-\frac{\delta}{2}.
\end{align*}
Let $A$ denote the event that $\Sigma_w\preceq \widehat{\Sigma}_w+\sqrt{{2\zeta}/({t\delta})} I$. We have
\begin{align*}
&\mathbb{P}\left\{w^\top \left(\widehat{\Sigma}_w+\sqrt{\frac{2\zeta}{t\delta}} I\right)^{-1} w\leq \frac{2n}{\delta} \Bigg| A \right\}
\\ &\hspace{.7in}\geq 
    \mathbb{P}\left\{w^\top \Sigma_w^{-1} w\leq \frac{2n}{\delta}\right\} \geq 1-\frac{\delta}{2},
\end{align*}
where the last inequality follows from an application of Markov's inequality (for scalar random variables); see the proof of Theorem~\ref{thm:safety_prob_alternative}. Therefore, 
\begin{align*}
    \mathbb{P}\{H_{t+1}& x_{t+1}\leq h_{t+1}\} 
     \\ & \geq \mathbb{P}\left\{w^\top\! \left(\widehat{\Sigma}_w+\sqrt{\frac{2\zeta}{t\delta}} I\right)^{-1}\! w\leq \frac{2n}{\delta} \right\} \\
     & \geq \mathbb{P}\left\{w^\top\! \left(\widehat{\Sigma}_w+\sqrt{\frac{2\zeta}{t\delta}} I\right)^{-1} \hspace{-.1in}w\leq \frac{2n}{\delta} \Bigg| A \right\} \mathbb{P}\{A\} 
     \\ & \geq \left(1-\frac{\delta}{2}\right)^2 \geq  1 - \delta. 
\end{align*} 
This concludes the proof.
\end{proof}

\begin{remark}[Price of Non-Gaussianity]
For Gaussian random variables, since $\mathbb{E}\{\|w\|_2^4\} = \mathbb{E}\{\trace(w w^T w w^T)\}$ and $w w^\top \sim \mathcal{W}_n(\Sigma_w,1)$, we can use  \cite[Theorem 3.1]{Haff79} to show that Assumption~\ref{assum:fourth_moment} holds with $\zeta=\sigma^2 n(n+2)$ if Assumption~\ref{assumption:Sigma_w} holds. This renders the optimization problem~\eqref{eqn:non_Gaussian} more conservative than~\eqref{eqn:control_approximation}, because $\{w|w^\top (\widehat{\Sigma}_w+ \sigma \sqrt{{2 n(n+1)}/({\delta t })} I)^{-1} w \leq {2n}/{\delta}\}\subsetneq \{w|w^\top (\widehat{\Sigma}_w+ \sigma \sqrt{{2 n(n+2)}/({\delta t })} I)^{-1} w \leq {2n}/{\delta}\}$. This conservatism is the price of obtaining a more general result.
\end{remark}
\subsection{Non-zero Mean Non-Gaussian Uncertainties} 
For non-zero mean non-Gaussian process noise, in addition to Assumptions~\ref{assumption:Sigma_w} and~\ref{assum:fourth_moment}, we make the following assumption regarding the mean and the third moment of the uncertainties. 
\begin{assumption} 
\label{assumption:nu_third_moment}
Positive scalars $\nu$ and $\kappa$ exist such that $\|\mu_w\|\preceq \nu$ and $\|\mathbb{E}\{ w_i w_i^\top w_i\}\|_2 \leq \kappa$.
\end{assumption}
% \begin{assumption}
% \label{assum:third_moment}
% There exists a positive scalar $\kappa$ such that $\|\mathbb{E}\{ w_i w_i^\top w_i\}\|_2 \leq \kappa.$
% \end{assumption}
\begin{lemma} \label{lemma:non_Gaussian_non_zero_mean} Under Assumptions~\ref{assumption:Sigma_w},~\ref{assum:fourth_moment}, and~\ref{assumption:nu_third_moment},  $\forall t\geq 2$,
\begin{align*}
    \mathbb{P}\left\{\Sigma_w-\widehat{\Sigma}_w\preceq \sqrt{\frac{\gamma}{\delta}} I\right\}
    &\geq  1-\delta,
\end{align*}
where
\begin{align}\label{eq:pi}
\gamma:=\frac{\zeta}{t}\!+\!\frac{(t^2\!-\!t\!+\!1)\nu^4\!+\!(t^2\!+\!2t\!-\!1)n\sigma \nu^2\!+\!4t\kappa \nu}{t(t-1)}.
\end{align}
\end{lemma}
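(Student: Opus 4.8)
The plan is to mimic the proof of Lemma~\ref{lemma:non_Gaussian} but carry the extra error terms arising from estimating the mean $\widehat{\mu}_w$ rather than assuming it is zero. As in the Gaussian non-zero-mean case (Section~\ref{sec:non_zero_mean}), the estimator is $\widehat{\Sigma}_w=\tfrac{1}{t-1}\sum_{k=0}^{t-1}(w_k-\widehat{\mu}_w)(w_k-\widehat{\mu}_w)^\top$ with $\widehat{\mu}_w=\tfrac1t\sum_k w_k$. The first step is to write $w_k-\widehat{\mu}_w=(w_k-\mu_w)-(\widehat{\mu}_w-\mu_w)$ so that $\widehat{\Sigma}_w$ can be decomposed into a ``centered'' sample covariance of the $w_k-\mu_w$ plus cross terms involving $\widehat{\mu}_w-\mu_w$. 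After expanding, $\widehat{\Sigma}_w$ is a function of the centered noises, and $\mathbb{E}\{\widehat{\Sigma}_w\}=\Sigma_w$ (this is why we divide by $t-1$). The key quantity to control is $\mathbb{E}\{\trace((\widehat{\Sigma}_w-\Sigma_w)^2)\}$, which we then feed into the matrix Chebyshev inequality (Lemma~\ref{lemma:Chebyshev} of Appendix~\ref{appendix:1}) exactly as in Lemma~\ref{lemma:non_Gaussian}: setting $\varepsilon^2=\gamma/\delta$ will give the stated bound once we show $\mathbb{E}\{\trace((\widehat{\Sigma}_w-\Sigma_w)^2)\}\leq \gamma$.

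The second, and main, step is the moment computation. Expanding $\trace((\widehat{\Sigma}_w-\Sigma_w)^2)=\trace(\widehat{\Sigma}_w^2)-2\trace(\widehat{\Sigma}_w\Sigma_w)+\trace(\Sigma_w^2)$ and taking expectations, the subtracted and constant terms are easily handled using $\mathbb{E}\{\widehat{\Sigma}_w\}=\Sigma_w$, so the crux is $\mathbb{E}\{\trace(\widehat{\Sigma}_w^2)\}$. Writing $\widehat{\Sigma}_w=\tfrac{1}{t-1}(\sum_k w_kw_k^\top - t\,\widehat{\mu}_w\widehat{\mu}_w^\top)$ and squaring produces sums of terms of the form $\mathbb{E}\{\trace(w_iw_i^\top w_jw_j^\top)\}$, $\mathbb{E}\{\trace(w_iw_i^\top \widehat{\mu}_w\widehat{\mu}_w^\top)\}$, and $\mathbb{E}\{\trace(\widehat{\mu}_w\widehat{\mu}_w^\top\widehat{\mu}_w\widehat{\mu}_w^\top)\}=\mathbb{E}\{\|\widehat{\mu}_w\|_2^4\}$. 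Using independence of the $w_k$, separate each such expectation into the ``diagonal'' contributions ($i=j$, or repeated indices), which are $O(1/t)$ relative to the off-diagonal ones, and the off-diagonal contributions, which involve only $\mu_w$ and $\Sigma_w$ and partially cancel against the $-2\trace(\widehat{\Sigma}_w\Sigma_w)+\trace(\Sigma_w^2)$ terms. The surviving terms are bounded using the four available scalar bounds: $\mathbb{E}\{\|w_i\|_2^4\}\leq\zeta$ (Assumption~\ref{assum:fourth_moment}), $\|\mu_w\|\leq\nu$ and $\|\mathbb{E}\{w_iw_i^\top w_i\}\|_2\leq\kappa$ (Assumption~\ref{assumption:nu_third_moment}), and $\Sigma_w\preceq\sigma I$ which gives $\trace(\Sigma_w)\leq n\sigma$ and $\trace(\Sigma_w^2)\leq n\sigma^2\leq \mathbb{E}\{\|w_i\|_2^4\}$-type bounds (Assumption~\ref{assumption:Sigma_w}); note that $\|\mathbb{E}\{w_iw_i^\top w_i\}\|_2$ is exactly what appears in the cross term $\mathbb{E}\{\trace(w_iw_i^\top \widehat{\mu}_w\widehat{\mu}_w^\top)\}$ after one index in $\widehat{\mu}_w$ coincides with $i$. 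Collecting the $1/t$, $1/(t-1)$, and $1/(t(t-1))$ prefactors and grouping by the power of $\nu$ (degree-4 term $\nu^4$, the mixed $n\sigma\nu^2$ term, and the $\kappa\nu$ term, plus the leading $\zeta/t$) yields precisely the expression $\gamma$ in~\eqref{eq:pi}.

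The third step is purely mechanical: substitute the bound $\mathbb{E}\{\trace((\widehat{\Sigma}_w-\Sigma_w)^2)\}\leq\gamma$ into Lemma~\ref{lemma:Chebyshev}, obtaining $\mathbb{P}\{\Sigma_w-\widehat{\Sigma}_w\preceq\varepsilon I\}\geq 1-\gamma/\varepsilon^2$, and choose $\varepsilon=\sqrt{\gamma/\delta}$. I expect the genuine obstacle to be the bookkeeping in the second step: correctly enumerating which index-coincidence patterns survive in $\mathbb{E}\{\trace(\widehat{\Sigma}_w^2)\}$ (there are several because $\widehat{\mu}_w$ is itself an average over all indices), keeping track of the interaction between the $\sum_k w_kw_k^\top$ and $-t\widehat{\mu}_w\widehat{\mu}_w^\top$ pieces, and verifying that the non-centered cross terms cancel cleanly against $-2\trace(\widehat{\Sigma}_w\Sigma_w)+\trace(\Sigma_w^2)$ so that only the claimed four families of terms remain. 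Once that combinatorial accounting is done carefully, matching it to the closed form~\eqref{eq:pi} is routine; the matrix-Chebyshev step and the choice of $\varepsilon$ are immediate given the earlier lemmas.
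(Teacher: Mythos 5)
Your proposal follows essentially the same route as the paper's proof: decompose $\widehat{\Sigma}_w=\frac{1}{t-1}\sum_i w_iw_i^\top-\frac{1}{t(t-1)}\sum_{k,\ell}w_kw_\ell^\top$, bound $\trace(\mathbb{E}\{\widehat{\Sigma}_w^2\})-\trace(\Sigma_w^2)$ by enumerating the index-coincidence patterns and invoking Assumptions~\ref{assumption:Sigma_w}--\ref{assumption:nu_third_moment} (with $\kappa$ entering exactly through the third-moment cross terms you identify), and then apply the matrix Chebyshev inequality of Lemma~\ref{lemma:Chebyshev} with $\varepsilon=\sqrt{\gamma/\delta}$. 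The remaining work you defer is precisely the combinatorial bookkeeping the paper carries out explicitly, so the plan is correct and not a different approach.
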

\begin{proof} Note that
\begin{align*}
\widehat{\Sigma}_w
=&\frac{1}{t-1}\sum_{i}\left(w_i-\frac{1}{t}\sum_{j}w_j \right)\left(w_i-\frac{1}{t}\sum_{j}w_j \right)^\top \\
=&\frac{1}{t-1}
\sum_{i}w_iw_i^\top 
-\frac{1}{t(t-1)}\sum_{k,\ell}w_kw_\ell^\top.
%-\frac{1}{t(t-1)}\sum_{i,j}w_j w_i^\top
%+\frac{1}{t^2(t-1)}\sum_{i}\sum_{j,k} w_jw_k^\top 
\end{align*}
We can therefore compute
\begin{align*}
\widehat{\Sigma}_w^2
=&
\Bigg(\frac{1}{t-1}
\sum_{i}w_iw_i^\top 
-\frac{1}{t(t-1)}\sum_{k,\ell}w_kw_\ell^\top  \Bigg)^2\\
=&\frac{1}{(t-1)^2} \sum_{i,j} w_iw_i^\top w_jw_j^\top \!-\!\frac{1}{t(t-1)^2} \sum_{i,k,\ell} w_iw_i^\top w_kw_\ell^\top\\
&-\frac{1}{t(t-1)^2} \sum_{i,k,\ell} w_kw_\ell^\top w_iw_i^\top\\
&+\frac{1}{t^2(t-1)^2} \sum_{i,j,k,\ell} w_kw_\ell^\top w_iw_j^\top\\
=&\frac{1}{t^2} \sum_{i,j} w_iw_i^\top w_jw_j^\top \!-\!\frac{1}{t(t-1)^2} \sum_{i,k\neq \ell} w_iw_i^\top w_kw_\ell^\top  \\
&-\frac{1}{t(t-1)^2} \sum_{i,k\neq \ell} w_kw_\ell^\top w_iw_i^\top
\\
&+\frac{1}{t^2(t-1)^2} \sum_{i\neq j,k\neq \ell} w_kw_\ell^\top w_iw_j^\top\\
=&\frac{1}{t^2} \sum_{i} (w_iw_i^\top)^2+\frac{1}{t^2} \sum_{i\neq j} w_iw_i^\top w_jw_j^\top \\
  &-\frac{1}{t(t-1)^2} \sum_{i,k\neq \ell} w_iw_i^\top w_kw_\ell^\top  \\
&-\frac{1}{t(t-1)^2} \sum_{i,k\neq \ell} w_kw_\ell^\top w_iw_i^\top\\
&+\frac{1}{t^2(t-1)^2} \sum_{i\neq j,k\neq \ell} w_kw_\ell^\top w_iw_j^\top.
\end{align*}
In what follows, we study each term in the above sum separately. The first term is bounded by
$
    \trace(\mathbb{E}\{(w_iw_i^\top)^2\})
    =\mathbb{E}\{(w_i^\top w_i)^2\}
\leq \zeta.
$
The second term is given by
\begin{align*}
    \mathbb{E}\Bigg\{\!\!\sum_{i\neq j} &\trace(w_iw_i^\top w_jw_j^\top)\!\Bigg\}
    \!=\!t(t\!-\!1) \trace((\mu_w\mu_w^\top \!+\!\Sigma_w)^2).
\end{align*}
The third term can be computed by
\begin{align*}
    \mathbb{E}\Bigg\{\sum_{i,k\neq \ell}& \trace(w_iw_i^\top w_kw_\ell^\top)\Bigg\}
    \\=&\sum_{i\neq \ell}\trace(\mathbb{E}\{w_iw_i^\top w_i\}\mu_w^\top)
    \\&+\sum_{i\neq k}\trace(\mathbb{E}\{w_i^\top w_iw_i^\top \}\mu_w)
    \\&+\sum_{i\neq \ell,i\neq k,k\neq \ell}\trace((\mu_w\mu_w^\top +\Sigma_w)\mu_w^\top \mu_w)\\
%   \\=&2\sum_{i\neq \ell}\trace(\mathbb{E}\{w_iw_i^\top w_i\}\mu_w^\top)
%     \\&+t(t-1)(t-2)\trace((\mu_w\mu_w^\top +\Sigma_w)\mu_w^\top \mu_w)
%     \\
    = & 2t(t-1)\varpi^\top \mu_w\\
    &+t(t-1)(t-2)\trace((\mu_w\mu_w^\top +\Sigma_w)\mu_w^\top \mu_w),
\end{align*}
where $\varpi$ is the third-order moment of the noise. We know that $\|\varpi\|_2 \leq \kappa$.
The fourth term is the transpose of the third term. Finally, the fifth term is given by
\begin{align*}
\sum_{i\neq j,k\neq \ell} \mathbb{E}&\{\trace(w_kw_\ell^\top w_iw_j^\top)\}
\\
=&\sum_{k\neq \ell} \mathbb{E}\{\trace(w_kw_\ell^\top w_kw_\ell^\top)\}\\
&+\sum_{k\neq j,k\neq \ell,\ell\neq j} \mathbb{E}\{\trace(w_kw_\ell^\top w_kw_j^\top)\}
\\
&+\sum_{k\neq \ell} \mathbb{E}\{\trace(w_kw_\ell^\top w_\ell w_k^\top)\}\\
&+\sum_{k\neq \ell,i\neq k,i\neq \ell} \mathbb{E}\{\trace(w_kw_\ell^\top w_iw_k^\top)\}\\
&+\sum_{i\neq j,i\neq k,j\neq k} \mathbb{E}\{\trace(w_kw_i^\top w_iw_j^\top)\}\\
&+\sum_{i\neq j,k\neq j,i\neq k} \mathbb{E}\{\trace(w_kw_j^\top w_iw_j^\top)\}\\
&+\sum_{i\neq j,k\neq \ell,i\neq k,j\neq k,\ell\neq i,\ell\neq j} \mathbb{E}\{\trace(w_kw_\ell^\top w_iw_j^\top)\}
\\
=&t(t-1)\trace((\Sigma_w+\mu_w\mu_w^\top)^2)\\
&+2t(t-1)(t-2) \trace((\Sigma_w+\mu_w\mu_w^\top) \mu_w\mu_w^\top)\\
&+t(t-1) \trace((\Sigma_w+\mu_w\mu_w^\top))^2\\
&+2t(t-1)(t-2) \trace(\Sigma_w+\mu_w \mu_w^\top) \mu_w^\top \mu_w\\
&+t(t-1)(t-2)(t-3) (\mu_w^\top\mu_w)^2
\end{align*}
where the last equality follows from
\begin{align*}
    \mathbb{E}\{w_i\mu_w^\top w_i\}&=
    \mathbb{E}\{\mathrm{vec}(w_i\mu_w^\top w_i)\}
    =\mathbb{E}\{w_i^\top \otimes w_i \mathrm{vec}(\mu_w^\top)\}\\
    &=\mathbb{E}\{w_i w_i^\top \} \mu_w 
    =(\Sigma_w+\mu_w\mu_w^\top) \mu_w,
\end{align*}
$
        \mathbb{E}\{w_j^\top w_iw_j^\top\}=
    \mathbb{E}\{(w_j w_i^\top w_j)^\top\}
    = \mu_w^\top (\Sigma_w+\mu_w\mu_w^\top),
$ 
and
\begin{align*}
    \mathbb{E}\{(w_j^\top w_i)^2\}
    =&\mathbb{E}\{w_j^\top w_iw_j^\top w_i\}
    =\mathbb{E}\{w_j^\top \mathrm{vec}(w_iw_j^\top w_i)\}\\
    =&\mathbb{E}\{w_j^\top w_i^\top \otimes w_i \mathrm{vec}(w_j^\top)\}\\
    =&\mathbb{E}\{w_j^\top (\Sigma_w+\mu_w\mu_w^\top) w_j\}\\
    =&\trace(\mathbb{E}\{w_jw_j^\top (\Sigma_w+\mu_w\mu_w^\top) \})\\
    =&\trace((\Sigma_w+\mu_w\mu_w^\top)^2).
\end{align*}
Combining these terms and algebraic manipulations yield
\begin{align*}
    \trace(\mathbb{E}\{\widehat{\Sigma}_w^2\})-\trace(\Sigma_w^2)
    &\leq \frac{1}{t}\zeta+\frac{t^2-t+1}{t(t-1)}\nu^4\\
    &+\frac{(t^2+2t-1)n\sigma \nu^2}{t(t-1)} +\frac{4\kappa \nu}{(t-1)}.
% \leq &\frac{1}{t}\zeta \\
%     &+\frac{(t-1)^2+2}{t(t-1)} \trace(\mu\mu^\top +\Sigma)^2 \\
%   &-\frac{4}{(t-1)} \varpi^\top \mu
%   \\
% &-\frac{(t-2)(t-4)}{t(t-1)} \trace(\Sigma) \mu^\top\mu
% \\
% &+\frac{(t-2)}{t(t-1)}  (\mu^\top\mu)^2\\
% \leq &\frac{1}{t}\zeta \\
%     &+\frac{(t-1)^2+2}{t(t-1)} (\nu^2+\sigma n)^2 \\
%   &+\frac{4\kappa \nu}{(t-1)} 
%   \\
% &+\frac{(t-2)\nu^4}{t(t-1)}  
\end{align*}
The matrix version of Chebyshev's inequality given in Lemma~\ref{lemma:Chebyshev} results in 
\begin{align*}
    \mathbb{P}\{\Sigma_w-\widehat{\Sigma}_w\preceq \varepsilon I\}
    \geq &1-\trace(\mathbb{E}\{ (\Sigma_w-\widehat{\Sigma}_w)^{2}\varepsilon^{-2}\})\\
    \geq & 1-\frac{\zeta}{t\varepsilon^{-2}}-\frac{(t^2-t+1)\nu^4}{t(t-1)\varepsilon^{-2}}\\
    &-\frac{(t^2+2t-1)n\sigma \nu^2}{t(t-1)\varepsilon^{-2}} -\frac{4\kappa \nu}{(t-1)\varepsilon^{-2}}.
\end{align*}
This concludes the proof.
\end{proof}

For non-Gaussian additive process noise with non-zero mean, we can adapt~\eqref{eqn:non_Gaussian} to:
\begin{subequations} \label{eqn:non_Gaussian:non-zero-mean}
\begin{align}
    &u_t= \argmin_{u}  \quad \mathbf{d}(u,\bar{u}_t),\\
    & \mathrm{s.t.}   \quad H_{t+1}(f(x_t) + g(x_t)u +w)\leq h_{t+1}, \nonumber\\ 
    &\quad \quad \quad\quad\quad \forall w:
     w^\top \left(\widehat{\Sigma}_w+ \sqrt{\frac{\gamma}{\delta}} I\right)^{-1} w \leq \frac{2n}{\delta}.
\end{align}
\end{subequations}

The following result holds for an arbitrary and potentially non-Gaussian process noise.% (potentially non-Gaussian), 
\begin{theorem}
\label{thm:non_Gaussian:non-zero-mean}
Assume that problem~\eqref{eqn:non_Gaussian} is feasible and Assumption~\ref{assum:fourth_moment} holds. Then, if we implement the control action $u_t$, $x_{t+1}$ is safe with probability of at least $1-\delta$.
\end{theorem}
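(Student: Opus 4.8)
The plan is to reuse the two-event (union-bound) template of Theorem~\ref{thm:safety_prob_alternative} and Theorem~\ref{thm:non_Gaussian}, replacing the covariance-concentration step by its non-Gaussian, non-zero-mean counterpart, Lemma~\ref{lemma:non_Gaussian_non_zero_mean}. So the hypotheses actually used are Assumptions~\ref{assumption:Sigma_w}, \ref{assum:fourth_moment} and \ref{assumption:nu_third_moment} together with feasibility of~\eqref{eqn:non_Gaussian:non-zero-mean}, and the first move is to rescale the constant in Lemma~\ref{lemma:non_Gaussian_non_zero_mean} so that the covariance event holds at level $1-\delta/2$: writing $A:=\{\Sigma_w\preceq\widehat\Sigma_w+\sqrt{\gamma/\delta}\,I\}$ for the appropriately enlarged $\gamma$, one has $\mathbb{P}\{A\}\geq 1-\delta/2$. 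Since $\widehat\Sigma_w$, and hence $A$, depends only on $w_0,\dots,w_{t-1}$, the event $A$ is independent of $w_t$.

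The steps, in order, are: (i) invoke Lemma~\ref{lemma:non_Gaussian_non_zero_mean} to obtain $A$ with $\mathbb{P}\{A\}\geq 1-\delta/2$; (ii) on $A$ use $\widehat\Sigma_w+\sqrt{\gamma/\delta}\,I\succeq\Sigma_w$, hence $(\widehat\Sigma_w+\sqrt{\gamma/\delta}\,I)^{-1}\preceq\Sigma_w^{-1}$, so the constraint ellipsoid in~\eqref{eqn:non_Gaussian:non-zero-mean} contains $\{w:w^\top\Sigma_w^{-1}w\leq 2n/\delta\}$; (iii) using independence of $w_t$ and $A$, conclude that the conditional probability given $A$ that $w_t$ satisfies the constraint of~\eqref{eqn:non_Gaussian:non-zero-mean} is at least $\mathbb{P}\{w_t^\top\Sigma_w^{-1}w_t\leq 2n/\delta\}$, and bound this by Markov's inequality via $\mathbb{E}\{w_t^\top\Sigma_w^{-1}w_t\}=\trace(\Sigma_w^{-1}\mathbb{E}\{w_tw_t^\top\})$; (iv) since feasibility of~\eqref{eqn:non_Gaussian:non-zero-mean} forces $H_{t+1}(f(x_t)+g(x_t)u_t+w_t)\leq h_{t+1}$ whenever $w_t$ lies in the constraint set, the law of total probability gives $\mathbb{P}\{H_{t+1}x_{t+1}\leq h_{t+1}\}\geq\mathbb{P}\{w_t\text{ feasible}\mid A\}\,\mathbb{P}\{A\}$, and the two factors are combined exactly as in the proof of Theorem~\ref{thm:non_Gaussian}.

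The delicate point, and the step I expect to be the main obstacle, is step~(iii): for non-zero mean $\mathbb{E}\{w_tw_t^\top\}=\Sigma_w+\mu_w\mu_w^\top$, so $\mathbb{E}\{w_t^\top\Sigma_w^{-1}w_t\}=n+\mu_w^\top\Sigma_w^{-1}\mu_w>n$, and the extra term $\mu_w^\top\Sigma_w^{-1}\mu_w$ is not controlled by the bound $\|\mu_w\|\leq\nu$ alone unless one also assumes a positive lower bound on $\Sigma_w$. Hence the radius $2n/\delta$ in~\eqref{eqn:non_Gaussian:non-zero-mean} is, as stated, not quite large enough and the clean two-event argument undercounts the mean. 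I would resolve this exactly as in Theorem~\ref{thm:safety_prob_nonzero_mean_alternative}: re-center the quadratic form at the empirical mean $\widehat\mu_w$, introduce a second good event $B:=\{\|\mu_w-\widehat\mu_w\|_2^2\leq 3n\sigma/(t\delta)\}$ (the conclusion of Lemma~\ref{lemma:emprical_mean} requires only a second-moment computation and Markov's inequality, so it carries over verbatim to non-Gaussian noise under Assumption~\ref{assumption:Sigma_w}), replace the single constraint ellipsoid by the Minkowski sum $\Omega$ of the inflated-covariance ellipsoid of radius $3n/\delta$ and a ball of radius governed by $B$, and split the probability budget three ways. Then on $A\cap B$ one has the inclusion $\{w:(w-\mu_w)^\top\Sigma_w^{-1}(w-\mu_w)\leq 3n/\delta\}\subseteq\Omega$, the final Markov step is applied to $(w_t-\mu_w)^\top\Sigma_w^{-1}(w_t-\mu_w)$ whose mean is exactly $n$, and, bounding $\mathbb{P}\{A\cap B\}$ by a union bound rather than by independence since for non-Gaussian noise the sample mean and sample covariance need not be independent, one obtains $\mathbb{P}\{H_{t+1}x_{t+1}\leq h_{t+1}\}\geq(1-\delta/3)(1-2\delta/3)\geq 1-\delta$.
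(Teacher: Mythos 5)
Your reading of the situation is accurate, and it is worth stating plainly how it relates to the paper: the paper's entire proof of Theorem~\ref{thm:non_Gaussian:non-zero-mean} is the one-line remark that the argument of Theorem~\ref{thm:non_Gaussian} goes through with Lemma~\ref{lemma:non_Gaussian_non_zero_mean} in place of Lemma~\ref{lemma:non_Gaussian} --- i.e., exactly the ``clean two-event argument'' that you reproduce in steps (i)--(iv) and then reject. Your objection to that argument is genuine: the Markov step in Theorem~\ref{thm:non_Gaussian} (inherited from Theorem~\ref{thm:safety_prob_alternative}) uses $\mathbb{E}\{w^\top\Sigma_w^{-1}w\}=n$, which is a zero-mean fact; here $\mathbb{E}\{w^\top\Sigma_w^{-1}w\}=n+\mu_w^\top\Sigma_w^{-1}\mu_w$, and nothing in Assumptions~\ref{assumption:Sigma_w}, \ref{assum:fourth_moment}, \ref{assumption:nu_third_moment} upper-bounds $\mu_w^\top\Sigma_w^{-1}\mu_w$ (there is no lower bound on $\Sigma_w$), so the claimed $1-\delta/2$ for the inner event does not follow. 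The paper's proof simply does not address this term, so you have identified a real gap in the paper's argument rather than a defect in your own reasoning. You also correctly note that the hypotheses as printed (feasibility of \eqref{eqn:non_Gaussian} and Assumption~\ref{assum:fourth_moment} only) must be typos, since Lemma~\ref{lemma:non_Gaussian_non_zero_mean} and problem \eqref{eqn:non_Gaussian:non-zero-mean} require Assumptions~\ref{assumption:Sigma_w} and \ref{assumption:nu_third_moment} as well, and a $\delta/2$ (or $\delta/3$) rescaling of the inflation, which \eqref{eqn:non_Gaussian:non-zero-mean} as written also omits.

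Two caveats on your repair. First, it is internally sound --- Lemma~\ref{lemma:emprical_mean} indeed only needs second moments and Markov, the inclusion argument on $A\cap B$ is the one from Theorem~\ref{thm:safety_prob_nonzero_mean_alternative}, the centered Markov step has mean exactly $n$, and replacing independence of $A$ and $B$ by a union bound is the right (and necessary) move for non-Gaussian noise, with $(1-\delta/3)(1-2\delta/3)\geq 1-\delta$ --- but it proves a \emph{modified} theorem: you change the constraint set of \eqref{eqn:non_Gaussian:non-zero-mean} (re-centering at $\widehat{\mu}_w$, radius $3n/\delta$, Minkowski-sum set $\Omega$), so strictly speaking it is not a proof of the statement as written. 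Second, the authors' intent appears to be that the mean is absorbed into $\gamma$ rather than handled by re-centering: since $\gamma\geq\frac{t^2-t+1}{t(t-1)}\nu^4>\nu^4$, on the good event one has $\mu_w^\top\bigl(\widehat{\Sigma}_w+\sqrt{\gamma/\delta}\,I\bigr)^{-1}\mu_w\leq\nu^2\sqrt{\delta/\gamma}<\sqrt{\delta}$, so splitting $w=\mu_w+v$ \emph{inside the inflated ellipsoid} (rather than passing through $w^\top\Sigma_w^{-1}w$, as both you and the paper do) nearly rescues the stated formulation; even then, however, the bookkeeping does not return exactly $(1-\delta/2)^2$ with radius $2n/\delta$, which supports your conclusion that the stated constants are slightly too tight. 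If you want to stay closest to the paper, that decomposition is the missing idea to try before resorting to the three-event reformulation; otherwise, your version should be presented explicitly as a corrected restatement of both \eqref{eqn:non_Gaussian:non-zero-mean} and the theorem.
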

\begin{proof}
The proof is similar to that of Theorem~\ref{thm:non_Gaussian}, with the exception of using Lemma~\ref{lemma:non_Gaussian_non_zero_mean} rather than Lemma~\ref{lemma:non_Gaussian}.
\end{proof}

Contrary to the zero-mean case, following the bounds in Lemma~\ref{lemma:non_Gaussian_non_zero_mean} can result in a significantly more conservative robust optimization if the process noise happens to be Gaussian. This is because $\lim_{t\rightarrow \infty} \gamma=\nu^4+n\sigma \nu^2$, where $\gamma$ is given by \eqref{eq:pi}, which implies that $\Sigma_w$ can be significantly different from $\widehat{\Sigma}_w$, while, by using Lemma~\ref{lemma:1_nonzero_mean},  $\widehat{\Sigma}_w$ concentrates around $\Sigma_w$ as $t$ tends to infinity.

\section{Gaussian Uncertainties with Spatially-Varying Mean and Covariance}
\label{sec:piecewise_constant}
In practice, the mean and the covariance are rarely fixed across the environment. To overcome this issue, we approximate location-dependent mean $\mu_w(x)$ and covariance $\Sigma_w(x)$ by piecewise constant functions. That is, we partition the space into sufficiently small non-overlapping connected sets $\mathcal{S}_i$ and, in each set, we assume that the mean and the covariance are constant. This is an effective strategy if the location-dependent mean and covariance are continuous functions. Therefore, if $x_k\in \mathcal{S}_i$, the process noise is Gaussian with mean $\mu_w^{(i)}$ and covariance $\Sigma^{(i)}_w$. Similar to the earlier sections, we do not know the exact values of the mean $\mu_w^{(i)}$ and covariance $\Sigma^{(i)}_w$ when deploying the agent.  %Figure~\ref{fig:partition} illustrates this partitioning. 
We make the following assumption regarding the quality of the piecewise constant approximation.

\begin{assumption} \label{assum:within_region}
For all regions $\mathcal{S}_i$, $\|\mu_w(x)-\mu_w^{(i)}\|_2\leq \varrho_1$ and $\|\Sigma_w(x)-\Sigma_w^{(i)}\|_F\leq \varrho_2$ for some positive constants $\varrho_1$ and $\varrho_2$.
\end{assumption}

In this section, it is further assumed that $\mu_w$ and $\Sigma_w$ are bounded as per Assumptions~\ref{assumption:Sigma_w} and  \ref{assumption:nu_third_moment}. 

In each $\mathcal{S}_i$, based on all the measurements in all time instances (up to time $t$) that we have been inside this region $\forall k: x_k\in\mathcal{S}_i$, we can learn the mean and the covariance of the process noise  according to
\begin{align*}
    \widehat{\mu}_w^{(i)}:=&\frac{1}{t_i} \sum_{k:x_k\in\mathcal{S}_i} (x_{k+1} - \widehat{x}_{k+1}),\\
    \widehat{\Sigma}_w^{(i)}:=&\frac{1}{t_i-1} \!\sum_{k:x_k\in\mathcal{S}_i}\!\!\!\! (x_{k+1}\!\! -\! \widehat{x}_{k+1}\!\! -\! \widehat{\mu}_w^{(i)})  (x_{k+1} \!\!-\! \widehat{x}_{k+1} \!\!-\! \widehat{\mu}_w^{(i)})^\top,
\end{align*}
where $t_i:=\sum_{k=0}^{t-1} \mathds{1}_{x_k\in\mathcal{S}_i}$, i.e., the number of measurements in region $\mathcal{S}_i$.
% \begin{align*}
%     t_i:=\sum_{k=0}^{t-1} \mathds{1}_{x_k\in\mathcal{S}_i}.
% \end{align*}
%{\color{red} we need a comment that we don't need to remember the whole sequence of $(x_k)$ to compute these.}
Note that we do not need to keep a record of all visited locations to implement these estimators. We can use an online estimator of the form:
\begin{align*}
t_i&\leftarrow t_i+1,\\
\widehat{\mu}_w^{(i)} &\leftarrow  \frac{t_i-1}{t_i}\widehat{\mu}_w^{(i)}+\frac{1}{t_i}(x_{k+1} - \widehat{x}_{k+1}) \\
    \widehat{\Sigma}_w^{(i)}&\leftarrow \frac{t_i-2}{t_i-1}\widehat{\Sigma}_w^{(i)}\\
    &\quad \quad +\frac{1}{t_i-1} (x_{k+1}\! - \widehat{x}_{k+1}\! - \widehat{\mu}_w^{(i)}) (x_{k+1}\! - \widehat{x}_{k+1} \!- \widehat{\mu}_w^{(i)})^\top
\end{align*}
if $x_k\in\mathcal{S}_i$. Therefore, we only need to store $n(n+1)/2$ elements of $\widehat{\Sigma}_w^{(i)}$, $n$ elements of $\widehat{\mu}_w^{(i)}$, and $t_i$, totalling $n^2/2+3n/2+1$ scalars for each region. 

\begin{corollary} \label{cor:emprical} The following hold:
\begin{align}
    \mathbb{P}\bigg\{\| \mu_w^{(i)}-\widehat{\mu}_w^{(i)}\|_2^2\leq\frac{\varrho_1^2}{\delta}+\frac{\sigma n}{t^2\delta}\bigg\}
    &\geq 1-\delta,  \forall t_i\geq 1, \label{generic_bound_mean}\\
    \mathbb{P}\left\{\Sigma_w^{(i)}\preceq \widehat{\Sigma}_w^{(i)}+  \bigg(\frac{n\varrho_2+2\nu\varrho_1}{\delta}\bigg)I\right\}&\geq 1-\delta,  \forall t_i\geq 2. \label{generic_bound_covariance}
\end{align}
\end{corollary}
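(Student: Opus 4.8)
The plan is to prove the two inequalities separately, in each case decomposing the total error into a \emph{deterministic within-region approximation error}, controlled directly by Assumption~\ref{assum:within_region}, and a \emph{random sampling error}, handled exactly as in the homogeneous-region results of Section~\ref{sec:non_zero_mean}; a single application of Markov's inequality then delivers the claimed high-probability bounds. Throughout I would condition on the realised closed-loop trajectory, so that $t_i$ and the visited states $\{x_k : x_k\in\mathcal{S}_i\}$ are fixed; conditionally, the residuals $w_k=x_{k+1}-\widehat{x}_{k+1}$ over those indices are \emph{independent} Gaussian vectors with means $\mu_w(x_k)$ and covariances $\Sigma_w(x_k)$, each satisfying $\|\mu_w(x_k)-\mu_w^{(i)}\|_2\le\varrho_1$, $\|\Sigma_w(x_k)-\Sigma_w^{(i)}\|_F\le\varrho_2$, $\|\mu_w(x_k)\|_2\le\nu$, and $\Sigma_w(x_k)\preceq\sigma I$. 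Since the claimed bounds do not involve the trajectory, the conditional statements lift to the unconditional ones by the law of total probability.

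For~\eqref{generic_bound_mean}, write $\widehat{\mu}_w^{(i)}-\mu_w^{(i)}=\big(\widehat{\mu}_w^{(i)}-\mathbb{E}\{\widehat{\mu}_w^{(i)}\}\big)+\big(\mathbb{E}\{\widehat{\mu}_w^{(i)}\}-\mu_w^{(i)}\big)$, where $\mathbb{E}\{\widehat{\mu}_w^{(i)}\}=t_i^{-1}\sum_{k:x_k\in\mathcal{S}_i}\mu_w(x_k)$ is a convex combination of vectors within $\varrho_1$ of $\mu_w^{(i)}$, so the second (deterministic) term has norm at most $\varrho_1$, while the first term is zero-mean with $\mathbb{E}\{\|\widehat{\mu}_w^{(i)}-\mathbb{E}\{\widehat{\mu}_w^{(i)}\}\|_2^2\}=t_i^{-2}\sum_k\trace\Sigma_w(x_k)\le n\sigma/t_i$ by independence and Assumption~\ref{assumption:Sigma_w}. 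Since the cross term has zero mean, $\mathbb{E}\{\|\widehat{\mu}_w^{(i)}-\mu_w^{(i)}\|_2^2\}\le\varrho_1^2+n\sigma/t_i$, and Markov's inequality, exactly as in the proof of Lemma~\ref{lemma:emprical_mean}, yields~\eqref{generic_bound_mean}.

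For~\eqref{generic_bound_covariance}, the main computation is $\mathbb{E}\{\widehat{\Sigma}_w^{(i)}\}$ for the heterogeneous sample. Using the identity $\sum_k(w_k-\widehat{\mu}_w^{(i)})(w_k-\widehat{\mu}_w^{(i)})^\top=\sum_k(w_k-\mu_w^{(i)})(w_k-\mu_w^{(i)})^\top-t_i(\widehat{\mu}_w^{(i)}-\mu_w^{(i)})(\widehat{\mu}_w^{(i)}-\mu_w^{(i)})^\top$ and taking expectations, one gets $\mathbb{E}\{\widehat{\Sigma}_w^{(i)}\}=t_i^{-1}\sum_k\Sigma_w(x_k)+(t_i-1)^{-1}\sum_k(\mu_w(x_k)-\bar{\mu})(\mu_w(x_k)-\bar{\mu})^\top$ with $\bar{\mu}=t_i^{-1}\sum_k\mu_w(x_k)$, i.e., the average of the within-region covariances plus a positive-semidefinite dispersion-of-means term. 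To obtain a bound that is linear in $1/\delta$, I would use that for any symmetric $M$, $\mathbb{P}\{M\preceq\varepsilon I\}\ge\mathbb{P}\{\|M\|_F\le\varepsilon\}\ge 1-\mathbb{E}\{\|M\|_F\}/\varepsilon$ (since $\lambda_{\max}(M)\le\|M\|_F$, then Markov), applied with $M=\Sigma_w^{(i)}-\widehat{\Sigma}_w^{(i)}$. It then remains to bound $\mathbb{E}\{\|\Sigma_w^{(i)}-\widehat{\Sigma}_w^{(i)}\|_F\}$ by splitting it into the norm of the deterministic bias $\Sigma_w^{(i)}-\mathbb{E}\{\widehat{\Sigma}_w^{(i)}\}$ (controlled through $\|\Sigma_w(x_k)-\Sigma_w^{(i)}\|_F\le\varrho_2$ and, for the dispersion term, through $\|\mu_w(x_k)-\bar\mu\|_2\le 2\varrho_1$ together with $\|\mu_w\|_2\le\nu$ from Assumption~\ref{assumption:nu_third_moment}) plus the zero-mean sampling fluctuation $\mathbb{E}\{\|\widehat{\Sigma}_w^{(i)}-\mathbb{E}\{\widehat{\Sigma}_w^{(i)}\}\|_F\}\le(\mathbb{E}\{\trace((\widehat{\Sigma}_w^{(i)}-\mathbb{E}\{\widehat{\Sigma}_w^{(i)}\})^2)\})^{1/2}$, the latter evaluated by the fourth-moment bookkeeping in the proofs of Lemmas~\ref{lemma:1_nonzero_mean} and~\ref{lemma:non_Gaussian_non_zero_mean}; collecting constants puts the result in the form~\eqref{generic_bound_covariance}. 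Alternatively, one may bound $\Sigma_w^{(i)}-\mathbb{E}\{\widehat{\Sigma}_w^{(i)}\}\preceq\varrho_2 I$ directly (discarding the positive-semidefinite dispersion term) and apply the matrix Chebyshev inequality of Lemma~\ref{lemma:Chebyshev} to $\mathbb{E}\{\widehat{\Sigma}_w^{(i)}\}-\widehat{\Sigma}_w^{(i)}$ as in Lemma~\ref{lemma:1_nonzero_mean}.

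The step I expect to be the main obstacle is this covariance sampling term: writing $\mathbb{E}\{\widehat{\Sigma}_w^{(i)}\}$ correctly for \emph{independent but not identically distributed} samples, and then tracking the higher-moment contributions so that the deviation of $\widehat{\Sigma}_w^{(i)}$ from $\Sigma_w^{(i)}$ reduces to the within-region constants $\varrho_1,\varrho_2$ and the global bounds $\sigma,\nu$ in the stated shape. A secondary subtlety is that $t_i$ and the visited-state sequence are themselves generated by the closed loop, so the conditioning argument of the first paragraph (or an equivalent appeal to the law of total probability) is needed to justify treating the residuals as independent Gaussians with the prescribed per-step parameters.
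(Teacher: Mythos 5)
Your treatment of the mean bound is essentially the paper's own argument (it is the heterogeneous-sample reasoning of Lemma~\ref{lemma:varying_distribution}, specialised to the samples collected in $\mathcal{S}_i$): bias of $\mathbb{E}\{\widehat{\mu}_w^{(i)}\}$ at most $\varrho_1$ by convexity, zero-mean sampling fluctuation, then scalar Markov. One caveat: your (correct) computation gives $\mathbb{E}\{\|\widehat{\mu}_w^{(i)}-\mu_w^{(i)}\|_2^2\}\leq\varrho_1^2+n\sigma/t_i$, so Markov yields the bound with $\sigma n/(t_i\delta)$, not the printed $\sigma n/(t^2\delta)$; the $t^2$ in the statement traces to the step in Lemma~\ref{lemma:varying_distribution} where $\frac{1}{t^2}\sum_j\trace(\Sigma_j)$ is bounded by $\sigma n/t^2$ instead of $\sigma n/t$, so your constant is the defensible one, but you should state explicitly that you are proving \eqref{generic_bound_mean} with $t_i$ in place of $t^2$ rather than the inequality as printed.

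The genuine gap is in the covariance half. The claimed slack $(n\varrho_2+2\nu\varrho_1)/\delta$ in \eqref{generic_bound_covariance} contains no $t_i$, $\sigma$, or $\zeta$: it is a pure bias bound. The paper gets exactly this shape (the constants are those of Lemma~\ref{lemma:varying_distribution}; the proof-by-reference to Lemmas~\ref{lemma:emprical_mean} and~\ref{lemma:1_nonzero_mean} only indicates the style of argument) by computing $\mathbb{E}\{\Sigma_w^{(i)}-\widehat{\Sigma}_w^{(i)}\}$, bounding its trace by $n\varrho_2+2\nu\varrho_1$ via $|\trace(\Sigma_w(x_k)-\Sigma_w^{(i)})|\leq n\varrho_2$ and $|\mu_w(x_k)^\top(\mu_w(x_k)-\mu_w(x_\ell))|\leq 2\nu\varrho_1$, and then applying a matrix-Markov-type step directly to $\Sigma_w^{(i)}-\widehat{\Sigma}_w^{(i)}$, so the zero-mean fluctuation of $\widehat{\Sigma}_w^{(i)}$ never has to be bounded at all. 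Both of your variants instead insist on controlling that fluctuation, through $\mathbb{E}\{\|\widehat{\Sigma}_w^{(i)}-\mathbb{E}\{\widehat{\Sigma}_w^{(i)}\}\|_F\}$ or through matrix Chebyshev with second moments, and any such term is of order $\sqrt{\zeta/t_i}$ or $\sigma\sqrt{n(n+1)/(\delta t_i)}$; since the corollary only requires $t_i\geq 2$ and $\varrho_1,\varrho_2$ can be arbitrarily small, this contribution cannot be absorbed into $(n\varrho_2+2\nu\varrho_1)/\delta$. So "collecting constants" will not put your bound in the form \eqref{generic_bound_covariance}: you would prove a strictly weaker statement with an extra additive, $t_i$- and $\sigma$- (or $\zeta$-) dependent slack. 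Your first-moment formula $\mathbb{E}\{\widehat{\Sigma}_w^{(i)}\}=t_i^{-1}\sum_k\Sigma_w(x_k)+(t_i-1)^{-1}\sum_k(\mu_w(x_k)-\bar{\mu})(\mu_w(x_k)-\bar{\mu})^\top$ is correct and is exactly what the paper's route needs; the missing idea is to stop there and apply the concentration step to the bias alone. (Be aware, though, that the paper's own step, like your matrix-Chebyshev alternative, applies Lemma~\ref{lemma:matrix_Markov_inequality}/Lemma~\ref{lemma:Chebyshev} to the sign-indefinite matrix $\Sigma_w^{(i)}-\widehat{\Sigma}_w^{(i)}$ even though those lemmas assume almost-sure positive semidefiniteness; your Frobenius-norm variant avoids that issue but pays for it with the weaker, fluctuation-dependent constant.)
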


\begin{proof}
The proof follows from the same line of reasoning as in Lemmas~\ref{lemma:emprical_mean} and~\ref{lemma:1_nonzero_mean}.
\end{proof}

% \alex{Q: Do we need to refer to Lemma \ref{lemma:varying_distribution} for the proof?}

Therefore, if $x_k\in S_i$, we can solve the robust optimization~\eqref{eqn:control_approximation_nonzero_mean} to manoeuvre safely. The bounds in Corollary~\ref{cor:emprical} may sometimes be conservative, since they do not use the fact that we have gathered measurements in a surrounding region $\mathcal{S}_j$, and that the changes in the mean $\mu_w^{(i)}$ and covariance $\Sigma^{(i)}_w$  cannot be arbitrarily abrupt. We say that $\mathcal{S}_j$ is a \emph{neighbour} of $\mathcal{S}_i$ if there exists a control action that can move an agent from some point in $\mathcal{S}_i$ to another point in $\mathcal{S}_{i+1}$ in one time step. %For instance, %in Figure~\ref{fig:partition}, regions $\mathcal{S}_i$ and $\mathcal{S}_{i+1}$ are neighbours. 

\begin{assumption} \label{assum:small_changes}
For any two neighbouring regions $\mathcal{S}_i$ and $\mathcal{S}_j$, $\|\mu_w^{(i)}-\mu_w^{(j)}\|_2\leq \rho_1$ and $\|\Sigma_w^{(i)}-\Sigma_w^{(j)}\|_F\leq \rho_2 $ for some positive constants $\rho_1$ and $\rho_2$. 
\end{assumption}

\begin{remark}
Instead of the norm bounds in Assumption~\ref{assum:small_changes}, we can use any metric on probability distributions, such as the Wasserstein distance~\cite{givens1984class}.
\end{remark}

\begin{lemma} \label{lemma:improved_mean}
Let $\mathcal{N}_i$ denote any set consisting of the indices of neighbours of $\mathcal{S}_i$ for which $t_j\geq 1$ for all $j\in\mathcal{N}_i$. Denote $m_i=|\mathcal{N}_i|$. Then,
\begin{align*}
  &  \mathbb{P}\Bigg\{\|\mu_w^{(i)}\hspace{-.03in}-\hspace{-.03in}\widehat{\mu}_w^{(i)}\|_2^2 \leq \frac{(m_i+1)(t_i^2\varrho_1^2+\sigma n)}{t_i^2\delta} \bigwedge \\ &\quad \quad \|\mu_w^{(i)}\hspace{-.03in}-\hspace{-.03in}\widehat{\mu}_w^{(j)}\|_2^2 \leq \left(\rho_1+ \sqrt{\frac{(t_j^2\varrho_1^2+\sigma n) (m_i+1)}{t_j^2\delta}}\right)^2,\\
  &\quad \quad\forall j\in\mathcal{N}_i\Bigg\} \geq 1-\delta.
\end{align*}
\end{lemma}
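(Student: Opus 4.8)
The plan is to union-bound over $m_i+1$ separate concentration events — one for the estimator $\widehat{\mu}_w^{(i)}$ built from data in $\mathcal{S}_i$ itself, and one for each neighbour estimator $\widehat{\mu}_w^{(j)}$, $j\in\mathcal{N}_i$ — each holding with probability at least $1-\delta/(m_i+1)$, so that the intersection holds with probability at least $1-\delta$. The first event is the statement already proved in Corollary~\ref{cor:emprical}, inequality~\eqref{generic_bound_mean}, applied with $\delta$ replaced by $\delta/(m_i+1)$: this gives $\|\mu_w^{(i)}-\widehat{\mu}_w^{(i)}\|_2^2 \leq (m_i+1)(\varrho_1^2/\delta + \sigma n/(t_i^2\delta))$, which is exactly the first bound in the conjunction after clearing denominators into the form $(m_i+1)(t_i^2\varrho_1^2 + \sigma n)/(t_i^2\delta)$.

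For each neighbour $j$, I would first apply Corollary~\ref{cor:emprical} again (with $\delta \mapsto \delta/(m_i+1)$ and $t_i \mapsto t_j$) to get $\|\mu_w^{(j)}-\widehat{\mu}_w^{(j)}\|_2^2 \leq (m_i+1)(t_j^2\varrho_1^2+\sigma n)/(t_j^2\delta)$ on an event of probability at least $1-\delta/(m_i+1)$. On this event, a triangle inequality gives
\[
\|\mu_w^{(i)}-\widehat{\mu}_w^{(j)}\|_2 \leq \|\mu_w^{(i)}-\mu_w^{(j)}\|_2 + \|\mu_w^{(j)}-\widehat{\mu}_w^{(j)}\|_2 \leq \rho_1 + \sqrt{\frac{(t_j^2\varrho_1^2+\sigma n)(m_i+1)}{t_j^2\delta}},
\]
where the bound $\|\mu_w^{(i)}-\mu_w^{(j)}\|_2\leq\rho_1$ is Assumption~\ref{assum:small_changes} (valid since $\mathcal{S}_j$ is a neighbour of $\mathcal{S}_i$). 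Squaring yields the second bound in the conjunction.

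Finally, intersecting all $m_i+1$ events and using the union bound, $\mathbb{P}(\bigcap) \geq 1 - (m_i+1)\cdot \delta/(m_i+1) = 1-\delta$, which is the claimed statement. The only mild subtlety — and the step I would be most careful about — is that the $m_i+1$ estimators $\widehat{\mu}_w^{(i)}, \{\widehat{\mu}_w^{(j)}\}_{j\in\mathcal{N}_i}$ need not be independent (they use disjoint sets of noise samples as long as each visited state lies in exactly one region, which it does since the $\mathcal{S}_i$ are non-overlapping, but even independence is not needed here), so the argument must rely purely on the union bound rather than on multiplying probabilities; no independence is invoked, so this causes no difficulty. A secondary point worth stating explicitly is that each $\widehat{\mu}_w^{(j)}-\mu_w^{(j)}$ is the sample-mean error of the $t_j$ samples collected while in $\mathcal{S}_j$, so Corollary~\ref{cor:emprical} (equivalently Lemma~\ref{lemma:emprical_mean} combined with Assumption~\ref{assum:within_region}) applies verbatim to each region.
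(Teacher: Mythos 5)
Your proposal is correct and follows essentially the same route as the paper's proof: apply the per-region concentration bound (Corollary~\ref{cor:emprical}, i.e.\ \eqref{generic_bound_mean}) with $\delta$ replaced by $\delta/(m_i+1)$ to region $i$ and to each neighbour $j$, use the triangle inequality together with Assumption~\ref{assum:small_changes} to convert $\|\mu_w^{(j)}-\widehat{\mu}_w^{(j)}\|_2$ into a bound on $\|\mu_w^{(i)}-\widehat{\mu}_w^{(j)}\|_2$, and combine the $m_i+1$ events by a union bound. Your explicit remark that only the union bound (and no independence between the region estimators) is needed is consistent with, and slightly more careful than, the paper's terse ``combining these probability inequalities'' step.
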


\begin{proof} Note that, for all $j\in\mathcal{N}_i$, $\mathbb{P}\{\|\mu_w^{(j)}-\widehat{\mu}_w^{(j)}\|_2^2 \leq (m_i+1)(\varrho_1^2t_j^2+\sigma n)/(t_j^2\delta)\geq 1- {\delta}/{(m_i+1)}$. Thus, 
\begin{align*}
 \mathbb{P}\Bigg\{\|&\mu_w^{(i)}-\widehat{\mu}_w^{(j)}\|_2^2 \leq \left(\rho_1+ \sqrt{\frac{(t_j^2\varrho_1^2+\sigma n) (m_i+1)}{t_j^2\delta}}\right)^2 \Bigg\} 
\\ & \geq  \mathbb{P}\Bigg\{\|\mu_w^{(i)}-\mu_w^{(j)}\|_2 + \|\mu_w^{(j)} - \widehat{\mu}_w^{(j)}\|_2 \\
&\hspace{1.1in}\leq \rho_1+ \sqrt{\frac{(t_j^2\varrho_1^2+\sigma n) (m_i+1)}{t_j^2\delta}} \Bigg\} 
\\ & \geq  \mathbb{P}\left\{\|\mu_w^{(j)}-\widehat{\mu}_w^{(j)}\|_2 \leq  \sqrt{\frac{(t_j^2\varrho_1^2+\sigma n) (m_i+1)}{t_j^2\delta}} \right\} \\
&\geq 1-\frac{\delta}{m_i+1}.
\end{align*} 
Combining these probability inequalities  with 
$$\mathbb{P}\left\{\|\mu_w^{(i)}-\widehat{\mu}_w^{(i)}\|_2^2 \leq \frac{(m_i+1)(t_i^2\varrho_1^2+\sigma n)}{t_i^2\delta}\right\}\geq 1- \frac{\delta}{m_i+1}$$ concludes the proof. 
\end{proof}

Depending on the values of $m_i$ and $t_j$ for $j\in\mathcal{N}_i$, the bound in Lemma~\ref{lemma:improved_mean} may be tighter than the generic bound (\ref{generic_bound_mean}) in Corollary~\ref{cor:emprical}. 

\begin{lemma} \label{lemma:improved_variance}
Let $\mathcal{N}_i$ denote any set consisting of the indices of neighbours of $\mathcal{S}_i$ for which $t_j\geq 2$ for all $j\in\mathcal{N}_i$. Denote $m_i=|\mathcal{N}_i|$. Then,
\begin{align*}
    \mathbb{P}\Bigg\{&\Sigma_w^{(i)}\preceq \widehat{\Sigma}_w^{(i)}+
    \frac{(n\tilde{\rho}_2+2\nu\tilde{\rho}_1)(m_i+1)}{\delta} I  \\ & \bigwedge \Bigg[\Sigma_w^{(j)}\preceq  \widehat{\Sigma}_w^{(j)}+ \frac{(n\tilde{\rho}_2+2\nu\tilde{\rho}_1)(m_i+1)}{\delta} I \\ & \quad \quad \bigwedge \|\Sigma_w^{(i)}-\Sigma_w^{(j)}\|_F\leq \rho_2 \Bigg] ,\forall j\in\mathcal{N}_i\Bigg\} \geq 1-\delta.
\end{align*}
\end{lemma}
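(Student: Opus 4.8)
The plan is to follow the same template as the proof of Lemma~\ref{lemma:improved_mean}, but with the sample means replaced by the sample covariances and the scalar Markov argument replaced by the matrix-Chebyshev bound \eqref{generic_bound_covariance} of Corollary~\ref{cor:emprical}. The structural point to notice up front is that the conjunction in the statement is built from only $m_i+1$ genuinely random events — the covariance concentration inequality for region $\mathcal{S}_i$ and one for each neighbour $\mathcal{S}_j$, $j\in\mathcal{N}_i$ — together with the $m_i$ inequalities $\|\Sigma_w^{(i)}-\Sigma_w^{(j)}\|_F\le\rho_2$, which are \emph{deterministic}: they hold with probability one, because every $j\in\mathcal{N}_i$ is by construction a neighbour of $\mathcal{S}_i$ and Assumption~\ref{assum:small_changes} applies verbatim. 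Consequently, adjoining the Frobenius-norm constraints to the event does not change its probability, and a single union bound over the $m_i+1$ random events suffices. (Here $\tilde{\rho}_1,\tilde{\rho}_2$ are read as the within-region constants $\varrho_1,\varrho_2$ of Assumption~\ref{assum:within_region}, exactly as in Corollary~\ref{cor:emprical}.)

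Concretely, first I would invoke \eqref{generic_bound_covariance} with $\delta$ replaced by $\delta/(m_i+1)$, applied in turn to region $\mathcal{S}_i$ (for which we assume $t_i\ge 2$, so that the corollary is applicable) and to each neighbour $\mathcal{S}_j$, $j\in\mathcal{N}_i$ (for which $t_j\ge 2$ by hypothesis). Since substituting $\delta\mapsto\delta/(m_i+1)$ turns the constant $\tfrac{n\varrho_2+2\nu\varrho_1}{\delta}$ of \eqref{generic_bound_covariance} into exactly $\tfrac{(n\varrho_2+2\nu\varrho_1)(m_i+1)}{\delta}$, this yields, for every $r\in\{i\}\cup\mathcal{N}_i$,
\[
  \mathbb{P}\!\left\{\Sigma_w^{(r)}\preceq \widehat{\Sigma}_w^{(r)}+\frac{(n\varrho_2+2\nu\varrho_1)(m_i+1)}{\delta}\,I\right\}\ge 1-\frac{\delta}{m_i+1}.
\]
By the union bound (Boole's inequality), the probability that at least one of these $m_i+1$ events fails is at most $(m_i+1)\cdot\frac{\delta}{m_i+1}=\delta$, hence all of them hold simultaneously with probability at least $1-\delta$. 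Intersecting with the probability-one Frobenius-norm constraints of Assumption~\ref{assum:small_changes} leaves the bound unchanged, which is precisely the claim.

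The only step needing care is the bookkeeping of the union bound — allocating exactly $m_i+1$ confidence levels $\delta/(m_i+1)$ (one to $\mathcal{S}_i$ and one to each of the $m_i$ neighbours in $\mathcal{N}_i$) and not spending any probability budget on the deterministic Frobenius constraints. Beyond that there is no genuine obstacle: in contrast to Lemma~\ref{lemma:improved_mean}, the cross-region term here is $\|\Sigma_w^{(i)}-\Sigma_w^{(j)}\|_F\le\rho_2$ rather than a quantity involving the empirical covariance $\widehat{\Sigma}_w^{(j)}$, so there is no need to split a random quantity by the triangle inequality and re-distribute confidence. If useful, I would close with a one-line remark on the downstream use: on the good event, $\Sigma_w^{(j)}\preceq\widehat{\Sigma}_w^{(j)}+CI$ together with $\|\Sigma_w^{(i)}-\Sigma_w^{(j)}\|_F\le\rho_2$ (hence $\Sigma_w^{(i)}-\Sigma_w^{(j)}\preceq\rho_2 I$) gives $\Sigma_w^{(i)}\preceq\widehat{\Sigma}_w^{(j)}+(C+\rho_2)I$ for each $j\in\mathcal{N}_i$, so one may use the tightest of the $m_i+1$ resulting upper envelopes on $\Sigma_w^{(i)}$ when forming the robust program \eqref{eqn:control_approximation_nonzero_mean}.
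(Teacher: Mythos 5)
Your proof is correct and matches the paper's intended argument: the paper simply says the result "follows from a similar reasoning as in the proof of Lemma~\ref{lemma:improved_mean}," i.e., apply the per-region covariance bound of Corollary~\ref{cor:emprical} with confidence $\delta/(m_i+1)$ to $\mathcal{S}_i$ and each $\mathcal{S}_j$, $j\in\mathcal{N}_i$, take a union bound, and note that the constraints $\|\Sigma_w^{(i)}-\Sigma_w^{(j)}\|_F\leq\rho_2$ hold deterministically under Assumption~\ref{assum:small_changes}. Your explicit bookkeeping (including the implicit $t_i\geq 2$ requirement and the identification $\tilde{\rho}_k=\varrho_k$) is a faithful, if slightly more detailed, rendering of that same route.
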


\begin{proof}
The proof follows from a similar reasoning as in the proof of Lemma~\ref{lemma:improved_mean}. 
\end{proof}

\subsection{Merging Regions}
Most often, to get piecewise constant mean and covariance, we need to segment the space into small regions. This increases the memory required for recalling $t_i$, $\widehat{\mu}_w^{(i)}$, and $\widehat{\Sigma}_w^{(i)}$ for all those regions. In this section, we provide a method for merging these regions.

\begin{lemma}
\label{lemma:merging}
Consider regions $\{\mathcal{S}_i\}_{i\in\mathcal{I}}$. Assume that there exists $i_1,i_2\in\mathcal{I}$ such that  
\begin{align*}
&\|\widehat{\mu}_w^{(i_1)}-\widehat{\mu}_w^{(i_2)}\|_2
    +\|\widehat{\mu}_w^{(i_1)}-\widehat{\mu}_w^{(j)}\|_2
    \\
    &\leq\! \rho_1 \!-\!
    2\!\sqrt{\frac{t_{i_1}^2\!\varrho_1^2\!+\!\sigma n}{t_{i_1}^2\delta}}
    \!-\!
    \sqrt{\frac{t_{i_2}^2\!\varrho_1^2\!+\!\sigma n}{t_{i_2}^2\delta}}
    \!-\!
    \sqrt{\frac{t_j^2\!\varrho_1^2\!+\!\sigma n}{t_j^2\delta}},\!\forall j\!\in\!\mathcal{N}_{i_1},\\
&\|\widehat{\mu}_w^{(i_1)}-\widehat{\mu}_w^{(i_2)}\|
    +\|\widehat{\mu}_w^{(i_2)}-\widehat{\mu}_w^{(j)}\|
    \\
    &\leq\! \rho_1 
    \!-\!
    \sqrt{\frac{t_{i_1}^2\!\varrho_1^2\!+\!\sigma n}{t_{i_1}^2\delta}}
    \!-\!
    2\!\sqrt{\frac{t_{i_2}^2\!\varrho_1^2\!+\!\sigma n}{t_{i_2}^2\delta}}
    \!-\!
    \sqrt{\frac{t_j^2\!\varrho_1^2\!+\!\sigma n}{t_j^2\delta}},\!\forall j\!\in\!\mathcal{N}_{i_2},
\end{align*}
and
\begin{align*}
\|\widehat{\Sigma}_w^{(i_1)}-&\widehat{\Sigma}_w^{(i_2)}\|_F
    +\|\widehat{\Sigma}_w^{(i_1)}-\widehat{\Sigma}_w^{(j)}\|_F
    \\&\leq \rho_2-\frac{8n(n\varrho_2+2\nu\varrho_1)}{\delta},\forall j\in\mathcal{N}_{i_1},\\
\|\widehat{\Sigma}_w^{(i_1)}-&\widehat{\Sigma}_w^{(i_2)}\|_F
    +\|\widehat{\Sigma}_w^{(i_2)}-\widehat{\Sigma}_w^{(j)}\|_F
    \\&\leq \rho_2-\frac{8n(n\varrho_2+2\nu\varrho_1)}{\delta},\forall j\in\mathcal{N}_{i_2}.
\end{align*}
Then, with probability of at least $1-16\delta$, 
\begin{subequations}
\begin{align}
    \|\mu_w^{(i_2)}-\mu_w^{(j)}\|_2 \leq \rho_1,\qquad \forall j\in\mathcal{N}_{i_1},\label{eqn:1:lemma:merging}\\
    \|\mu_w^{(i_1)}-\mu_w^{(j)}\|_2 \leq \rho_1,\qquad \forall j\in\mathcal{N}_{i_2},\label{eqn:2:lemma:merging}\\
    \|{\Sigma}_w^{(i_2)}-{\Sigma}_w^{(j)}\|_F
    \leq \rho_2,\qquad\forall j\in\mathcal{N}_{i_1},\label{eqn:3:lemma:merging}\\
    \|{\Sigma}_w^{(i_1)}-{\Sigma}_w^{(j)}\|_F
    \leq \rho_2,\qquad\forall j\in\mathcal{N}_{i_2}.\label{eqn:4:lemma:merging}
\end{align}
\end{subequations}
\end{lemma}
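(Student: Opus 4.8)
The plan is to obtain each of \eqref{eqn:1:lemma:merging}--\eqref{eqn:4:lemma:merging} from a triangle inequality that connects the \emph{true} parameters of two regions through the \emph{empirical} estimates of $\mathcal{S}_{i_1}$, $\mathcal{S}_{i_2}$ and $\mathcal{S}_j$, and then to absorb the true-versus-empirical discrepancies by means of Corollary~\ref{cor:emprical}, exactly in the spirit of the proofs of Lemmas~\ref{lemma:improved_mean} and~\ref{lemma:improved_variance}.

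Consider the mean statement \eqref{eqn:1:lemma:merging} and fix $j\in\mathcal{N}_{i_1}$. Inserting $\widehat{\mu}_w^{(i_2)},\widehat{\mu}_w^{(i_1)},\widehat{\mu}_w^{(j)}$ between $\mu_w^{(i_2)}$ and $\mu_w^{(j)}$ gives
\begin{align*}
\|\mu_w^{(i_2)}-\mu_w^{(j)}\|_2
&\le \|\mu_w^{(i_2)}-\widehat{\mu}_w^{(i_2)}\|_2
  +\|\widehat{\mu}_w^{(i_1)}-\widehat{\mu}_w^{(i_2)}\|_2\\
&\quad +\|\widehat{\mu}_w^{(i_1)}-\widehat{\mu}_w^{(j)}\|_2
  +\|\widehat{\mu}_w^{(j)}-\mu_w^{(j)}\|_2 .
\end{align*}
On the event in \eqref{generic_bound_mean} for $\mathcal{S}_{i_2}$ and for $\mathcal{S}_j$, the first and last terms are at most $\sqrt{(t_{i_2}^2\varrho_1^2+\sigma n)/(t_{i_2}^2\delta)}$ and $\sqrt{(t_j^2\varrho_1^2+\sigma n)/(t_j^2\delta)}$, while the first hypothesis of the lemma controls the two middle (purely empirical) terms; summing these leaves $\|\mu_w^{(i_2)}-\mu_w^{(j)}\|_2\le\rho_1-2\sqrt{(t_{i_1}^2\varrho_1^2+\sigma n)/(t_{i_1}^2\delta)}\le\rho_1$, with the final slack accounting for the $2\sqrt{\cdot}$ term in the hypothesis. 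Statement \eqref{eqn:2:lemma:merging} is the mirror image: swap $i_1\leftrightarrow i_2$, route the chain through $\widehat{\mu}_w^{(i_1)},\widehat{\mu}_w^{(i_2)},\widehat{\mu}_w^{(j)}$, and invoke the second hypothesis.

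For the covariance statements \eqref{eqn:3:lemma:merging}--\eqref{eqn:4:lemma:merging} I would repeat the argument in the Frobenius norm, inserting $\widehat{\Sigma}_w^{(i_2)},\widehat{\Sigma}_w^{(i_1)},\widehat{\Sigma}_w^{(j)}$ (respectively $\widehat{\Sigma}_w^{(i_1)},\widehat{\Sigma}_w^{(i_2)},\widehat{\Sigma}_w^{(j)}$) and bounding the two endpoint terms $\|\Sigma_w^{(\cdot)}-\widehat{\Sigma}_w^{(\cdot)}\|_F$. The only extra step is that \eqref{generic_bound_covariance} controls $\Sigma_w^{(i)}-\widehat{\Sigma}_w^{(i)}$ in the Loewner order, so I would first upgrade it to a two-sided spectral bound (applying it in both directions) and then pass to $\|X\|_F\le\sqrt{n}\,\|X\|_2$; the constant $8n(n\varrho_2+2\nu\varrho_1)/\delta$ appearing in the third and fourth hypotheses is the budget that this conversion together with the two endpoint terms consumes, so the triangle inequality again closes at $\le\rho_2$.

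Finally I would collect probabilities: the events used are the mean-concentration events of \eqref{generic_bound_mean} and the two-sided covariance-concentration events of \eqref{generic_bound_covariance} for $\mathcal{S}_{i_1}$, $\mathcal{S}_{i_2}$ and the relevant neighbours $j$, each valid with probability at least $1-\delta$; a union bound over them yields the stated $1-16\delta$, the factor $16$ being the tally of these failure events across the four conclusions (two mean estimates and two Loewner directions per conclusion). I expect the main obstacle to be not any individual estimate but this bookkeeping: matching each residual term in the triangle inequalities to the correct concentration radius, carrying out the Loewner-to-Frobenius conversion for $\Sigma_w$, and verifying that the number of failure events charged is indeed covered by the factor $16$.
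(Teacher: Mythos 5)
Your proposal is correct, and its skeleton is the same as the paper's: chain the true means/covariances of the relevant regions through the empirical estimates by the triangle inequality, absorb the empirical-versus-true discrepancies with the per-region concentration bounds, use the hypotheses to control the purely empirical terms, and finish with a union bound giving $1-16\delta$. The differences are in the details and are worth noting. For the means, the paper bounds the sum $\|\mu_w^{(i_1)}-\mu_w^{(i_2)}\|_2+\|\mu_w^{(i_1)}-\mu_w^{(j)}\|_2\le\rho_1$ (using the $\mathcal{S}_{i_1}$ concentration event twice, which is why the hypothesis carries the $2\sqrt{\cdot}$ term), whereas you chain $\|\mu_w^{(i_2)}-\mu_w^{(j)}\|_2$ directly through the hats and never need the $i_1$ event; the $2\sqrt{(t_{i_1}^2\varrho_1^2+\sigma n)/(t_{i_1}^2\delta)}$ slack is then simply unused, which is harmless. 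For the covariances, the routes genuinely differ in how the per-region Frobenius radius is obtained: the paper passes through the trace bound of Lemma~\ref{lemma:varying_distribution}, writing $\|\Sigma_w^{(i)}-\widehat{\Sigma}_w^{(i)}\|_F^2=\trace\bigl((\Sigma_w^{(i)}-\widehat{\Sigma}_w^{(i)})^2\bigr)\le\bigl(\trace(\Sigma_w^{(i)}-\widehat{\Sigma}_w^{(i)})\bigr)^2$ to get radius $2n(n\varrho_2+2\nu\varrho_1)/\delta$ at probability cost $\delta$ per region, while you use the two-sided Loewner bounds (the direction $\Sigma_w^{(i)}\preceq\widehat{\Sigma}_w^{(i)}+\varepsilon I$ is \eqref{generic_bound_covariance}, the reverse comes from Lemma~\ref{lemma:varying_distribution} rather than from \eqref{generic_bound_covariance} itself, so cite that) together with $\|X\|_F\le\sqrt{n}\|X\|_2$, getting the smaller radius $\sqrt{n}(n\varrho_2+2\nu\varrho_1)/\delta$ at cost $2\delta$ per region. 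Both fit inside the $8n(n\varrho_2+2\nu\varrho_1)/\delta$ slack and the $16\delta$ budget (your tally is at most $2+2+4+4=12$ failure events), and your route has the side benefit of avoiding the inequality $\trace(X^2)\le(\trace X)^2$, which the paper applies to the possibly indefinite matrix $\Sigma_w^{(i)}-\widehat{\Sigma}_w^{(i)}$. Like the paper, you charge the neighbour events once per chain rather than once per $j\in\mathcal{N}_{i_1}\cup\mathcal{N}_{i_2}$; this bookkeeping looseness is shared with the published proof, not introduced by you.
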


\begin{proof}
Note that 
\begin{align*}
    \|\mu_w^{(i_1)}&-\mu_w^{(i_2)}\|_2+\|\mu_w^{(i_1)}-\mu_w^{(j)}\|_2\\
    \leq& \|\widehat{\mu}_w^{(i_1)}-\widehat{\mu}_w^{(i_2)}\|_2
    +\|\widehat{\mu}_w^{(i_1)}-\widehat{\mu}_w^{(j)}\|_2+2\|\mu_w^{(i_1)}-\widehat{\mu}_w^{(i_1)}\|_2\\
    & +\|\mu_w^{(i_2)}-\widehat{\mu}_w^{(i_2)}\|_2+\|\mu_w^{(j)}-\widehat{\mu}_w^{(j)}\|_2\\
    \leq & \|\widehat{\mu}_w^{(i_1)}-\widehat{\mu}_w^{(i_2)}\|_2
    +\|\widehat{\mu}_w^{(i_1)}-\widehat{\mu}_w^{(j)}\|_2\\
    &\!+\!2\sqrt{\frac{t_{i_1}^2\varrho_1^2+\sigma n}{t_{i_1}^2\delta}}
    \!+\!
    \sqrt{\frac{t_{i_2}^2\varrho_1^2+\sigma n}{t_{i_2}^2\delta}}
    \!+\!
    \sqrt{\frac{t_j^2\varrho_1^2+\sigma n}{t_j^2\delta}}
    \leq \rho_1
\end{align*}
with probability of at least $1-4\delta$. Similarly, with probability of at least $1-4\delta$, \begin{align*}
    \|\mu_w^{(i_1)}&-\mu_w^{(i_2)}\|_2+\|\mu_w^{(i_2)}-\mu_w^{(j)}\|_2\\
    \leq& \|\widehat{\mu}_w^{(i_1)}-\widehat{\mu}_w^{(i_2)}\|_2
    +\|\widehat{\mu}_w^{(i_1)}-\widehat{\mu}_w^{(j)}\|_2\\
    &
    \!+\!
    \sqrt{\frac{t_{i_1}^2\varrho_1^2+\sigma n}{t_{i_1}^2\delta}}
    \!+\!
    2\sqrt{\frac{t_{i_2}^2\varrho_1^2+\sigma n}{t_{i_2}^2\delta}}
    \!+\!
    \sqrt{\frac{t_j^2\varrho_1^2+\sigma n}{t_j^2\delta}}
    \leq \rho_1.
\end{align*}
This concludes the proof of~\eqref{eqn:1:lemma:merging} and~\eqref{eqn:2:lemma:merging}. 

Because 
$
    \|\Sigma_w^{(i)}- \widehat{\Sigma}_w^{(i)}\|_{F}^2
    =\trace((\Sigma_w^{(i)}- \widehat{\Sigma}_w^{(i)})^2)
    \leq  \trace(\Sigma_w^{(i)}- \widehat{\Sigma}_w^{(i)})^2
$, we have
\begin{align}
    \mathbb{P}
    \bigg\{\|&\Sigma_w^{(i)}\!-\! \widehat{\Sigma}_w^{(i)}\|_{F}^2\leq \bigg( \frac{2n(n\varrho_2+2\nu\varrho_1)}{\delta}\bigg)^2\bigg\}\notag\\
    &\geq 
    \mathbb{P}
    \bigg\{\trace(\Sigma_w^{(i)}\!-\! \widehat{\Sigma}_w^{(i)})^2\leq \bigg(\frac{2n(n\varrho_2+2\nu\varrho_1)}{\delta}\bigg)^2\bigg\}\notag\\
    &=
    \mathbb{P}
    \bigg\{|\trace(\Sigma_w^{(i)}\!-\! \widehat{\Sigma}_w^{(i)})|\leq \frac{2n(n\varrho_2+2\nu\varrho_1)}{\delta}\bigg\}
    \geq 1-\delta,\notag
    %\label{eqn:extra_for_proof_1}
\end{align}
where the last inequality follows from Lemma~\ref{lemma:varying_distribution}. 
We can use this inequality to establish that
\begin{align*}
  &  \|\Sigma_w^{(i_1)}-\Sigma_w^{(i_2)}\|_F+\|\Sigma_w^{(i_1)}-\Sigma_w^{(j)}\|_F\\
    &\leq \|\widehat{\Sigma}_w^{(i_1)}-\widehat{\Sigma}_w^{(i_2)}\|_F
    +\|\widehat{\Sigma}_w^{(i_1)}-\widehat{\Sigma}_w^{(j)}\|_F   +2\|\Sigma_w^{(i_1)}-\widehat{\Sigma}_w^{(i_1)}\|_F\\
    & \quad +\|\Sigma_w^{(i_2)}-\widehat{\Sigma}_w^{(i_2)}\|_F+\|\Sigma_w^{(j)}-\widehat{\Sigma}_w^{(j)}\|_F\\
    &\leq  \|\widehat{\Sigma}_w^{(i_1)}-\widehat{\Sigma}_w^{(i_2)}\|_F
    +\|\widehat{\Sigma}_w^{(i_1)}-\widehat{\Sigma}_w^{(j)}\|_F
    +\frac{8n(n\varrho_2+2\nu\varrho_1)}{\delta}
    \\ &\leq \rho_2
\end{align*}
with probability of at least $1-4\delta$. Similarly, with probability of at least $1-4\delta$, 
\begin{align*}
  &  \|\Sigma_w^{(i_1)}-\Sigma_w^{(i_2)}\|_F+\|\Sigma_w^{(i_2)}-\Sigma_w^{(j)}\|_F\\
   & \leq  \|\widehat{\Sigma}_w^{(i_1)}-\widehat{\Sigma}_w^{(i_2)}\|_F
    +\|\widehat{\Sigma}_w^{(i_2)}-\widehat{\Sigma}_w^{(j)}\|_F
    +\frac{8n(n\varrho_2+2\nu\varrho_1)}{\delta}
\\ &    \leq \rho_2.
\end{align*}
This concludes the proof.
\end{proof}
Assume that we want to merge regions $\mathcal{S}_{i_1}$ and $\mathcal{S}_{i_1}$. Then, the set of regions becomes $\{\mathcal{S}_i\}_{i\in\mathcal{I}}\cup\{\mathcal{S}_{i_1i_2}\}\setminus\{\mathcal{S}_{i_1},\mathcal{S}_{i_1}\}$ with $\mathcal{S}_{i_1i_2}=\mathcal{S}_{i_1}\cup \mathcal{S}_{i_2}$ denoting the merged region. Note that $\mathcal{N}_{ij}=\mathcal{N}_i\cup\mathcal{N}_j$. For this new region, we have
\begin{align*}
t_{i_1i_2}&\leftarrow t_{i_1}+t_{i_2},\\
\widehat{\mu}_w^{(i_1i_2)} &\leftarrow  \frac{t_{i_1}}{t_{i_1}+t_{i_2}}\widehat{\mu}_w^{(i_1)}+\frac{t_{i_2}}{t_{i_1}+t_{i_2}}\widehat{\mu}_w^{(i_2)} \\
    \widehat{\Sigma}_w^{(i_1i_2)}&\leftarrow \frac{t_{i_1}}{t_{i_1}+t_{i_2}}\widehat{\Sigma}_w^{(i_1)}+\frac{t_{i_2}}{t_{i_1}+t_{i_2}}\widehat{\Sigma}_w^{(i_2)}.
\end{align*}
Evidently, $\widehat{\mu}_w^{(i_1i_2)} $ is a convex combination of $\widehat{\mu}_w^{(i_1)} $ and $\widehat{\mu}_w^{(i_2)} $. Similarly, $\widehat{\Sigma}_w^{(i_1i_2)} $ is a convex combination of $\widehat{\Sigma}_w^{(i_1)} $ and $\widehat{\Sigma}_w^{(i_2)} $. 

\begin{proposition} \label{prop:merge}
Consider regions $\{\mathcal{S}_i\}_{i\in\mathcal{I}}$. 
Assume that 
\begin{enumerate}
    \item $\mu_w^{(i_1i_2)}=\alpha\mu_w^{(i_1)}+(1-\alpha)\mu_w^{(i_2)} $ for some $\alpha\in[0,1]$;
    \item $\Sigma_w^{(i_1i_2)} =\beta\Sigma_w^{(i_1)} +(1-\beta)\Sigma_w^{(i_2)} $ for some  $\beta\in[0,1]$;
    \item Assumptions of Lemma~\ref{lemma:merging} hold.
\end{enumerate}
Then, with probability of at least $1-16\delta$, the following statements hold:
\begin{enumerate}
    \item $\{\mathcal{S}_i\}_{i\in\mathcal{I}}\cup\{\mathcal{S}_{i_1i_2}\}\setminus\{\mathcal{S}_{i_1},\mathcal{S}_{i_1}\}$ satisfies Assumption~\ref{assum:within_region} with  $\varrho_1+\rho_1,\varrho_2+\rho_2$.
    \item $\{\mathcal{S}_i\}_{i\in\mathcal{I}}\cup\{\mathcal{S}_{i_1i_2}\}\setminus\{\mathcal{S}_{i_1},\mathcal{S}_{i_1}\}$ satisfies Assumption~\ref{assum:small_changes} with $\rho_1,\rho_2$.
\end{enumerate}
\end{proposition}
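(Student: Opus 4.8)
The plan is to treat the two claims separately, using the fact that merging $\mathcal{S}_{i_1}$ and $\mathcal{S}_{i_2}$ leaves every other region, its parameters, and its adjacencies untouched, so all assertions about the regions $\mathcal{S}_i$, $i\in\mathcal{I}\setminus\{i_1,i_2\}$, are inherited verbatim from the original Assumptions~\ref{assum:within_region} and~\ref{assum:small_changes}. Only $\mathcal{S}_{i_1i_2}$ itself, and the pairs $(\mathcal{S}_{i_1i_2},\mathcal{S}_j)$ with $j$ a neighbour of $\mathcal{S}_{i_1}$ or $\mathcal{S}_{i_2}$, need to be checked. The two tools are the convex-combination hypotheses $\mu_w^{(i_1i_2)}=\alpha\mu_w^{(i_1)}+(1-\alpha)\mu_w^{(i_2)}$ and $\Sigma_w^{(i_1i_2)}=\beta\Sigma_w^{(i_1)}+(1-\beta)\Sigma_w^{(i_2)}$ with $\alpha,\beta\in[0,1]$, and the triangle inequality for $\|\cdot\|_2$ and $\|\cdot\|_F$. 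All randomness is confined to the event of probability at least $1-16\delta$ on which the four conclusions \eqref{eqn:1:lemma:merging}--\eqref{eqn:4:lemma:merging} of Lemma~\ref{lemma:merging} hold, so the claimed probability is not further eroded.

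For the first claim, I would fix $x\in\mathcal{S}_{i_1i_2}$; since the regions are non-overlapping, either $x\in\mathcal{S}_{i_1}$ or $x\in\mathcal{S}_{i_2}$, and by symmetry take $x\in\mathcal{S}_{i_1}$. Writing $\mu_w(x)-\mu_w^{(i_1i_2)}=\alpha(\mu_w(x)-\mu_w^{(i_1)})+(1-\alpha)(\mu_w(x)-\mu_w^{(i_2)})$, the triangle inequality bounds the norm by $\alpha\varrho_1+(1-\alpha)(\varrho_1+\rho_1)\le\varrho_1+\rho_1$: the first summand uses Assumption~\ref{assum:within_region} in $\mathcal{S}_{i_1}$, and the second uses $\|\mu_w(x)-\mu_w^{(i_2)}\|_2\le\|\mu_w(x)-\mu_w^{(i_1)}\|_2+\|\mu_w^{(i_1)}-\mu_w^{(i_2)}\|_2\le\varrho_1+\rho_1$, where $\|\mu_w^{(i_1)}-\mu_w^{(i_2)}\|_2\le\rho_1$ holds by Assumption~\ref{assum:small_changes} because $\mathcal{S}_{i_1}$ and $\mathcal{S}_{i_2}$ are neighbours (a prerequisite for merging). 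The identical computation with $\beta$, $\varrho_2$, $\rho_2$, and $\|\cdot\|_F$ gives $\|\Sigma_w(x)-\Sigma_w^{(i_1i_2)}\|_F\le\varrho_2+\rho_2$. This establishes Assumption~\ref{assum:within_region} for the new family with constants $\varrho_1+\rho_1$ and $\varrho_2+\rho_2$, and uses no probability.

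For the second claim, I would enumerate the neighbouring pairs of the new family. A pair avoiding $\mathcal{S}_{i_1i_2}$ consists of two unchanged, still-neighbouring regions, so the desired bound holds by the original Assumption~\ref{assum:small_changes}. The remaining pairs are $(\mathcal{S}_{i_1i_2},\mathcal{S}_j)$ with $j$ a neighbour of the merged region, hence $j\in\mathcal{N}_{i_1}$ or $j\in\mathcal{N}_{i_2}$; take $j\in\mathcal{N}_{i_1}$ (the other case is symmetric). Then $\mu_w^{(i_1i_2)}-\mu_w^{(j)}=\alpha(\mu_w^{(i_1)}-\mu_w^{(j)})+(1-\alpha)(\mu_w^{(i_2)}-\mu_w^{(j)})$, and the triangle inequality gives a bound of $\alpha\rho_1+(1-\alpha)\rho_1=\rho_1$: the first summand uses Assumption~\ref{assum:small_changes} (as $j\in\mathcal{N}_{i_1}$), and the second uses the conclusion~\eqref{eqn:1:lemma:merging} of Lemma~\ref{lemma:merging}; if instead $j\in\mathcal{N}_{i_2}$ one invokes~\eqref{eqn:2:lemma:merging}. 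The same manipulation with $\beta$, $\|\cdot\|_F$, and~\eqref{eqn:3:lemma:merging}--\eqref{eqn:4:lemma:merging} yields $\|\Sigma_w^{(i_1i_2)}-\Sigma_w^{(j)}\|_F\le\rho_2$, so Assumption~\ref{assum:small_changes} holds for the new family with constants $\rho_1$ and $\rho_2$.

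There is no genuine analytic obstacle; the proof is a short sequence of triangle inequalities weighted by $\alpha$ or $\beta$. The main points requiring care are bookkeeping: confirming that a neighbour of $\mathcal{S}_{i_1i_2}$ is necessarily a neighbour of $\mathcal{S}_{i_1}$ or of $\mathcal{S}_{i_2}$ (so that one of \eqref{eqn:1:lemma:merging}--\eqref{eqn:4:lemma:merging} applies), verifying that merging two neighbouring regions does not disturb adjacency among the untouched regions, and keeping all randomness inside the single $1-16\delta$ event of Lemma~\ref{lemma:merging} rather than accumulating an additional union bound. The subtle but essential hypothesis is that $\mathcal{S}_{i_1}$ and $\mathcal{S}_{i_2}$ are themselves neighbours, which is what makes $\|\mu_w^{(i_1)}-\mu_w^{(i_2)}\|_2\le\rho_1$ and $\|\Sigma_w^{(i_1)}-\Sigma_w^{(i_2)}\|_F\le\rho_2$ available in the first claim.
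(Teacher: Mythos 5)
Your proposal is correct and follows essentially the same route as the paper: convexity/triangle-inequality bounds with weights $\alpha$, $\beta$ for the merged parameters, Assumption~\ref{assum:small_changes} for the pair $(\mathcal{S}_{i_1},\mathcal{S}_{i_2})$ and for $j\in\mathcal{N}_{i_1}$ (resp.\ $\mathcal{N}_{i_2}$), and the conclusions \eqref{eqn:1:lemma:merging}--\eqref{eqn:4:lemma:merging} of Lemma~\ref{lemma:merging} on the single $1-16\delta$ event. Your bookkeeping (the untouched regions, the case $x\in\mathcal{S}_{i_2}$, and the explicit requirement that $\mathcal{S}_{i_1}$ and $\mathcal{S}_{i_2}$ be neighbours) is if anything slightly more careful than the paper's writeup.
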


\begin{proof}
Because the 2-norm is convex,
\begin{align*}
   & \|\mu_w(x)-\mu_w^{(i_1i_2)}\|_2
   \\& \leq \alpha\|\mu_w(x)-\mu_w^{(i_1)}\|_2
    +(1-\alpha) \|\mu_w(x)-\mu_w^{(i_2)}\|_2\\
    &\leq  \alpha\|\mu_w(x)-\mu_w^{(i_1)}\|_2
    +(1-\alpha) \|\mu_w(x)-\mu_w^{(i_1)}\|_2
    \\ & \quad +(1-\alpha) \|\mu_w^{(i_1)}-\mu_w^{(i_2)}\|_2\\
    &\leq  \alpha \varrho_1+(1-\alpha)(\varrho_1+\rho_1)
    \leq \varrho_1+\rho_1.
\end{align*}
Therefore, Assumption~\ref{assum:within_region} for the mean holds for $\{\mathcal{S}_i\}_{i\in\mathcal{I}}\cup\{\mathcal{S}_{i_1i_2}\}\setminus\{\mathcal{S}_{i_1},\mathcal{S}_{i_1}\}$ with $\varrho_1+\rho_1$. The proof for the covariance follows the same line of reasoning. 

If $j\in\mathcal{N}_{i_1}$, Assumption~\ref{assum:small_changes} implies that $\|\mu_w^{(i_1)}-\mu_w^{(j)}\|_2 \leq \rho_1$. Furthermore, Lemma~\ref{lemma:merging} gives that  $\|\mu_w^{(i_2)}-\mu_w^{(j)}\|_2 \leq \rho_1$ with probability of at least $1-16\delta$. Similarly, if $j\in\mathcal{N}_{i_2}$, $\|\mu_w^{(i_2)}-\mu_w^{(j)}\|_2 \leq \rho_1$ and $\|\mu_w^{(i_1)}-\mu_w^{(j)}\|_2 \leq \rho_1$ with probability of at least $1-16\delta$. Because the 2-norm is convex, we have
$\|\mu_w^{(i_1i_2)}-\mu_w^{(j)}\|_2 \leq \alpha \|\mu_w^{(i_1)}-\mu_w^{(j)}\|_2+(1-\alpha)\|\mu_w^{(i_1)}-\mu_w^{(j)}\|_2 \leq \rho_1$ for all $j\in\mathcal{N}_{i_1i_2}$. Therefore, Assumption~\ref{assum:small_changes} for the mean holds for $\{\mathcal{S}_i\}_{i\in\mathcal{I}}\cup\{\mathcal{S}_{i_1i_2}\}\setminus\{\mathcal{S}_{i_1},\mathcal{S}_{i_1}\}$ with $\rho_1$. The proof for the covariance follows the same line of reasoning. 
\end{proof}

\section{Numerical Example}
\label{sec:numerical}
In this section we provide a numerical example for some of the results developed in this paper. Consider a robot in 2D space with the state $x = [ x^1\  x^2\  x^3]^{\top}\in\mathbb{R}^3$ composed of its Cartesian coordinates $x^1$ and $x^2$ coordinates and the yaw angle $x^3$. A control input $u = [u^1\ u^2]^{\top}\in\mathbb{R}^2$ composed of the forward speed $u^1\in\mathbb{R}$ and a yaw rate $u^2\in\mathbb{R}$ is applied to the robot with unicycle kinematics:
        $x_{t+1} = x_t + g(x_t)u_t + w_t$ where $g(x_t) = \Delta \begin{bmatrix} \cos(x_3) & 0\\
        \sin(x_3) & 0\\
        0 & 1\end{bmatrix}$, $\Delta$ is the time step of the discretisation and $w_t\in\mathbb{R}^3$ captures the uncertainties of the robot model and its environment. Let $\Delta= \,0.1$ and $w_t\sim\; \mathcal{N}(0,\;0.1)$. The vehicle starts at coordinates $(x^1_0=0, x^2_0=0)$ and yaw angle $x^3_0 = -2.4$ radians. A nominal proportional control action is computed such that it navigates the robot towards its destination at $(\bar{x}^1 = 75m, \bar{x}^2=85m)$: \begin{equation}\label{eq:unicyc_nominal_controller}
    \begin{split}
        &\bar{u}_{t} = \Vert \epsilon_t\Vert_2[0.1  \cos(\theta_{\epsilon} - x^3_t),\, 0.03 \sin(\theta_{\epsilon} - x^3_t)\big]^{\top},
    \end{split}
\end{equation}
where $\epsilon_t \coloneqq [\bar{x}^1 - x^1_t,\, \bar{x}^2- x^2_t]^{\top}$ and $\theta_{\epsilon} \coloneqq \arctan2(\epsilon_t)$. 
This control action is oblivious to the uncertainties $w_t$ and possible obstacles that the robot will encounter. Consider circular obstacles $\mathcal{O}_i:=\{p\in\mathbb{R}^2|\|p-c_i\|_2 \leq r_i\}$, $i\in\{1,2,\cdots 10\}$, where the center $c_i\in\mathbb{R}^2$ and the radius $r_i\in\mathbb{R}$ of each obstacle is known.         
We compute a safe control $u_t$ by modifying the nominal control $\bar{u}_t$ in order to guarantee that the robot is safe (collision free) by imposing the following constraints:
\begin{equation}\label{eq:safety_constraint}
        H_{t+1} (x_t + g(x_t)u_t) \leq h_{t+1} - e_{t+1}
\end{equation}
where $H_{t+1}^i\coloneqq [a_i^{\top},\, 0]$, $ a_i\coloneqq c_i - [x^1,\,x^2]^{\top}$, $h^i_{t+1} \coloneqq a_i^{\top} c_i-({r_i + r_o})/{\Vert a_i \Vert }$, $r_o=0.5$m, and the line $a_i^{\top}[x^1, x^2]^{\top}=h^i_{t+1}$ is perpendicular to the line connecting the robot to obstacle $\mathcal{O}_i$ and is at the offset distance $r_i+r_o$ from the center $c_i$ of obstacle $i$. Condition~\eqref{eq:safety_constraint} constrains the robot not to cross this line in the next time step. Note that $e_t$ adjusts the offset distance based on learning the uncertainty $w_t$ and is calculated according to Algorithm~\ref{alg:safe_learning_zero_mean} with $\sigma =1$ and $\delta = 0.3$. We solve the optimization problem using the open source package CVXPY~\cite{diamond2016cvxpy} in order to calculate the safe control action.\footnote{In some instances this reactive obstacle avoidance strategy can stall behind obstacles. To avoid this we utilize a heuristic modification to the optimization problem. Namely, when the equality in condition~\eqref{eq:safety_constraint} is active, a modified nominal control is assigned towards a point (on the perpendicular safety line) either to the left or to the right of the obstacle (depending on which point is less distant to the goal). 
%A better choice of nominal control can be computed e.g. from a model predictive control strategy.
} % for when $e_t$ is ignored in the constraint~\eqref{eq:safety_constraint} and for when it is used. 
Figure~\ref{fig:trajectories} shows a run of this simulation which is typical of many similar runs we have observed. As can be seen, learning the disturbance renders the solid trajectory more conservative but safer in the sense of collision with obstacles.   

\begin{figure}
    \centering
    \includegraphics[width=\columnwidth]{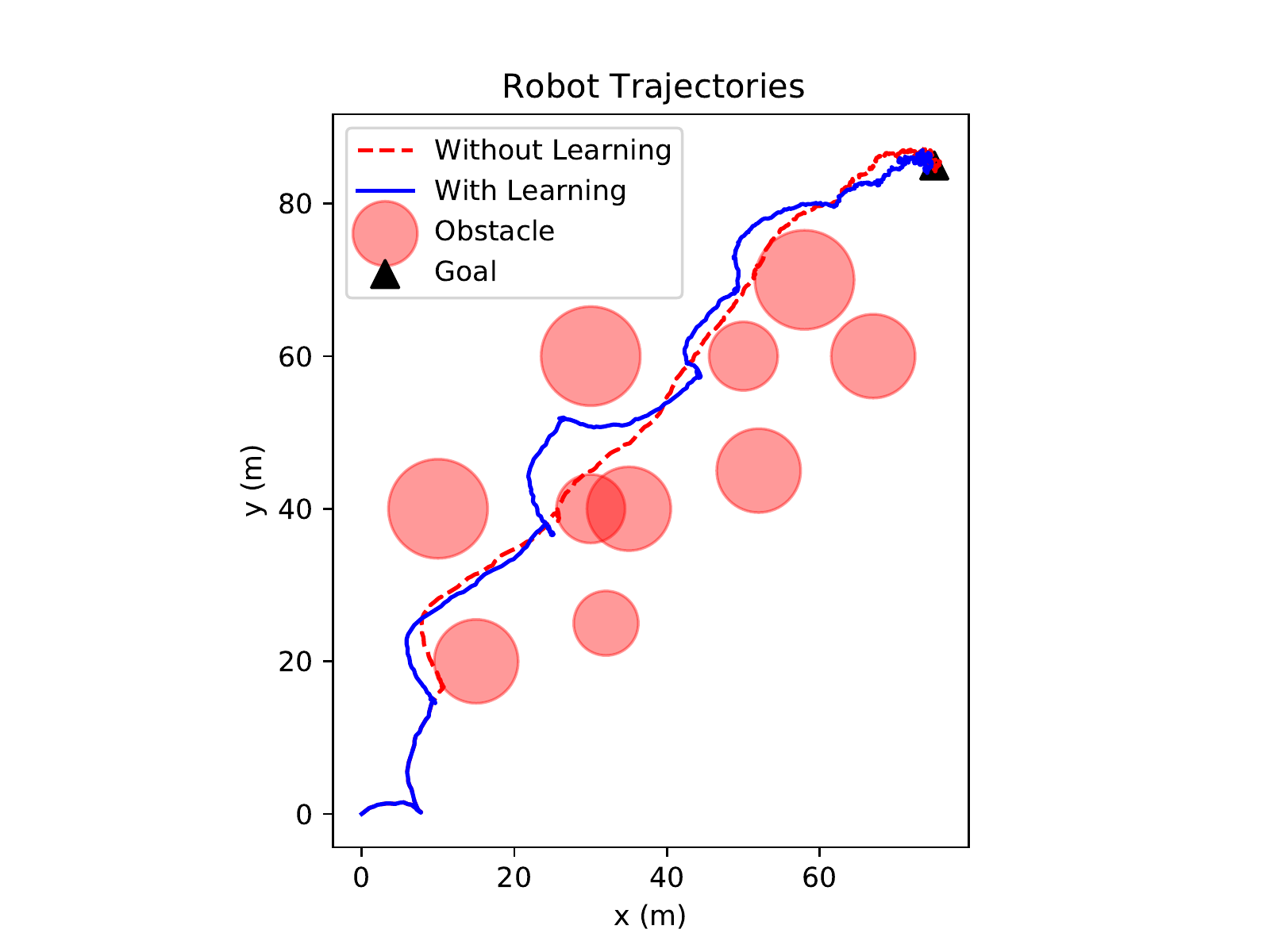}
    \caption{Trajectory of a robot navigating through obstacles with and without considering the disturbance bound.}
    \label{fig:trajectories}
\end{figure}
\section{Conclusions}
We considered the problem of safely navigating a nonlinear control-affine system subject to unknown additive uncertainties. We focused on guaranteeing safety while learning and control proceed simultaneously.  We modelled uncertainty as additive noise with unknown mean and covariance. Subsequently, we used state measurements to learn the mean and covariance of the process noise. We provided rigorous time-varying confidence intervals on the empirical mean and covariance of the uncertainty. This allowed us to employ the learned moments of the noise along with the mentioned confidence bounds to modify the control input via a robust optimization program with safety constraints encoded in it. We proved that we can guarantee that the state will remain in the safe set with an arbitrarily large probability while learning and control are carried out simultaneously. We provided a secondary formulation based on tightening the safety constraints to counter the uncertainty about the learned mean and covariance. The magnitude of the tightening can be decreased as our confidence in the learned mean and covariance increases. Finally, noting that in most realistic cases the environment and its effect on our agent changes within the space, we extended our framework to admit  uncertainties with spatially-varying mean and covariance. 

Future work can focus on a number of important avenues. First, for the non-Gaussian process noise with potentially non-zero mean, the robust formulation can be rather conservative (compared to the bounds in the Gaussian case). An important direction is to improve the confidence bound on learning the parameters of the noise in this case and thus improving the robust formulation.   Second, we consider  continuous mean and covariance noise models that are approximated with piece-wise constant functions. Another future direction is to learn more general nonlinear noise models, for instance a polynomial noise model. Also, appropriate online algorithms for partitioning the space can be developed based on the real-time observations of the realization of the noise. An important direction for future work is to consider output measurements instead of state measurements. Finally, the current results  will be validated in the future on a real robot in an unknown environment. 

%\acks{We thank a bunch of people and funding agency.}

%\bibliography{barrier_function}
\bibliography{IEEEabrv,barrier_function}
\bibliographystyle{IEEEtran} 

\begin{appendix}

\subsection{Matrix concentration inequalities}
\label{appendix:1}

\begin{lemma}[Matrix Markov's inequality]
\label{lemma:matrix_Markov_inequality}
Let $A\succ 0$, and let $X$ be a random matrix such that $X\succeq 0$ almost surely. Then, $\mathbb{P}\{X\preceq A\}\geq1- \trace(\mathbb{E}\{X\}A^{-1}).$
\end{lemma}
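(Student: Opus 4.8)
The plan is to reduce the matrix statement to a scalar Markov inequality by a congruence transformation and an eigenvalue–trace bound. First I would note that, since $A\succ 0$, the event $\{X\not\preceq A\}$ is equivalent to $\{A-X\not\succeq 0\}$, which (multiplying on both sides by $A^{-1/2}$) is equivalent to $\{I-A^{-1/2}XA^{-1/2}\not\succeq 0\}$, i.e. to $\{\lambda_{\max}(A^{-1/2}XA^{-1/2})>1\}$. This step is where the positive-definiteness of $A$ is essential: it lets us form $A^{-1/2}$ and use that congruence preserves the positive-semidefinite order.

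Next I would use the fact that $X\succeq 0$ almost surely implies $A^{-1/2}XA^{-1/2}\succeq 0$ almost surely, so its largest eigenvalue is bounded above by its trace (the trace being the sum of the nonnegative eigenvalues). Hence
\begin{align*}
\mathbb{P}\{X\not\preceq A\}
&=\mathbb{P}\{\lambda_{\max}(A^{-1/2}XA^{-1/2})>1\}\\
&\leq \mathbb{P}\{\trace(A^{-1/2}XA^{-1/2})>1\}.
\end{align*}
Since $\trace(A^{-1/2}XA^{-1/2})\geq 0$ almost surely, the ordinary (scalar) Markov inequality applies, giving
\begin{align*}
\mathbb{P}\{\trace(A^{-1/2}XA^{-1/2})>1\}
&\leq \mathbb{E}\{\trace(A^{-1/2}XA^{-1/2})\}\\
&=\trace\!\big(A^{-1/2}\,\mathbb{E}\{X\}\,A^{-1/2}\big)
=\trace(\mathbb{E}\{X\}A^{-1}),
\end{align*}
where the last equalities use linearity of expectation and trace together with the cyclic property of the trace.

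Combining these, $\mathbb{P}\{X\preceq A\}=1-\mathbb{P}\{X\not\preceq A\}\geq 1-\trace(\mathbb{E}\{X\}A^{-1})$, as claimed. The only subtlety worth spelling out carefully is the first step — the equivalence of $\{X\not\preceq A\}$ with a statement about $\lambda_{\max}$ of a congruent matrix — and the elementary but crucial inequality $\lambda_{\max}(M)\leq\trace(M)$ for $M\succeq 0$; everything after that is routine. No integrability assumption beyond $\mathbb{E}\{X\}$ being well defined is needed, since the trace in question is a nonnegative random variable and the bound is trivially true if $\trace(\mathbb{E}\{X\}A^{-1})\geq 1$.
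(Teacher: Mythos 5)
Your proof is correct, and it is essentially the standard Ahlswede--Winter-style argument: whiten by $A^{-1/2}$, note that on the event $\{X\not\preceq A\}$ one has $\lambda_{\max}(A^{-1/2}XA^{-1/2})>1$ and hence $\trace(A^{-1/2}XA^{-1/2})>1$, then finish with the scalar Markov inequality and cyclicity of the trace. The paper shares the whitening step but then handles the probabilistic step differently: it invokes a cited result of Jensen (Corollary~3.3 of that reference) to bound $\mathbb{P}\{A^{-1/2}XA^{-1/2}\succeq I\}$ by the smallest eigenvalue of the (expected) whitened matrix, and only afterwards passes from $\lambda_{\min}$ to the trace. Two differences are worth noting. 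First, your derivation is self-contained: the only ingredients are scalar Markov and the pointwise bound $\lambda_{\max}(M)\leq\trace(M)$ for $M\succeq 0$, with no external matrix inequality required. Second, you work directly with the full complement event $\{X\not\preceq A\}$, whereas the paper's displayed reduction is phrased in terms of $\{X\succeq A\}$, which is only a subset of the complement of $\{X\preceq A\}$, so the reader must rely on the cited corollary (or redo your $\lambda_{\max}$ argument) to cover the general ``not dominated'' case; your treatment is the cleaner one and is exactly what is needed where the lemma feeds into the matrix Chebyshev bound. In short, the paper's proof is shorter on the page by outsourcing the key step, while yours is more elementary, more transparent, and handles the event structure more carefully.
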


\begin{proof}
Note that $\mathbb{P}\{X\succeq A\}=\mathbb{P}\{A^{-1/2}XA^{-1/2}\succeq I\}$. We have $\mathbb{P}\{A^{-1/2}XA^{-1/2}\succeq I\}\leq \lambda_{\min}(A^{-1/2}XA^{-1/2})$~\cite[Corollary~3.3]{jensen1981markov}, where $\lambda_{\min}$ denotes the smallest eigenvalue. As a result, $\mathbb{P}\{A^{-1/2}XA^{-1/2}\succeq I\}\leq \trace(A^{-1/2}XA^{-1/2})=\trace(XA^{-1})$. 
% We prove the reverse $\mathbb{P}\{X\npreceq A\}\leq \trace(\mathbb{E}\{X\}A^{-1}).$ For this, we need to show that $\mathds{1}_{X\npreceq A}\leq \trace(A^{-1/2}X A^{1/2})$, where $\mathds{1}$ is an indicator function, i.e., $\mathds{1}_p$ is equal to one if $p$ holds and zero otherwise. Note that $\trace(A^{-1/2}X A^{1/2})\geq 0$ because $X\succeq 0$. It remains to show that $\trace(A^{-1/2}X A^{1/2})\geq 1$ if $X\npreceq A$. Assume that $X\npreceq A$. There must exist $v\neq 0$ such that $v^\top (A-X)v<0$. Define $w=A^{1/2}v$. We get $w^\top w-w^\top A^{-1/2}XA^{-1/2}w<0$. Therefore, $w^\top A^{-1/2}XA^{-1/2}w/w^\top w>1$. Note that
% $
%     \lambda_{\max}(A^{-1/2}XA^{-1/2})
%     =\max_{y\neq 0}
%     {y^\top A^{-1/2}XA^{-1/2}y}/{y^\top y}
%     \geq {w^\top A^{-1/2}XA^{-1/2}w}/{w^\top w}
%     >1,
% $
% where the equality follows from the Rayleigh–Ritz Theorem~\cite[Item~(1),\S 5.2.2]{l1996handbook}. Furthermore, since $A^{-1/2}XA^{-1/2}\succeq 0$ and thus all its eigenvalues are positive, we get
% $
%     \trace(A^{-1/2}XA^{-1/2})
%     \geq \lambda_{\max}(A^{-1/2}XA^{-1/2})>1.
% $
% Finally, note that
% $
% \mathbb{P}\{X\npreceq A\}
% =
%     \mathbb{E}\{\mathds{1}_{X\npreceq A}\} \linebreak
%     \leq \mathbb{E}\{\trace(A^{-1/2}X A^{1/2})\}
%     =\trace(\mathbb{E}\{X \}A^{-1}).
% $
\end{proof}

\begin{lemma}[Matrix Chebyshev's inequality]
\label{lemma:Chebyshev}
Let $A\succ 0$, and let $X$ be a random matrix such that $X\succeq 0$ almost surely. Then, $\mathbb{P}\{X\preceq A\}\geq1- \trace(\mathbb{E}\{X^2\}A^{-2}).$
\end{lemma}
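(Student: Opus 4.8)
The plan is to adapt the proof of the matrix Markov inequality (Lemma~\ref{lemma:matrix_Markov_inequality}), replacing the first-moment estimate by a second-moment (Chebyshev-type) one. The first step is to pre- and post-multiply by $A^{-1/2}$ to reduce to the case $A=I$: since $A\succ 0$ and congruence preserves definiteness, the event $\{X\not\preceq A\}$ equals $\{Y\not\preceq I\}$, where $Y:=A^{-1/2}XA^{-1/2}$ is symmetric (and positive semidefinite under the stated hypothesis $X\succeq 0$; in fact the argument below only uses symmetry of $X$, which is what the applications in Lemmas~\ref{lemma:alternative},~\ref{lemma:non_Gaussian}, and~\ref{lemma:non_Gaussian_non_zero_mean} actually require). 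Unwinding definiteness through the congruence, $\{X\not\preceq A\}=\{\lambda_{\max}(Y)>1\}$.

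Next I would convert this matrix event into a scalar one about $\trace(Y^2)$. On $\{\lambda_{\max}(Y)>1\}$ we have $\lambda_{\max}(Y^2)\ge\lambda_{\max}(Y)^2>1$, and since $Y^2\succeq 0$ its trace dominates its largest eigenvalue, so $\trace(Y^2)>1$ on this event. Applying the scalar Markov inequality to the nonnegative random variable $\trace(Y^2)$ then gives
\[
\mathbb{P}\{X\not\preceq A\}\;\le\;\mathbb{P}\{\trace(Y^2)>1\}\;\le\;\mathbb{E}\{\trace(Y^2)\}\;=\;\trace\!\big(\mathbb{E}\{(A^{-1/2}XA^{-1/2})^2\}\big),
\]
and since $\mathbb{P}\{X\preceq A\}=1-\mathbb{P}\{X\not\preceq A\}$, it remains only to bound the last trace by $\trace(\mathbb{E}\{X^2\}A^{-2})$.

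For that I would argue pointwise and then take expectations. Setting $N:=XA^{-1}$, cyclicity of the trace gives $\trace((A^{-1/2}XA^{-1/2})^2)=\trace(XA^{-1}XA^{-1})=\trace(N^2)$ and $\trace(X^2A^{-2})=\trace(XA^{-2}X)=\trace(NN^\top)=\|N\|_F^2$, so the desired inequality reduces to $\trace(N^2)\le\|N\|_F^2$ for an arbitrary real square matrix $N$, which follows by summing $N_{ij}N_{ji}\le\tfrac12(N_{ij}^2+N_{ji}^2)$ over all $i,j$. When $A$ is a scalar multiple of the identity — the only case invoked in the paper — $N$ is symmetric and this last step holds with equality, so no extra conservatism is introduced.

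I expect the main obstacle to be exactly this last algebraic point: recognizing that the ``whitened'' second moment $\trace((A^{-1/2}XA^{-1/2})^2)$ is dominated by $\trace(X^2A^{-2})$ even when $X$ and $A$ do not commute. The remainder is a direct transcription of the Markov-inequality proof, with $\trace(Y^2)$ playing the role that $\trace(Y)$ plays in the first-moment version.
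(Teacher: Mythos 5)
Your proof is correct, but it follows a genuinely different route from the paper's. The paper argues in two steps: first, $X^2\preceq A^2$ implies $X\preceq A$ (operator monotonicity of the square root, valid for $X\succeq 0$), so $\mathbb{P}\{X\preceq A\}\geq \mathbb{P}\{X^2\preceq A^2\}$; second, it applies the matrix Markov inequality (Lemma~\ref{lemma:matrix_Markov_inequality}) to the pair $(X^2,A^2)$ to get $\mathbb{P}\{X^2\preceq A^2\}\geq 1-\trace(\mathbb{E}\{X^2\}A^{-2})$. You instead whiten by $A^{-1/2}$, observe that on the bad event $\{X\not\preceq A\}=\{\lambda_{\max}(Y)>1\}$ one has $\trace(Y^2)>1$, apply the ordinary scalar Markov inequality to $\trace(Y^2)$, and close the gap with the elementary rearrangement $\trace\bigl((XA^{-1})^2\bigr)\leq \trace(X^2A^{-2})$. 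The paper's argument is shorter given its Lemma~\ref{lemma:matrix_Markov_inequality} and the cited Loewner-monotonicity fact; yours is self-contained (no matrix Markov, no operator monotonicity), actually establishes the slightly sharper intermediate bound $1-\trace\bigl(\mathbb{E}\{(A^{-1/2}XA^{-1/2})^2\}\bigr)$ before relaxing it, and, as you note, needs only symmetry of $X$ rather than $X\succeq 0$ almost surely. That last point is of real value here: the lemma is invoked on $X=\Sigma_w-\widehat{\Sigma}_w$ (e.g., in Lemmas~\ref{lemma:alternative}, \ref{lemma:non_Gaussian}, and~\ref{lemma:non_Gaussian_non_zero_mean}), which is not almost surely positive semidefinite, so your hypothesis matches the actual use; the paper's proof can also be repaired for symmetric $X$ via $X\preceq (X^2)^{1/2}$, but as written it relies on $X\succeq 0$.
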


\begin{proof}
First, note that 
$
    \{X\succeq 0| X^2\preceq A^2\}\subseteq \{X\succeq 0| X\preceq A\}
$
because $X^2\preceq A^2$ implies $X\preceq A$~\cite[Proposition 1.2.9]{bhatia2015positive}.
Therefore, 
$\mathbb{P}\{ X\preceq A\}\geq \mathbb{P}\{X^2\preceq A^2\}$. The rest follows from an application of Lemma \ref{lemma:matrix_Markov_inequality} on $\mathbb{P}\{X^2\preceq A^2\}$.
\end{proof}

\subsection{Proof of Lemma \ref{lemma:variance_term}}
\label{appendix:variance_proof}
We will first prove that 
$$\mathbb{E}\{(x^\top A^{-1} x)^2\} = \frac{s_1^2}{s_2^2} \frac{n(n+2)}{(\tau-n-1)(\tau-n-3)}$$
when $x\sim \mathcal{N}(0, s_1 I)$ and $A \sim \mathcal{W}_n(s_2 I, \tau)$, before treating the case for general $\Sigma$. 

Let $B := A^{-1}$, and denote the $ij$-th entry of $B$ by $b_{ij}$. We have 
$
\mathbb{E}\{(x^\top A^{-1} x)^2\} = \mathbb{E}\{(\sum_{i,j} x_i b_{ij} x_j) (\sum_{k,l} x_k b_{kl} x_l) \} 
= \sum_{i,j,k,l} \mathbb{E}\{b_{ij} b_{kl}\} \mathbb{E}\{x_i x_j x_k x_l \}.
$
Note that for all $i$, we have $\mathbb{E}\{x_i\} = 0$, $\mathbb{E}\{x_i^2\} = s_1$,  $\mathbb{E}\{x_i^3\} = 0$,  $\mathbb{E}\{x_i^4\} = 3 s_1^2$. It is then easy to see that $\mathbb{E}\{x_i x_j x_k x_l \} = 0$ unless either 1) $i=j=k=l$, 2) $i=j, k=l, i \neq k$, 3) $i=k, j=l, i \neq j$, or 4) $i=l, j=k, i \neq j$. 
Then 
$
    \sum_{i,j,k,l} \mathbb{E}\{b_{ij} b_{kl}\} \mathbb{E}\{x_i x_j x_k x_l \} 
 = \sum_i 3 s_1^2 \mathbb{E}\{ b_{ii}^2\} + \sum_{i \neq k} s_1^2 \mathbb{E}\{ b_{ii} b_{kk}\} + \sum_{i \neq j}s_1^2 \mathbb{E}\{ b_{ij} b_{ij} \}+ \sum_{i \neq j} s_1^2 \mathbb{E}\{ b_{ij} b_{ji}\}  = \sum_i 3 s_1^2 \mathbb{E}\{ b_{ii}^2\} + \sum_{i \neq k} s_1^2 \mathbb{E}\{ b_{ii} b_{kk}\} + 2 \sum_{i \neq j} s_1^2 \mathbb{E}\{ b_{ij}^2\} 
$
Using the relation $\mathbb{E}\{b_{ij} b_{kl}\} = \textnormal{cov}(b_{ij},b_{kl}) + \mathbb{E}\{b_{ij}\} \mathbb{E}\{b_{kl}\}$  and Theorem 3.2 of \cite{Haff79},  we  derive after some algebra that for $\tau > n+3$:
\begin{align*}
  \mathbb{E}\{ b_{ii}^2\}  & = \frac{1}{s_2^2 (\tau-n-1)(\tau-n-3)} \\
   \mathbb{E}\{ b_{ii} b_{kk}\} & = \frac{\tau-n-2}{s_2^2(\tau-n)(\tau-n-1)(\tau-n-3)} \textnormal{ if } i \neq k \\
   \mathbb{E}\{ b_{ij}^2\}  & = \frac{1}{s_2^2(\tau-n)(\tau-n-1)(\tau-n-3)} \textnormal{ if } i \neq j.
\end{align*}
Hence
\begin{align*}
 \mathbb{E}\{(x^\top A^{-1} x)^2\} 
 % = \sum_i 3 \sigma_1^4 \mathbb{E}\{ b_{ii}^2\} + \sum_{i \neq k} \sigma_1^4 \mathbb{E}\{ b_{ii} b_{kk}\} + 2 \sum_{i \neq j} \sigma_1^4 \mathbb{E}\{ b_{ij}^2 \} \\
  &= \frac{3 s_1^2 n}{s_2^2 (\tau-n-1)(\tau-n-3)} \\ & \quad + \frac{s_1^2 n(n-1)(\tau-n-2)}{s_2^2(\tau-n)(\tau-n-1)(\tau-n-3)} \\ & \quad + \frac{2 s_1^2 n(n-1)}{s_2^2(\tau-n)(\tau-n-1)(\tau-n-3)} \\
 &= \frac{s_1^2}{s_2^2} \frac{n(n+2)}{(\tau-n-1)(\tau-n-3)}.
\end{align*}

For the case of general $\Sigma$, note that we can write 
$
 v^\top \Sigma^{-1} v = (\Sigma^{1/2} x)^\top \widehat{\Sigma}^{-1} \Sigma^{1/2} x = x^\top (\Sigma^{-1/2}  \widehat{\Sigma} \Sigma^{-1/2} )^{-1} x. 
$
By~\cite[Theorem 3.2.5]{Muirhead_multivariate}, 
$\Sigma^{-1/2}  \widehat{\Sigma} \Sigma^{-1/2} \sim \mathcal{W}_n (\Sigma^{-1/2} s_2 \Sigma \Sigma^{-1/2}, \tau) = \mathcal{W}_n(s_2 I, \tau).$
Thus 
$\mathbb{E}\{(v^\top \widehat{\Sigma}^{-1} v)^2\} = \mathbb{E}\{(x^\top A^{-1} x)^2\}$, which concludes the proof.

\subsection{Empirical Gaussian Distribution with Samples of Different Mean and Covariance}

In this section, we study the convergence of the empirical Gaussian distribution with samples of slightly different mean and covariance. Evidently, we cannot learn the exact mean and covariance. However, we can converge to their neighborhood.

\begin{lemma} \label{lemma:varying_distribution}
Assume that $w_i$ is a Gaussian random variable with mean $\mu_i$ and covariance $\Sigma_i$ such that $\|\mu_i-\mu\|_2\leq \tilde{\rho}_1$ and $\|\Sigma_i-\Sigma\|_F\leq \tilde{\rho}_2$ for $1\leq i\leq t$. Construct the empirical mean and covariance $
    \widehat{\mu}:=\frac{1}{t}\sum_{j}w_j$, and
    $\widehat{\Sigma}:=\frac{1}{t-1}\sum_{i}\left(w_i-\widehat{\mu} \right)\left(w_i-\widehat{\mu} \right)^\top$. Then, 
\begin{align*}
    \mathbb{P}\bigg\{\| \widehat{\mu}-\mu\|_2\leq\sqrt{\frac{\tilde{\rho}_1^2}{\delta}+\frac{\sigma n}{t^2\delta}}\bigg\}&\geq 1-\delta, \\
    \mathbb{P}\bigg\{\widehat{\Sigma}\preceq \Sigma+\frac{n\tilde{\rho}_2+2\nu\tilde{\rho}_1}{\delta}I\bigg\}&\geq 1-\delta,\\
    \mathbb{P}\bigg\{|\trace(\widehat{\Sigma}-\Sigma)|\leq \frac{2n(n\tilde{\rho}_2+2\nu\tilde{\rho}_1)}{\delta}\bigg\}&\geq 1-\delta.
\end{align*}
\end{lemma}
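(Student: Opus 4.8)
The plan is to prove each of the three inequalities by computing an appropriate expectation (or trace of a second-moment matrix) and then invoking a Markov- or Chebyshev-type concentration bound, exactly mirroring the structure of Lemmas~\ref{lemma:emprical_mean}, \ref{lemma:1_nonzero_mean}, and \ref{lemma:non_Gaussian_non_zero_mean}. First, for the mean estimate, I would write $\widehat{\mu} - \mu = \frac{1}{t}\sum_i (w_i - \mu_i) + \frac{1}{t}\sum_i (\mu_i - \mu)$, so that $\|\widehat{\mu}-\mu\|_2 \leq \|\frac{1}{t}\sum_i(w_i-\mu_i)\|_2 + \frac{1}{t}\sum_i\|\mu_i-\mu\|_2 \leq \|\frac{1}{t}\sum_i(w_i-\mu_i)\|_2 + \tilde{\rho}_1$. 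Since the $w_i-\mu_i$ are independent zero-mean with covariance $\Sigma_i \preceq (\sigma + \text{something})I$ — here one uses $\Sigma_i \preceq \sigma I$ from Assumption~\ref{assumption:Sigma_w} applied to the true covariances, together with $\|\Sigma_i - \Sigma\|_F \le \tilde\rho_2$ — we get $\mathbb{E}\{\|\frac{1}{t}\sum_i(w_i-\mu_i)\|_2^2\} = \frac{1}{t^2}\sum_i \trace(\Sigma_i) \leq \frac{\sigma n}{t}$, hmm, which gives $\sigma n/t$ rather than $\sigma n/t^2$; so more care is needed and I would instead bound $\mathbb{E}\{\|\widehat\mu-\mu\|_2^2\}$ directly, absorbing the deterministic bias term $\tilde\rho_1^2$ and the stochastic term, then apply Markov's inequality and set the threshold to get $\delta$. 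The exact bookkeeping of which term yields $t^2$ versus $t$ in the denominator is the first place to be careful.

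Second, for the covariance PSD bound, I would follow Lemma~\ref{lemma:1_nonzero_mean}: decompose $\widehat{\Sigma} - \Sigma = (\widehat{\Sigma} - \bar\Sigma) + (\bar\Sigma - \Sigma)$ where $\bar\Sigma := \frac{1}{t}\sum_i \Sigma_i$ (or a similar reference), bound the deterministic part $\|\bar\Sigma - \Sigma\|_F \leq \tilde\rho_2$ hence $\bar\Sigma - \Sigma \preceq \tilde\rho_2 I$ (using that a matrix with Frobenius norm $\le \tilde\rho_2$ has spectral norm $\le \tilde\rho_2$), and control the stochastic part via the matrix Chebyshev inequality (Lemma~\ref{lemma:Chebyshev}): $\mathbb{P}\{\Sigma - \widehat\Sigma \preceq \varepsilon I\} \geq 1 - \trace(\mathbb{E}\{(\Sigma-\widehat\Sigma)^2\})/\varepsilon^2$. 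The quantity $\trace(\mathbb{E}\{(\Sigma-\widehat\Sigma)^2\})$ must be expanded using the cross terms between the slightly-differing $\mu_i, \Sigma_i$; Assumption~\ref{assumption:nu_third_moment} ($\|\mu_w\| \le \nu$) enters through terms involving $\mu_i\mu_i^\top$, producing the $2\nu\tilde\rho_1$ contribution, while the $n\tilde\rho_2$ contribution comes from the covariance perturbation. Setting $\varepsilon^2$ to match the stated bound gives the claim. Third, for $|\trace(\widehat\Sigma - \Sigma)|$, I note $|\trace(M)| \leq \sqrt{n}\,\|M\|_F$ and, more usefully, $\trace(M)^2 \le n\,\trace(M^2)$, or simply bound $|\trace(\widehat\Sigma-\Sigma)| \le n\|\widehat\Sigma - \Sigma\|_{\mathrm{op}}$ via the previous PSD bound applied in both directions $\pm$; the factor $2n$ then follows from a union bound over the two one-sided events and the scaling by $n$ from the trace.

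The main obstacle I anticipate is the expansion of $\trace(\mathbb{E}\{(\widehat\Sigma - \Sigma)^2\})$ when the samples are not identically distributed: unlike the clean Wishart computation in Lemma~\ref{lemma:1_nonzero_mean}, here each $w_i$ has its own $(\mu_i, \Sigma_i)$, so the fourth-moment sums that appeared in the proof of Lemma~\ref{lemma:non_Gaussian_non_zero_mean} acquire $i$-dependent coefficients that must be uniformly bounded using $\|\mu_i - \mu\|_2 \le \tilde\rho_1$, $\|\Sigma_i - \Sigma\|_F \le \tilde\rho_2$, $\Sigma_i \preceq \sigma I$, and $\|\mu_w\| \le \nu$. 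Carefully tracking these — and in particular isolating exactly the combination $n\tilde\rho_2 + 2\nu\tilde\rho_1$ from the leading-order perturbation while showing the remaining terms are $O(1/t)$ or smaller (and hence absorbable, or the claimed bound is simply a clean over-estimate valid for all $t$) — is the delicate part. The rest is the now-familiar pattern: expectation bound, then (matrix) Markov/Chebyshev, then choose the threshold to hit $\delta$.
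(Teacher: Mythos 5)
Your plans for the first and third inequalities do track the paper's proof: for the mean, the paper bounds $\mathbb{E}\{\|\widehat{\mu}-\mu\|_2^2\}$ by the squared bias plus $\tfrac{1}{t^2}\sum_j\trace(\Sigma_j)$ and applies scalar Markov's inequality; for the trace bound it combines the two one-sided semidefinite bounds via a union bound and uses that $-\varepsilon I\preceq \widehat{\Sigma}-\Sigma\preceq \varepsilon I$ implies $|\trace(\widehat{\Sigma}-\Sigma)|\leq \varepsilon n$, exactly as you sketch. Your observation that the stochastic term comes out as $\sigma n/t$ rather than $\sigma n/t^2$ is well taken --- the paper's own display produces $\tfrac{1}{t^2}\sum_j\trace(\Sigma_j)\leq \sigma n/t$ and then writes $\sigma n/t^2$ --- but you leave that discrepancy unresolved rather than deriving the bound in some definite form.

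The genuine gap is in the second inequality. You propose to control $\widehat{\Sigma}-\Sigma$ through the matrix Chebyshev inequality (Lemma~\ref{lemma:Chebyshev}), i.e., through $\trace(\mathbb{E}\{(\widehat{\Sigma}-\Sigma)^2\})$, with a fourth-moment expansion as in Lemma~\ref{lemma:non_Gaussian_non_zero_mean}. That route cannot produce the stated bound: the threshold $(n\tilde{\rho}_2+2\nu\tilde{\rho}_1)/\delta$ contains no variance-type quantities and scales linearly in $1/\delta$, whereas any Chebyshev-based bound necessarily carries second-moment terms (of order $\zeta/t$, or $\sigma^2 n(n+1)/(t-1)$ in the Gaussian case) that persist even when $\tilde{\rho}_1=\tilde{\rho}_2=0$ and hence are not dominated by $\bigl((n\tilde{\rho}_2+2\nu\tilde{\rho}_1)/\delta\bigr)^2$ uniformly in $t$. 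The paper's argument is instead a first-moment one: it computes the bias exactly, $\mathbb{E}\{\widehat{\Sigma}\}-\Sigma=\tfrac{1}{t}\sum_i(\Sigma_i-\Sigma)-\tfrac{1}{t(t-1)}\sum_i\sum_{\ell\neq i}\mu_i(\mu_i-\mu_\ell)^\top$, bounds its trace by $n\tilde{\rho}_2+2\nu\tilde{\rho}_1$ using $\|\Sigma_i-\Sigma\|_F\leq\tilde{\rho}_2$, $\|\mu_i\|_2\leq\nu$, and $\|\mu_i-\mu_\ell\|_2\leq 2\tilde{\rho}_1$, and then applies the matrix Markov inequality (Lemma~\ref{lemma:matrix_Markov_inequality}) with $\varepsilon=(n\tilde{\rho}_2+2\nu\tilde{\rho}_1)/\delta$. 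So the missing idea is that the claimed concentration is driven entirely by the deterministic bias of $\widehat{\Sigma}$, handled with a first-order Markov-type bound, not by its variance; your proposed expansion would be both far more laborious and unable to reach the inequality as stated.
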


\begin{proof}
First, note that
\begin{align*}
\mathbb{P}\{\| \widehat{\mu}-\mu\|_2^2\leq \varepsilon^2\}
&\geq 1-\frac{\mathbb{E}\{\| \widehat{\mu}-\mu\|_2^2\}}{\varepsilon^2}\\
&\geq 1-\frac{\tilde{\rho}_1^2}{\varepsilon^2}-\frac{\sigma n}{t^2\varepsilon^2},
\end{align*}
where the second inequality follows from
\begin{align*}
    \mathbb{E}\{\| \widehat{\mu}&-\mu\|_2^2\}\\
    =&
    \mathbb{E}\bigg\{\bigg(\frac{1}{t}\sum_{j}w_j-\mu\bigg)^\top \bigg(\frac{1}{t}\sum_{j}w_j-\mu\bigg)\bigg\}\\
    =&
    \frac{1}{t^2}\sum_{j,i}\trace(\mathbb{E}\{w_jw_i^\top \})
    -\frac{2}{t}\sum_{j} \mu^\top \mathbb{E}\{w_j\}
    +\mu^\top \mu\\
    =&\frac{1}{t^2}\sum_{j,i} \mu_j\mu_i^\top-\frac{2}{t}\sum_{j} \mu^\top \mu_j
    +\mu^\top \mu+\frac{1}{t^2}\sum_j\trace(\Sigma_j)\\
    =&\left\|\frac{1}{t}\sum_{j}\mu_j-\mu \right\|_2^2+\frac{1}{t^2}\sum_j\trace(\Sigma_j)\\
    \leq & \tilde{\rho}_1^2+\frac{\sigma n}{t^2}.
\end{align*}
For the empirical variance, we have
\begin{align*}
\widehat{\Sigma}
=&\frac{1}{t-1}\sum_{i}\left(w_i-\frac{1}{t}\sum_{j}w_j \right)\left(w_i-\frac{1}{t}\sum_{j}w_j \right)^\top \\
=&\frac{1}{t-1}
\sum_{i}\Bigg[w_iw_i^\top -\frac{2}{t}\sum_{j}w_i w_j^\top 
+\frac{1}{t^2}\sum_{j,k}w_k w_j^\top 
\Bigg]\\
=&\frac{1}{t-1}
\sum_{i}w_iw_i^\top -\frac{1}{t(t-1)}
\sum_{i,j}w_i w_j^\top,
\end{align*}
and, as a result,
\begin{align*}
    \mathbb{E}\{\widehat{\Sigma}-&\Sigma\}\\
    =&\mathbb{E}\Bigg\{\frac{1}{t-1}
\sum_{i}w_iw_i^\top 
-\frac{1}{t(t-1)}\sum_{k,\ell}w_kw_\ell^\top \Bigg\}-\Sigma
\\
=&\frac{1}{t}\sum_{i} (\Sigma_i-\Sigma)+\frac{1}{t}\sum_{i}\mu_i\mu_i^\top -\frac{1}{t(t-1)}\sum_{k\neq \ell}\mu_k\mu_\ell^\top\\
=&\frac{1}{t}\sum_{i} (\Sigma_i-\Sigma)
-\frac{1}{t(t-1)}\sum_{i}\sum_{\ell\neq i}\mu_i(\mu_i-\mu_\ell)^\top.
\end{align*}
% Note that
% \begin{align*}
% \sum_{i}\mu_i\mu_i^\top -\frac{1}{t-1}\sum_{k\neq \ell}\mu_k\mu_\ell^\top
% =&\sum_{i}\mu_i\bigg(\mu_i-\frac{1}{t-1}\sum_{\ell\neq i}\mu_\ell\bigg)^\top\\
% =&\frac{1}{t-1}\sum_{i}\sum_{\ell\neq i}\mu_i(\mu_i-\mu_\ell)^\top
% \end{align*}
Using Lemma~\ref{lemma:matrix_Markov_inequality}, we get
\begin{align*}
   & \mathbb{P}\{\widehat{\Sigma}-\Sigma\preceq \varepsilon I\}
    \\&\geq 1-\frac{\trace(\mathbb{E}\{\widehat{\Sigma}-\Sigma\})}{\varepsilon}\\
    &\geq 1-\sum_{i}\frac{\trace(\Sigma_i-\Sigma)}{t\varepsilon}
    -\sum_{\ell\neq k}\frac{\mu_i(\mu_i-\mu_\ell)^\top}{t(t-1)\varepsilon}\\
    &\geq 1-\frac{n\tilde{\rho}_2+2\nu\tilde{\rho}_1}{\varepsilon},
\end{align*}
where the third inequality follows from
$
    \trace(\Sigma_i-\Sigma)
    \leq |\trace(\Sigma_i-\Sigma)|
    \leq \|\Sigma_i-\Sigma\|_F\|I\|_F
    \leq n\tilde{\rho}_2
$
and
$
    \mu_i(\mu_i-\mu_\ell)^\top
    \leq  |\mu_i(\mu_i-\mu_\ell)^\top|
    \leq \|\mu_i\|_2\|\mu_i-\mu_\ell\|_2
    \leq \|\mu_i\|_2(\|\mu_i-\mu\|_2+\|\mu_i-\mu\|_2)
    \leq 2\nu\tilde{\rho}_1.
$
Note that, similarly, we can show that 
\begin{align*}
    \mathbb{P}\{\Sigma-\widehat{\Sigma}\preceq \varepsilon I\}
    \geq 1-&\frac{\trace(\mathbb{E}\{\Sigma-\widehat{\Sigma}\})}{\varepsilon}
    \geq 1-\frac{n\tilde{\rho}_2+2\nu\tilde{\rho}_1}{\varepsilon}.
\end{align*}
Therefore, 
$
    \mathbb{P}\{\Sigma-\varepsilon I\preceq \widehat{\Sigma}\preceq \Sigma+\varepsilon I\}\geq 
    1-{2(n\tilde{\rho}_2+2\nu\tilde{\rho}_1)}/{\varepsilon}.
$
If $\Sigma-\varepsilon I\preceq \widehat{\Sigma}\preceq \Sigma+\varepsilon I$, $|\trace(\widehat{\Sigma}-\Sigma)|\leq \varepsilon n$, which implies that 
$
        \mathbb{P}\{|\trace(\widehat{\Sigma}-\Sigma)|\leq \varepsilon n\}\geq 
        \mathbb{P}\{\Sigma-\varepsilon I\preceq \widehat{\Sigma}\preceq \Sigma+\varepsilon I\}.$ %\geq 
    %1-&\frac{2(n\tilde{\rho}_2+2\nu\tilde{\rho}_1)}{\varepsilon}
Hence, we have
\begin{align*}
    \mathbb{P}\{|\trace(\widehat{\Sigma}-\Sigma)|\leq \varepsilon n\}\geq 1-&\frac{2(n\tilde{\rho}_2+2\nu\tilde{\rho}_1)}{\varepsilon}.
\end{align*}
This concludes the proof.
\end{proof}

\end{appendix}

\end{document}